\pdfoutput=1
\documentclass[journal]{IEEEtran}

\usepackage[T1]{fontenc}
\usepackage[utf8]{inputenc}
\usepackage{amsmath,amssymb,amsthm}
\usepackage{microtype}
\usepackage{cite}
\usepackage{url}

\usepackage{booktabs}
\usepackage{multirow}
\usepackage{tabularx}
\usepackage{colortbl}
\usepackage{xcolor}

\newlength{\tabnat}
\newcommand{\fullwidthtab}[2]{%
  \setlength{\tabcolsep}{0pt}\sbox0{#2}\setlength{\tabnat}{\wd0}%
  \setlength{\tabcolsep}{\dimexpr(\columnwidth-\tabnat)/\numexpr 2*#1\relax\relax}%
  \typeout{FULLWIDTHTAB natural=\the\tabnat\space sep=\the\tabcolsep}%
  #2}

\definecolor{dkgreen}{HTML}{1B6B2A}
\definecolor{dkred}{HTML}{A11212}
\definecolor{grayblue}{rgb}{0.13,0.22,0.42}
\newcommand{\cgood}[1]{\textcolor{dkgreen}{$\mathbf{#1}$}}
\newcommand{\cbad}[1]{\textcolor{dkred}{$#1$}}

\definecolor{dkamber}{HTML}{9A6B00}
\definecolor{robotblue}{HTML}{2B6E8C}   %
\definecolor{robotred}{HTML}{B4502C}
\newcommand{\cmid}[1]{\textcolor{dkamber}{$#1$}}
\newcommand{\cpart}[1]{\textcolor{dkamber}{\scriptsize(#1)}}

\usepackage{graphicx}
\usepackage{algorithm}
\usepackage[indLines=true]{algpseudocodex}
\definecolor{kwfor}{HTML}{1F77B4}
\definecolor{kwif}{HTML}{E8710A}
\definecolor{kwrep}{HTML}{8E44AD}
\definecolor{kwret}{HTML}{D62728}
\definecolor{kwend}{HTML}{7F7F7F}
\algrenewcommand\algorithmicforall{{\color{kwfor}\textbf{for all}}}
\algrenewcommand\algorithmicfor{{\color{kwfor}\textbf{for}}}
\algrenewcommand\algorithmicwhile{{\color{kwrep}\textbf{while}}}
\algrenewcommand\algorithmicif{{\color{kwif}\textbf{if}}}
\algrenewcommand\algorithmicthen{}
\algrenewcommand\algorithmicelse{{\color{kwif}\textbf{else}}}
\algrenewcommand\algorithmicend{{\color{kwend}\textbf{end}}}
\algrenewcommand\algorithmicrepeat{{\color{kwrep}\textbf{repeat}}}
\algrenewcommand\algorithmicuntil{{\color{kwrep}\textbf{until}}}
\algrenewcommand\algorithmicreturn{{\color{kwret}\textbf{return}}}
\makeatletter
\patchcmd{\algpx@drawIndentLine}
  {\draw[algpxIndentLine]}
  {\draw[algpxIndentLine,algpxIndentLine#1/.try]}
  {\typeout{ALGPX-PATCH-OK}}{\typeout{ALGPX-PATCH-FAILED}}
\makeatother
\tikzset{
  algpxIndentLine0/.style={draw=kwfor},
  algpxIndentLine1/.style={draw=kwrep},
  algpxIndentLine2/.style={draw=kwif},
}
\usepackage{placeins}

\usepackage[capitalize,nameinlink]{cleveref}

\theoremstyle{definition}
\newtheorem{definition}{Definition}

\newtheorem{remark}{Remark}
\newtheorem{example}{Example}
\theoremstyle{plain}
\newtheorem{lemma}{Lemma}
\newtheorem{theorem}{Theorem}
\newtheorem{proposition}{Proposition}

\newcommand{\X}{\mathcal{X}}
\newcommand{\A}{\mathcal{A}}
\newcommand{\R}{\mathbb{R}}
\newcommand{\Z}{\mathbb{Z}}
\newcommand{\tra}{\pi}
\newcommand{\cl}{\mathcal{T}}

\newcommand{\res}{\mathcal{R}}
\newcommand{\preds}{\mathcal{G}}

\newcommand{\occ}{\operatorname{occ}}
\newcommand{\cells}{\mathcal{K}}
\newcommand{\roll}{\operatorname{roll}}

\newcommand{\jet}{\operatorname{jet}}
\newenvironment{tracelist}{\begin{list}{}{%
  \setlength{\leftmargin}{0pt}%
  \setlength{\itemindent}{0pt}%
  \setlength{\labelwidth}{0pt}%
  \setlength{\labelsep}{0.4em}%
  \setlength{\topsep}{2pt}%
  \setlength{\itemsep}{2pt}%
  \setlength{\parsep}{0pt}%
  }}%
  {\end{list}}

\title{Dispersive Tree Agreements for Multi-Robot Coordination}

\author{Shashank A. Deshpande, Blake T. Edwards, and Jonathan P. How%
\thanks{The authors are with the Department of Aeronautics and Astronautics,
Massachusetts Institute of Technology, Cambridge, MA 02139, USA
(e-mail: croshank@mit.edu; blakete@mit.edu; jhow@mit.edu).}}

\begin{document}
\maketitle

\begin{abstract}
Multi-robot coordination of nonlinear systems is naturally
formulated as joint-space optimal control subject to joint constraints. Existing methods that approach joint
optimality remain largely centralized and offline, while decentralized, real-time
counterparts target safe navigation without pursuing the joint
optimum. This paper develops a decentralized, real-time planner
that approaches the joint optimum as an anytime process. We extend
a recent single-robot framework that searches dispersive trees of
forward-propagated commands to
coordinate a multi-robot team near-optimally. We first show that the inter-robot coupling can be effectively
replaced by some local constraints, under which locally grown
dispersive trees contain a near-optimal agreement. It is this
structure that the robots pursue by a negotiation driven by decentralized column
generation. In the quiescent limit, this negotiation is shown to settle on a
near-optimal agreement. Our real-time protocol, WarpWeaver, runs the
negotiation asynchronously over a receding horizon and closes on quiesced
agreements even when deadlines cut it short. In
simulation, WarpWeaver reduces total travel time by up to $22\%$
against centralized offline planners on the Dynobench suite and by up to $16\%$ against a
decentralized real-time baseline, and outperforms a task-specific planner on networked navigation. Hardware
experiments on six robots planning onboard demonstrate real-world
implementability.
\end{abstract}

\begin{IEEEkeywords}
Multi-robot systems, path planning for multiple mobile robots or
agents, distributed robot systems, optimization and optimal control,
planning, scheduling and coordination.
\end{IEEEkeywords}

\section{Introduction}
\label{sec:intro}

This paper addresses the coordination of a team of robots with
differentially flat nonlinear dynamics, formulated as optimal control in the joint
space of the team subject to constraints over groups of robots.
We seek control inputs
whose cost lies near the joint optimum, via
distributed computation across the robots in real time, to
maintain robustness to changes in the individual objectives of
the robots as well as in the dynamics of the environment. Existing approaches to this problem
divide broadly into centralized, offline planners with
guarantees of asymptotic optimality, and real-time decentralized
planners that work with simplified dynamics, without pursuing
the joint optimum.

A range of centralized planners derive from multi-agent path
finding \cite{sharon2015cbs} and handle pairwise conflicts in a
high-level search on top of low-level single-robot planners
running model predictive control \cite{tajbakhsh2024mpccbs},
sampling-based planning \cite{kottinger2022kcbs}, or search over
graphs of discontinuity-bounded motion primitives
\cite{moldagalieva2024dbcbs,moldagalieva2025dbecbs,moldagalieva2025dblacam}.
Sampling-based planners search the composite configuration
space of the team \cite{cap2013marrtstar,shome2020drrtstar}. These planners provide
asymptotic guarantees of completeness or optimality, and focus
on motion planning with pairwise separation.

Decentralized planning, on the other hand, comes with real-time
advantages such as robustness to changes in the individual
objectives or in the environment. Decentralized trajectory
optimization involves
high-frequency replanning of safe splines around the incumbents
communicated by peer robots, while often simplifying the
robots' dynamics \cite{zhou2021egoswarm,tordesillas2022mader,kondo2023rmader,
luis2019dmpc,luis2020online,park2022lsc}. Consensus-based
distributed optimization instead lets the robots negotiate, with
gradient-based local optimization driven by penalties on
mutual disagreement \cite{ferranti2022distributed,saravanos2023distributed}.
Coordination across these methods, however, often settles in
local optima,
without pursuing the joint optimum.

\smallskip
In summary, real-time multi-robot coordination deals with multiple
sources of complexity. First, planning and control at the level
of a single robot must respect the nonlinear dynamics of its
control system. In addition, these
control trajectories must coordinate to satisfy various
constraints over groups of robots while evading the many local
minima of the joint problem.
Finally, the algorithm that addresses these challenges must
decentralize across the robots under limited communication and
maintain robustness to communication delays within the team.
While existing approaches handle this complexity in part, this
paper develops a decentralized framework that tackles each of its
sources in unison.

\begin{figure}[t]
\centering
\includegraphics[width=\columnwidth]{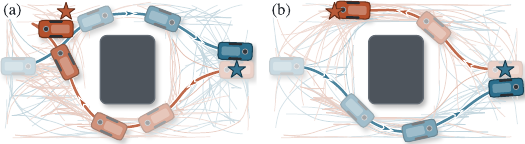}
\caption{(a) Iterated Best Response,
$F = \textcolor{robotblue}{7.5} \;{+}\; \textcolor{robotred}{10.0} = 17.5$\,s,
vs.\ (b) Dispersive Tree Agreement,
$F = \textcolor{robotblue}{8.0} \;{+}\; \textcolor{robotred}{5.5} = 13.5$\,s,
where the total travel time $F$ sums the arrival times of the blue
and red robots.}
\label{fig:teaser}
\end{figure}

\smallskip
We employ dispersive forward tree search \cite{deshpande2026DFT},
which finds near-optimal control inputs for differentially flat
systems, for single-robot planning and control. To solve coordination, we first show that the coupling among the
robots can be absorbed into suitable constraints on each robot
alone, so that tractably sized dispersive trees grown locally by the
robots jointly contain a near-optimal agreement. It is this structure that the
remainder of our development pursues. \Cref{fig:teaser} contrasts such an agreement with
iterated best response~\cite{zhou2021egoswarm,tordesillas2022mader,kondo2023rmader},
which can settle on a sub-optimal equilibrium. We formulate the selection of this
joint agreement as an integer linear program and approach its
solution by column generation
\cite{dantzig1960decomposition,lubbecke2005column}, in which the
dual of its linear relaxation sets rents on the shared space.
Under these rents, the search for a new column becomes a
single-robot problem that each robot solves near-optimally with
its own dispersive forward tree.
Notably, this column generation procedure is effectively
decentralized, to distribute the computational load of the
process across the robots while limiting communication within
the team to a manageable amount, as the robots exchange only the command
sequences they generate and their endorsements, never their trees. We show that the resulting process implements an
anytime negotiation whose agreement is certified near-optimal
once it reaches quiescence, up to the resolution of the search and a gap that the agents compute online. This development culminates in
WarpWeaver, the real-time multi-robot planner that we propose,
which runs the negotiation asynchronously across the team over a
receding horizon and remains robust to bounded communication
delays. Our experiments thereby demonstrate that this negotiation,
driven by decentralized column generation, is a pattern well
suited to real-time multi-robot coordination, as the
agreements that it commits within real-time deadlines come close in
cost to those that it reaches when given time to quiesce.

\smallskip
In summary, this paper makes the following contributions.

\noindent $\bullet$~\textbf{Joint optimal control as distributed
dispersive forward search.} We cast a broad class of multi-robot
coordination problems as joint-space optimal control of
differentially flat nonlinear systems, and prove that suitably constrained, locally grown
dispersive forward trees contain a near-optimal agreement
(\cref{prop:jcs}).

\noindent $\bullet$~\textbf{Decentralized column generation.} We
pursue this structure by decentralized column generation, in which rents on the shared space guide the
tree search of each robot, and prove that the negotiated agreement
is certified near-optimal at quiescence, relative to the joint
solutions robust at the search resolution
(\cref{prop:priced,thm:certificate}).

\noindent $\bullet$~\textbf{Real-time protocol.} We develop
WarpWeaver, an asynchronous protocol that runs decentralized
column generation over a receding horizon onboard each robot, to
coordinate the team in real time under bounded communication
delays, whose agreements within real-time deadlines come
close in cost to quiesced ones.

\noindent $\bullet$~\textbf{Evaluation and hardware
experiments.} We benchmark WarpWeaver in simulation against
centralized offline, decentralized real-time, and task-specific
planners. It reduces total travel time by up to $22\%$ against
centralized offline planners on Dynobench and by up to $16\%$ against
a decentralized real-time baseline. Further, on a graph connectivity
task, it nearly doubles the success rate of a task-specific planner. We further deploy it on a team of six robots, each planning
onboard in real time.

\smallskip
The remainder of this paper is organized as follows.
\Cref{sec:relwork} reviews related work. \Cref{sec:problem}
formulates the joint optimal control problem and its flat
reformulation, and \cref{sec:trees} reviews dispersive forward
tree search. \Cref{sec:agreement} develops the decentralized
column generation scheme, its near-optimality certificate, and the
WarpWeaver protocol. \Cref{sec:sim} presents the simulation
experiments and benchmarks, and \cref{sec:hardware} the real-robot
experiments. We conclude in \cref{sec:conclusion}. The proofs are
deferred to \cref{app:proofs}.

\section{Related Work}
\label{sec:relwork}

In this section, we present a brief review of existing literature
on multi-robot planning and control, along with related techniques
relevant to our work.

\subsection{Sampling and Search for Multi-Robot Motion Planning}
\label{sec:relwork-search}

Sampling-based planning extends to multi-robot motion planning in
the composite configuration space of the
team~\cite{cap2013marrtstar,shome2020drrtstar}. For geometric paths,
search over the tensor product of individual roadmaps has been
shown to be asymptotically optimal~\cite{shome2020drrtstar}, and near-optimal at
a finite sample size~\cite{dayan2023nearoptimal}. While it is
unclear how this decomposition carries over to robots with
continuous dynamics, kinodynamic tree search in the composite state
space has been shown to be probabilistically
complete~\cite{le2018cooperative}.
Decentralized implementations of composite search have also been
demonstrated, with robots replanning in turn by merit-based token
passing~\cite{desaraju2012dmarrt} or exchanging probability
distributions over their plans~\cite{best2019decmcts}.

Kinodynamic extensions of multi-agent path finding on
graphs~\cite{sharon2015cbs} retain its two-level structure, a
high-level search that coordinates the team on top of low-level
local planners. The low-level graph search is then replaced by a range of methods spanning sampling-based tree
search~\cite{kottinger2022kcbs}, model predictive
control~\cite{tajbakhsh2024mpccbs}, and search over graphs of motion
primitives joined up to a bounded
discontinuity~\cite{moldagalieva2024dbcbs,moldagalieva2025dbecbs,moldagalieva2025dblacam}.
These implementations are predominantly centralized and offline,
and guarantee solution properties ranging from probabilistic or
resolution completeness to asymptotic optimality.

\subsection{Trajectory Optimization for Multi-Robot Motion Planning}
\label{sec:relwork-opt}

Centralized trajectory optimization coordinates the team within a
single program~\cite{mellinger2012miqp,augugliaro2012scp}, whose size grows with the number of robots.
Scalable variants involve a discrete coordination layer, such as
priority-based search over mixed-integer programs for tracked
reference paths~\cite{chen2021s2m2} and multi-agent path finding
over B\'ezier curves~\cite{yan2024bezier}.
Decentralized trajectory optimization segregates the program across
the robots, each of which replans a trajectory around the incumbent
commitments of its peers. These commitments enter as collision
penalties in gradient-based optimization~\cite{zhou2021egoswarm},
or as separating constraints robust to imperfect
communication~\cite{tordesillas2022mader,kondo2023rmader}.
Distributed model predictive control~\cite{luis2019dmpc,luis2020online}
follows a similar pattern over short horizons, with linear safe
corridors~\cite{park2022lsc} preserving feasibility. These planners run in real time at scale, typically on
simplified models of the dynamics.

Instead of iterative replanning around incumbent commitments,
consensus-based distributed optimization lets the robots
negotiate. Each robot keeps local copies of the shared variables
and optimizes them under its nonlinear dynamics.
Penalties on mutual disagreement, updated by the alternating
direction method of multipliers, drive the team toward agreement
on these variables~\cite{ferranti2022distributed,saravanos2023distributed}.

\subsection{Coordination Beyond Inter-Robot Separation}
\label{sec:relwork-tasks}

Several tasks of utility constrain groups of robots beyond pairwise
separation, such as maintaining a connected communication
network~\cite{zavlanos2011graph} or keeping robots tethered to
each other. Existing work typically addresses these
tasks by deploying dedicated mechanisms. For instance, connectivity
maintenance is commonly enforced
by potential functions that keep the algebraic connectivity of the
communication graph positive~\cite{zavlanos2011graph}, a design
that Realm~\cite{bai2025realm,bai2026polygonal} extends to
line-of-sight constraints derived in real time from onboard point
clouds. Our method is able to address
coordination beyond collision avoidance by specification of hard
constraints on the joint state, as we demonstrate by
implementing connectivity maintenance in simulation and tethering
in hardware experiments.

\subsection{Column Generation and Decentralized Pricing}
\label{sec:relwork-cg}

Column generation is a canonical technique for solving large linear
programs by generating their variables on
demand~\cite{dantzig1960decomposition,lubbecke2005column}. Dual
prices from a restricted master program guide the search for new
columns, with a Lagrangian bound on the gap~\cite{farley1990note}
and branching for integrality~\cite{barnhart1998price}.
This technique has been applied to
multi-agent path finding on graphs by
branch-and-cut-and-price~\cite{lam2022bcp}, and to multi-robot
routing over space-time positions with pricing by
resource-constrained shortest paths~\cite{haghani2020integer}.
Column generation has been shown to distribute effectively across
robots, with local pricing
and an exchange of columns among neighbors, for task
assignment~\cite{testa2022generalized}.

We formulate joint optimal control as a search over the dispersive
forward trees of the robots~\cite{deshpande2026DFT} and distribute
this search across the team via column generation, running
asynchronously in real time with a certificate of near-optimality
at quiescence, relative to the joint solutions that are robust at the search resolution.

\section{Problem Formulation}
\label{sec:problem}

Consider a team of $N$ agents, agent $i$ governed by the
nonlinear control system
\begin{equation}
\label{eq:sys}
  \dot{x}^{i}(t) = f^{i}\bigl(x^{i}(t), u^{i}(t)\bigr),
\end{equation}
with state $x^{i} \in \X_{i} \subset \R^{n_{i}}$ and control
input $u^{i} \in U_{i} \subset \R^{m_{i}}$. We address the
coordination of the team over a horizon $T$, formalized as
the joint optimal control problem (JOCP)
\begin{equation}
\label{eq:jocp}
\begin{aligned}
  \min_{u^{1:N}(\cdot)} \quad
  & \Phi\bigl( x^{1}(\cdot),\, \dots,\, x^{N}(\cdot) \bigr)
    \\[0.2em]
  \text{s.t.} \quad
  & \dot{x}^{i}(t) = f^{i}\bigl(x^{i}(t), u^{i}(t)\bigr),
    \qquad t \in [0, T], \\[0.2em]
  & g\bigl( x^{i}(\cdot) \bigr)
    \ge 0 \qquad \forall\, g \in \preds_{i}, \\[0.2em]
  & g\bigl( ( x^{j}(\cdot) )_{j \in e} \bigr)
    \ge 0 \qquad \forall\, g \in \preds_{e},
    \ \forall\, e, \\[0.2em]
  & x^{i}(0) = x^{i}_{\mathrm{init}},
    \qquad i = 1, \dots, N ,
\end{aligned}
\end{equation}
in which $\Phi$ is the joint cost, and the trajectories
are constrained by \emph{individual} predicates $\preds_{i}$
and \emph{joint} predicates $\preds_{e}$ over groups
$e \subset \{ 1, \dots, N \}$ of agents. Without loss of
generality, let $T \in \mathbb{N}$, dividing the horizon
into the unit time windows $[k, k+1]$, $0 \le k < T$, and
let $d( \cdot, \cdot )$ denote the supremum metric on
trajectories over a window. We make the following
assumptions on the structure of the coordination problem.

\smallskip
\noindent (i) \textbf{Cost function:} The joint cost
is a monotone, convex, piecewise-linear aggregate of private
agent-wise costs $J_{i} : C([0, T]; \X_{i}) \to \R$, i.e.,
for $\alpha_{\ell i} \ge 0$,
\begin{equation}
\label{eq:aggregator}
  \Phi\bigl( x^{1}(\cdot), \dots, x^{N}(\cdot) \bigr)
  = \max_{1 \le \ell \le L}\
  \textstyle\sum_{i} \alpha_{\ell i}\, J_{i}\bigl( x^{i}(\cdot)
  \bigr) .
\end{equation}
Each agent-wise cost is in turn additive over the windows,
with $L_{\varrho}$-Lipschitz stage costs
$\varrho^{i}_{k} : C([k, k+1]; \X_{i}) \to \R$,
\begin{equation}
\label{eq:winagg}
\begin{gathered}
  J_{i}\bigl( x^{i}(\cdot) \bigr)
  = \sum_{k=0}^{T-1}
  \varrho^{i}_{k}\bigl( x^{i}|_{[k,\, k+1]} \bigr) , \\
  \bigl| \varrho^{i}_{k}( w ) - \varrho^{i}_{k}( w' ) \bigr|
  \;\le\; L_{\varrho}\, d( w, w' ) ,
\end{gathered}
\end{equation}
for all window trajectories $w, w'$ of agent $i$. The
Lipschitz constant of the cost $J_{i}$ over the horizon is
then $L_{J} := T\, L_{\varrho}$.

\smallskip
\noindent (ii) \textbf{Individual predicates:} The
predicates $g \in \preds_{i}$ decompose windowwise into
Lipschitz predicates $g_{k} : C([k, k+1]; \X_{i}) \to \R$
on the window trajectories, of which they are the minimum,
\begin{equation}
\label{eq:winsep}
\begin{gathered}
  g\bigl( x^{i}(\cdot) \bigr)
  = \min_{0 \le k < T}
  g_{k}\bigl( x^{i}|_{[k,\, k+1]} \bigr) , \\
  \bigl| g_{k}( w ) - g_{k}( w' ) \bigr|
  \le L_{\preds}\, d( w, w' ) .
\end{gathered}
\end{equation}

\smallskip
\noindent (iii) \textbf{Joint predicates:} Each
$g \in \preds_{e}$ is the running minimum over time of a
function of the states of the group,
\begin{equation}
\label{eq:instant}
  g\bigl( ( x^{j}(\cdot) )_{j \in e} \bigr)
  = \min_{t \in [0, T]}
  h_{t}\bigl( ( x^{j}(t) )_{j \in e} \bigr) ,
\end{equation}
in which $h_{t} : \prod_{j \in e} \X_{j} \to \R$ is
$L_{\preds}$-Lipschitz, uniformly in $t$, with respect to
the maximum of the members' distances.

\smallskip
While the supposed structure of the JOCP is rather abstract,
it allows our treatment to admit coordination problems of
generic interest, as the examples of the following section
illustrate.

\subsection{Examples}
\label{sec:examples}

\begin{example}[Multi-robot kinodynamic motion planning]
\label{ex:kino}
Given, for each agent $i$, a time-varying free space
$\X^{i}_{\mathrm{free}}(t) \subseteq \X_{i}$, a goal region
$\X^{i}_{\mathrm{goal}} \subseteq \X_{i}$, and an initial
state $x^{i}_{\mathrm{init}}$, the multi-robot kinodynamic
motion planning problem over the horizon $T$ is
\begin{equation}
\label{eq:kino}
\begin{aligned}
  \min_{u^{1:N}(\cdot)} \quad
  & \textstyle\sum_{i}
    \operatorname{dist}\bigl( x^{i}(T),\,
    \X^{i}_{\mathrm{goal}} \bigr)
    \\[0.2em]
  \text{s.t.} \quad
  & \dot{x}^{i}(t) = f^{i}\bigl(x^{i}(t), u^{i}(t)\bigr),
    \qquad t \in [0, T], \\[0.2em]
  & x^{i}(t) \in \X^{i}_{\mathrm{free}}(t),
    \qquad t \in [0, T], \\[0.2em]
  & d_{\mathrm{sep}}\bigl( x^{i}(t),\, x^{j}(t) \bigr) \ge 0,
    \quad t \in [0, T], \ \forall\, j \ne i, \\[0.2em]
  & \bigl| v\bigl( x^{i}(T) \bigr) \bigr|
    \le \varepsilon_{v},
    \qquad x^{i}(0) = x^{i}_{\mathrm{init}},
\end{aligned}
\end{equation}
where $\operatorname{dist}( x, Y )$ denotes the distance of
the state $x$ from a set $Y$, $v( x )$ denotes the velocity
components of the state, so that the terminal constraint
requires each agent to end the horizon at approximate rest,
$\pi( x )$ denotes the position components of the state,
and $d_{\mathrm{sep}}( x^{i}, x^{j} ) = | \pi( x^{i} ) -
\pi( x^{j} ) | - R_{i} - R_{j}$ separates the bodies of
radii $R_{i}$, $R_{j}$ of the agents.
This problem is an instance of \eqref{eq:jocp} with the sum
of costs as the aggregator, the terminal costs
\[
  J_{i}\bigl( x^{i}(\cdot) \bigr)
  = \operatorname{dist}\bigl( x^{i}(T),\,
  \X^{i}_{\mathrm{goal}} \bigr) ,
\]
the individual predicates
$\preds_{i} = \{ g^{i}_{\mathrm{free}},\,
g^{i}_{\mathrm{rest}} \}$ encoding the free space and
terminal rest,
\begin{gather*}
  g^{i}_{\mathrm{free}}\bigl( x^{i}(\cdot) \bigr)
  = \min_{t \in [0, T]}
  \operatorname{sd}\bigl( x^{i}(t),\,
  \X_{i} \setminus \X^{i}_{\mathrm{free}}(t) \bigr) , \\
  g^{i}_{\mathrm{rest}}\bigl( x^{i}(\cdot) \bigr)
  = \varepsilon_{v}
  - \bigl| v\bigl( x^{i}(T) \bigr) \bigr| ,
\end{gather*}
with $\operatorname{sd}( x, Y )$ the signed distance of $x$
from the set $Y$, and, for each pair $e = \{ i, j \}$, the
separation predicate
$\preds_{e} = \{ g^{ij}_{\mathrm{sep}} \}$ of the form
\eqref{eq:instant} with
$h( x^{i}, x^{j} ) = | \pi( x^{i} ) - \pi( x^{j} ) | - R_{i} - R_{j}$.
The costs and the predicates satisfy (i) to (iii) with
$L_{\varrho} = 1$ and $L_{\preds} = 2$.
\end{example}

\begin{example}[Connectivity maintenance]
\label{ex:relay}
Let a base station be located at $y_{0} \in \R^{3}$, and let
two nodes communicate when they are within a range $R_{c}$
of each other and their connecting segment clears the
obstacle set $O \subset \R^{3}$ by a distance $\delta_{c}$.
The team is to reach its goal regions while the
communication graph on the team and the base stays
connected, with no edge prescribed,
\begin{equation}
\label{eq:relay}
\begin{aligned}
  \min_{u^{1:N}(\cdot)} \quad
  & \textstyle\sum_{i}
    \operatorname{dist}\bigl( x^{i}(T),\,
    \X^{i}_{\mathrm{goal}} \bigr)
    \\[0.2em]
  \text{s.t.} \quad
  & \dot{x}^{i}(t) = f^{i}\bigl(x^{i}(t), u^{i}(t)\bigr),
    \qquad t \in [0, T], \\[0.2em]
  & x^{i}(t) \in \X^{i}_{\mathrm{free}}(t),
    \qquad t \in [0, T], \\[0.2em]
  & d_{\mathrm{sep}}\bigl( x^{i}(t),\, x^{j}(t) \bigr) \ge 0,
    \quad t \in [0, T], \ \forall\, j \ne i, \\[0.2em]
  & G\bigl( x^{1}(t), \dots, x^{N}(t) \bigr)
    \ \text{connected},
    \quad t \in [0, T], \\[0.2em]
  & \bigl| v\bigl( x^{i}(T) \bigr) \bigr|
    \le \varepsilon_{v},
    \qquad x^{i}(0) = x^{i}_{\mathrm{init}},
\end{aligned}
\end{equation}
where $G$ is the graph on the nodes $0, 1, \dots, N$, with
$\pi( x^{0} ) \equiv y_{0}$ for the base, that joins $i$ and
$j$ when their link margin
\begin{gather*}
  \ell( x^{i}, x^{j} )
  = \min\bigl\{\, R_{c} - | \pi( x^{i} ) - \pi( x^{j} ) |, \\
  \qquad\qquad
  \operatorname{dist}\bigl( [ \pi( x^{i} ), \pi( x^{j} ) ],\,
  O \bigr) - \delta_{c} \,\bigr\}
\end{gather*}
is nonnegative. This problem is an instance of
\eqref{eq:jocp} with the costs and the individual and
pairwise predicates of \cref{ex:kino}, and one connectivity
predicate on the group of all agents, of the form
\eqref{eq:instant} with
\[
  h( x^{1}, \dots, x^{N} )
  = \max_{S \in \mathcal{S}}\ \min_{\{ i, j \} \in S}
  \ell( x^{i}, x^{j} ) ,
\]
where $\mathcal{S}$ is the set of spanning trees of the
complete graph on the nodes $0, 1, \dots, N$. The margin
$h$ is nonnegative exactly when some spanning tree has all
of its links alive, that is, when $G$ is connected, and is
attained by a maximum spanning tree under the weights
$\ell$. The link margin is Lipschitz since the distance of
the segment from the obstacles is a minimum over convex
combinations of its endpoints, and $h$, a maximum of minima
of link margins, inherits its constant, $L_{\preds} = 2$.
The predicate couples the whole team: no single link is
required, and a link may break whenever the others keep the
graph connected. Fixing the tree $S$ recovers the
maintenance of a prescribed graph, the pairwise case, and
the path from the base through agents $1, \dots, N$ is a
relay chain.
\end{example}

\subsection{Flat Reformulation}
\label{sec:flatform}

We restrict the dynamics $f^{i}$ of each agent to the
sufficiently regular differentially flat structure of
\cite{deshpande2026DFT}. In particular, we make the following
assumptions about the agents' dynamics.

\smallskip
\noindent (a) \textbf{Regularity:} The dynamics $f^{i}$ is
regular enough that, starting from any state $x \in \X_{i}$,
each admissible control $u^{i} : [0, T] \to U_{i}$ generates
a unique trajectory of \eqref{eq:sys}.

\smallskip
\noindent (b) \textbf{Differential flatness with indices
$R^{i}$:} The system \eqref{eq:sys} is differentially flat
in the sense of \cite{deshpande2026DFT}, with a
state-determined flat output
$z^{i} = h^{i}( x^{i} ) \in \R^{d_{i}}$ and indices
$R^{i} = ( r^{i}_{1}, \dots, r^{i}_{d_{i}} )$,
$| R^{i} | := \sum_{l} r^{i}_{l}$, on a fixed open domain
of regular points, which excludes the singular locus of the
flatness maps~\cite{deshpande2026DFT}. Denote
\begin{equation}
\label{eq:jet}
  \jet z^{i}
  := \bigl( z^{i}_{l},\, \dot{z}^{i}_{l},\, \dots,\,
  z^{i\, ( r^{i}_{l} - 1 )}_{l} \bigr)_{l=1}^{d_{i}} ,
  \quad
  z^{i\, (R^{i})}
  := \bigl( z^{i\, ( r^{i}_{l} )}_{l} \bigr)_{l=1}^{d_{i}} .
\end{equation}
The flatness maps accordingly take the continuously
differentiable form
\begin{equation}
\label{eq:flatmaps}
  x^{i}(t) = \Psi^{i}_{x}\bigl( \jet z^{i}(t) \bigr) ,
  \qquad
  u^{i}(t) = \Psi^{i}_{u}\bigl( \jet z^{i}(t),\,
  z^{i\, (R^{i})}(t) \bigr)
\end{equation}
along every trajectory. Conversely, let $z^{i}(\cdot)$ be a
curve with components $z^{i}_{l}$ in the Sobolev space
$W^{r^{i}_{l}, \infty}( [0, T]; \R )$ of scalar functions
with essentially bounded derivatives up to order
$r^{i}_{l}$, and suppose that
$\Psi^{i}_{u}( \jet z^{i}(t),\, z^{i\, (R^{i})}(t) ) \in U_{i}$
for almost every $t \in [0, T]$. Then
$z^{i} = h^{i}( x^{i}(\cdot) )$ for a trajectory
$x^{i}(\cdot)$ of \eqref{eq:sys}, realized by the control
$u^{i}(t) = \Psi^{i}_{u}( \jet z^{i}(t),\, z^{i\, (R^{i})}(t) )$
from the state $x^{i}(0) = \Psi^{i}_{x}( \jet z^{i}(0) )$.

\smallskip
\noindent (c) \textbf{Compact input set:} The input set
$U_{i} \subset \R^{m_{i}}$ is compact.

\smallskip
\noindent (d) \textbf{Jet observability:} There is a map
$\Xi^{i}$ such that, along every trajectory of
\eqref{eq:sys}, $\jet z^{i}(t) = \Xi^{i}( x^{i}(t) )$. An
extension of the state renders the jet observable whenever
required \cite{deshpande2026DFT}.

\smallskip
By flatness (b) and jet observability (d), the state along a
trajectory is the function
$x^{i}(t) = \Psi^{i}_{x}( \jet z^{i}(t) )$ of the output jet,
so the cost and the predicates of \eqref{eq:jocp}
induce functionals of the output jet curves, which we
continue to denote by $\Phi$, $J_{i}$, and $g$. Moreover, by the flatness
converse in (b), the trajectories of admissible controls from
$x^{i}_{\mathrm{init}}$ are exactly the state realizations
of the output curves $z^{i}(\cdot)$,
$z^{i}_{l} \in W^{r^{i}_{l}, \infty}( [0, T]; \R )$, whose
induced control lies in $U_{i}$ almost everywhere and that
start at the initial jet. The problem \eqref{eq:jocp} is
therefore equivalent to the output-space problem
\begin{equation}
\label{eq:jocp-flat}
\begin{aligned}
  \min_{z^{1:N}(\cdot)} \
  & \Phi\bigl( \jet z^{1}(\cdot),\, \dots,\,
    \jet z^{N}(\cdot) \bigr)
    \\[0.2em]
  \text{s.t.} \
  & \Psi^{i}_{u}\bigl( \jet z^{i}(t),\,
    z^{i\, (R^{i})}(t) \bigr) \in U_{i}
    \ \ \text{a.e.\ on } [0, T], \\[0.2em]
  & g\bigl( \jet z^{i}(\cdot) \bigr) \ge 0
    \qquad \forall\, g \in \preds_{i} , \\[0.2em]
  & g\bigl( ( \jet z^{j}(\cdot) )_{j \in e} \bigr) \ge 0
    \qquad \forall\, g \in \preds_{e} ,
    \ \forall\, e , \\[0.2em]
  & \jet z^{i}(0) = \Xi^{i}\bigl( x^{i}_{\mathrm{init}}
    \bigr),
    \qquad i = 1, \dots, N .
\end{aligned}
\end{equation}
Note that the structural assumptions (i) to (iii) of
\cref{sec:problem} transfer to this output-space
reformulation. Indeed, the state maps $\Psi^{i}_{x}$ are
continuously differentiable, hence Lipschitz on compact
sets with a common constant $L_{\mathbf{x}}$, so that for
all window jet curves $\zeta, \zeta'$ of agent $i$, in the
supremum metric over the window induced by the maximum norm on the jet
space,
\begin{equation}
\label{eq:jetlip}
  d\bigl( \Psi^{i}_{x} \circ \zeta,\,
  \Psi^{i}_{x} \circ \zeta' \bigr)
  \le L_{\mathbf{x}}\, d( \zeta, \zeta' ) .
\end{equation}
The stage costs, the window predicates, and the joint
predicates induce
\begin{equation}
\label{eq:induced}
\begin{gathered}
  \tilde{\varrho}^{i}_{k}( \zeta )
  := \varrho^{i}_{k}\bigl( \Psi^{i}_{x} \circ \zeta \bigr) ,
  \qquad
  \tilde{g}_{k}( \zeta )
  := g_{k}\bigl( \Psi^{i}_{x} \circ \zeta \bigr) , \\
  \tilde{h}_{t}\bigl( ( \zeta^{j} )_{j \in e} \bigr)
  := h_{t}\bigl( ( \Psi^{j}_{x} \circ \zeta^{j} )_{j \in e}
  \bigr) ,
\end{gathered}
\end{equation}
which reproduce \eqref{eq:winagg}, \eqref{eq:winsep}, and
\eqref{eq:instant} on the jet curves. By \eqref{eq:jetlip},
\begin{equation}
\label{eq:jetconst}
\begin{gathered}
  \bigl| \tilde{\varrho}^{i}_{k}( \zeta )
  - \tilde{\varrho}^{i}_{k}( \zeta' ) \bigr|
  \le L_{\varrho} L_{\mathbf{x}}\, d( \zeta, \zeta' ) , \\
  \bigl| \tilde{g}_{k}( \zeta ) - \tilde{g}_{k}( \zeta' )
  \bigr|
  \le L_{\preds} L_{\mathbf{x}}\, d( \zeta, \zeta' ) ,
\end{gathered}
\end{equation}
and likewise for $\tilde{h}_{t}$ in each of its arguments.
We henceforth let $d$ denote this metric on jet curves over a
window, as well as the maximum-norm distance between jets, absorb the factor $L_{\mathbf{x}}$
into $L_{\varrho}$ and $L_{\preds}$, and drop the tildes.
Our development henceforth addresses \eqref{eq:jocp} through the flat reformulation \eqref{eq:jocp-flat}.

\section{Dispersive Forward Trees and Near-Optimal Control}
\label{sec:trees}

In this section, we review the dispersive forward tree
(DFT) search approach of \cite{deshpande2026DFT} for near-optimal
solutions of optimal control problems in the flat form of
\cref{sec:flatform}. The development reviewed concerns a
single agent system $\dot{x} = f( x, u )$ with state
$x \in \X \subset \R^{n}$ and control $u \in U \subset \R^{m}$,
with flat dynamics $f$ satisfying (a) to (d) of flatness
index $R = ( r_{1}, \dots, r_{d} )$ and flat output
$z = h( x ) \in \R^{d}$. Let $\tra^{x}_{u}$ denote the
trajectory generated by an admissible control $u$ from a
state $x$, and $\jet z_{\tra} := \Xi( \tra( \cdot ) )$ the
output jet curve of a trajectory $\tra$.

\begin{definition}[Smooth and eroded trajectory classes]
\label{def:classes}
\noindent (i) For $M > 0$, a trajectory is
\emph{$M$-smooth} if the derivatives of its output, up to the flat input, are
bounded by $M$,
\[
  \bigl| z^{(k)}_{l}(t) \bigr| \le M ,
  \qquad 1 \le k \le r_{l}, \quad 1 \le l \le d .
\]

\smallskip
\noindent (ii) For $\varepsilon > 0$, the
\emph{$\varepsilon$-eroded} input set is
\[
  U^{\varepsilon}
  := \bigl\{\, u \in U \;:\;
  u + [ -\varepsilon, \varepsilon ]^{m} \subseteq U \,\bigr\} ,
\]
and $\mathcal{U}^{\varepsilon}_{M}( x )$ denotes the class
of controls taking values in $U^{\varepsilon}$ a.e.\ whose
trajectory from $x$ is $M$-smooth.
\end{definition}

Let $L_{\Psi}$ be a Lipschitz constant of $\Psi_{u}$ on
the $2M$-smooth trajectory class, and fix
$\varepsilon \in ( 0, L_{\Psi} M ]$.

\begin{definition}[Dispersive command sets and trees]
\label{def:mincost}
Fix a dispersion $\delta_{\A} > 0$ and a pruning radius
$s > 0$, and let
$q_{s}( \xi ) := s \lfloor \xi / s \rfloor$ denote the
componentwise cell quantization on $\R^{|R|}$.

\smallskip
\noindent (i) A \emph{command set} $\A$ is a finite set of
commands $a : \X \to U^{[0, 1]}$, each assigning to a state
$x \in \X$ an admissible control signal
$a( x ) : [0, 1] \to U$. We say $\A$ is
\emph{$\delta_{\A}$-dispersive} if, for any state
$x \in \X$ and any
$u \in \mathcal{U}^{\varepsilon / 2}_{2M}( x )$,
\[
  \min_{a \in \A}\;
  d\bigl( \jet z_{\tra^{x}_{a(x)}},\;
  \jet z_{\tra^{x}_{u}} \bigr)
  \;\le\; \delta_{\A} .
\]

\smallskip
\noindent (ii) Given the initial state $x_{\mathrm{init}}$,
a command sequence
$\mathbf{a} = ( a_{1}, \dots, a_{k} ) \in \A^{k}$ generates
the trajectory $\tra_{\mathbf{a}}$ on $[0, k]$ following
\[
\begin{gathered}
  \tra_{\mathbf{a}}( j - 1 + t )
  = \tra^{x_{j-1}}_{a_{j}( x_{j-1} )}( t ),
  \qquad t \in [0, 1] , \\
  x_{0} := x_{\mathrm{init}} , \qquad
  x_{j} := \tra_{\mathbf{a}}( j ) , \qquad
  j = 1, \dots, k ,
\end{gathered}
\]
while
$\roll( \mathbf{a} ) := \jet z_{\tra_{\mathbf{a}}}$ denotes
its output jet curve and
$\Sigma( \mathbf{a} )
:= \sum_{j=1}^{k} \varrho_{j-1}\bigl( \roll( \mathbf{a} )|_{[j-1,\, j]} \bigr)$
its running cost, the partial sum of \eqref{eq:winagg}. The
\emph{complete tree} of depth $T$ generated by $\A$ is the
family
\[
  \cl^{\circ}
  := \bigl\{\, \tra_{\mathbf{a}} \;:\;
  \mathbf{a} \in \A^{k} ,\ 0 \le k \le T \,\bigr\} .
\]

\smallskip
\noindent (iii) A \emph{dispersive tree} at the pruning
radius $s$ is a subtree $\cl \subset \cl^{\circ}$ retaining
one representative node per occupied spatiotemporal cell
$( c, k ) \in s \Z^{|R|} \times \mathbb{N}$, namely the
node of the best running cost among those sharing the
cell. In particular, the tree is generated by depthwise
de-duplication from the root, $V_{0} := \{ ( ) \}$, and
\[
  \widehat{V}_{k+1}
  := \left\{ ( \mathbf{a}, a )
  \;\middle|\;
  \begin{gathered}
  \mathbf{a} \in V_{k} ,\quad a \in \A , \\
  g_{k}\bigl( \roll( ( \mathbf{a}, a ) )|_{[k,\, k+1]}
  \bigr) \ge 0 \quad \forall\, g \in \preds , \\
  \roll( ( \mathbf{a}, a ) )|_{[k,\, k+1]}
  \ \text{$2M$-smooth}
  \end{gathered}
  \right\} ,
\]
\[
  V_{k+1}
  := \left\{ \mathbf{a} \in \widehat{V}_{k+1}
  \;\middle|\;
  \begin{gathered}
  \Sigma( \mathbf{a} ) \le \Sigma( \mathbf{a}' )
  \ \ \forall\, \mathbf{a}' \in \widehat{V}_{k+1} \\
  \text{with}\
  q_{s}( \roll( \mathbf{a}' )( k{+}1 ) ) \\
  \qquad\qquad = q_{s}( \roll( \mathbf{a} )( k{+}1 ) )
  \end{gathered}
  \right\} ,
\]
with ties resolved by a fixed rule, so that exactly one
sequence per occupied terminal cell survives. Then
$\cl = \{\, \tra_{\mathbf{a}} \;:\;
\mathbf{a} \in \bigcup_{k=0}^{T} V_{k} \,\}$.
\end{definition}

While the complete tree of depth $T$ requires
$O( | \A |^{T} )$ forward computation, a dispersive tree is
grown in $O( | \A |\, T^{d+1} ( M / s )^{|R|} )$ forward
computation \cite{deshpande2026DFT}, polynomial in
the horizon $T$, and provides a near-optimality guarantee.
In particular, for the single agent formulation of
\eqref{eq:jocp-flat},
\begin{equation}
\label{eq:socp}
\begin{aligned}
  \min_{z(\cdot)} \quad
  & J\bigl( \jet z(\cdot) \bigr)
    \\[0.2em]
  \text{s.t.} \quad
  & \Psi_{u}\bigl( \jet z(t),\,
    z^{(R)}(t) \bigr) \in U
    \ \ \text{a.e.\ on } [0, T], \\[0.2em]
  & g\bigl( \jet z(\cdot) \bigr)
    \ge 0 \qquad \forall\, g \in \preds, \\[0.2em]
  & \jet z(0) = \Xi\bigl( x_{\mathrm{init}} \bigr) ,
\end{aligned}
\end{equation}
the dispersive tree is near-optimal in the following sense.

\begin{theorem}
\label{thm:single}
There are constants $c_{B}, C_{B} > 0$ such that the
following holds. Let the dispersion and the pruning radius
satisfy $\delta_{\A} + s \le c_{B}\, \varepsilon$, and let
$\rho := ( C_{B} + 1 )( \delta_{\A} + s )$.
Let $\tra = \tra^{x_{\mathrm{init}}}_{u}$ be a feasible
solution of \eqref{eq:socp} with
$u \in \mathcal{U}^{\varepsilon}_{M}( x_{\mathrm{init}} )$
and the margin
\[
  g\bigl( \jet z_{\tra} \bigr)
  \;\ge\;  L_{\preds}\, \rho
  \qquad \forall\, g \in \preds .
\]
Then the dispersive tree of depth $T$ retains a command
sequence $\mathbf{a} \in \A^{T}$, feasible for
\eqref{eq:socp}, with
\[
  J\bigl( \roll( \mathbf{a} ) \bigr)
  \;\le\;
  J\bigl( \jet z_{\tra} \bigr)
  + L_{J}\, \rho .
\]
\end{theorem}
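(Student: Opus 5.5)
This result is the single-agent guarantee of \cite{deshpande2026DFT}. I would reprove it by a shadowing argument: the tree is shown to track the reference $\jet z_{\tra}$ window by window, with errors that do not accumulate beyond $\rho$.

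\textbf{Invariant.} For each depth $k$, I would show that $V_{k}$ contains a sequence $\mathbf{a}_{k}$ whose jet curve satisfies two conditions:
\[
  d\bigl(\roll(\mathbf{a}_{k})|_{[j-1,j]},\, \jet z_{\tra}|_{[j-1,j]}\bigr) \le \rho
  \quad\text{for } j \le k,
  \qquad
  \Sigma(\mathbf{a}_{k}) \le \textstyle\sum_{j<k}\varrho_{j}\bigl(\jet z_{\tra}|_{[j,j+1]}\bigr) + k L_{\varrho}\rho .
\]
In addition, its terminal jet should lie within $C_{B}(\delta_{\A}+s)$ of $\jet z_{\tra}(k)$. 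The point of the constant $C_{B}$ is that this terminal error never compounds beyond that fixed multiple.

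\textbf{Induction step.} Suppose the current node has terminal state $x_{k}$, whose jet is close to $\jet z_{\tra}(k)$.
\begin{itemize}
\item Build a bridging control $\tilde u$ on $[k,k+1]$ from $x_{k}$. Its output curve equals the reference output plus a polynomial correction in each flat coordinate $l$, of degree at most $2r_{l}-1$. The correction matches the jet mismatch at time $k$ and decays, in a Hermite fashion, to zero jet by time $k+1$.
\item Since the mismatch is at most $C_{B}(\delta_{\A}+s) \le C_{B}c_{B}\varepsilon$, the correction's derivatives up to order $r_{l}$ are $O(\varepsilon)$. Taking $c_{B}$ small, with $\varepsilon \le L_{\Psi}M$, the bridged trajectory is therefore $2M$-smooth.
\item Lipschitz continuity of $\Psi_{u}$ (constant $L_{\Psi}$) moves the input by at most $\varepsilon/2$. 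Since $u$ takes values in $U^{\varepsilon}$, this gives $\tilde u \in \mathcal{U}^{\varepsilon/2}_{2M}(x_{k})$.
\item $\delta_{\A}$-dispersivity then yields $a\in\A$ whose window jet is within $\delta_{\A}$ of the bridge. The bridge ends exactly on the reference jet at $k+1$, so the child $(\mathbf{a}_{k},a)$ ends within $\delta_{\A}$ of $\jet z_{\tra}(k+1)$.
\item Over the whole window, the child stays within $\delta_{\A}+O(C_{B}(\delta_{\A}+s)) \le \rho$ of the reference, after choosing $C_{B}$ to absorb the bridge constant.
\item Feasibility of the child: the window predicates $g_{k}$ are $L_{\preds}$-Lipschitz and the reference has margin $L_{\preds}\rho$, so $g_{k} \ge 0$ holds. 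Smoothness was already established, so the child lies in $\widehat V_{k+1}$.
\end{itemize}

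\textbf{Pruning, the main obstacle.} The child may be discarded in favour of the cell representative $\mathbf{a}'$. That representative has a terminal jet within $s$ (max-norm) of the child's, so within $\delta_{\A}+s$ of the reference, and its running cost is no larger. This lower cost is what preserves the cost half of the invariant, but the representative's history before $k+1$ may wander far from the reference.

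So the inductive hypothesis cannot be tracking of the entire past. I would weaken it to two properties only:
\begin{itemize}
\item the terminal jet lies within $\delta_{\A}+s$ of the reference (this keeps the bridge argument valid at every step, so the terminal error is renewed each window rather than compounding);
\item the running cost is at most the reference's partial cost plus $k\,L_{\varrho}\rho$.
\end{itemize}
Since costs are windowwise, the cost accounting only needs the current window's deviation, which is at most $\rho$. Feasibility of the representative's past windows is automatic, because every retained node passed the $g_{k}\ge 0$ filter. The difficulty is making sure the cost comparison is $\Sigma$-monotone under replacement, and that is exactly what the pruning rule provides.

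At $k=T$ this gives $\mathbf{a}\in V_{T}$, feasible, with
\[
  J(\roll(\mathbf{a})) = \Sigma(\mathbf{a}) \le J(\jet z_{\tra}) + T L_{\varrho}\rho = J(\jet z_{\tra}) + L_{J}\rho .
\]
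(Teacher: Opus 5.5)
Your argument is correct and is essentially the paper's proof (recalled from \cite{deshpande2026DFT} in the proof of \cref{prop:priced}). The weakened invariant you settle on, a terminal jet within $\delta_{\A}+s$ of the reference and a running cost at most the reference's partial cost plus $k L_{\varrho}\rho$, is exactly the paper's; your Hermite bridge plays the role of the margin correction lemma, and you handle pruning the same way, through the cell representative of no greater running cost. The one step to tighten is that the $2M$-smoothness filter of $\widehat{V}_{k+1}$ must hold for the child's window, not the bridge's; the paper attributes this to the child tracking the $M$-smooth reference within $\rho$.
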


As we shall see in the next section,
Theorem~\ref{thm:single} provides a path to a
near-optimal solution of the joint problem
\eqref{eq:jocp-flat}, viewed in the form \eqref{eq:socp}
for the team as a single system with stacked states and
controls. A joint-space dispersive tree then generated by
the product command set
$\A_{1} \times \dots \times \A_{N}$ contains a near-optimal
joint solution. In this work, we address
\eqref{eq:jocp-flat} in a decentralized manner, in which
each agent repeatedly grows trees generated by its own
command set $\A_{i}$ against what it learns from its peers,
and an anytime approach to the joint solution is set up
by exchanging command sequences, iterated over multiple
rounds.

\section{Dispersive Tree Agreements for Near-Optimal Coordination}
\label{sec:agreement}

In this section, we develop \emph{Dispersive Tree
Agreements} and the protocol that negotiates them,
WarpWeaver. WarpWeaver, as implemented, provides a
decentralized anytime approach to a near-optimal solution of
the JOCP \eqref{eq:jocp-flat}. We begin with an overview of our
mathematical results, followed by their presentation and the
protocol as implemented. We defer the proofs of these results to
\cref{app:proofs}.

\subsection{Overview}
\label{sec:overview}

Our presentation begins with an existence result in
\cref{sec:jcs}, in which we show that there exists a
near-optimal \emph{agreement} on agent-local dispersive trees,
each pruned at a resolution that keeps its growth computation
tractable. This result motivates the development of our
decentralized protocol. We begin this development in
\cref{sec:ilp} by restricting the problem of finding an optimal
agreement to an integer linear program (ILP), over the command
sequence sets $\A_{i}^{T}$ of all agents, at a chosen
resolution of space and time.

Subsequently, in \cref{sec:pricing}, we develop our solution
scheme for the ILP based on column generation. We show that the
search for a near-optimal agreement can be decomposed into an
iterative loop that involves solving the linear relaxation of
the ILP restricted to a small pool of the agents' command
sequences, or \emph{columns}, and a \emph{decentralized} column
generation procedure that enlarges the pool until no agent finds an improving column.
We prove that, whenever this process reaches quiescence, the agents select from the pool an agreement that is
near-optimal
relative to the solutions of the JOCP that are robust at the search resolution, up to a gap that the agents compute online.

\subsection{Existence of Near-Optimal Agreements}
\label{sec:jcs}

For each agent $i$, fix the constants $M_{i}$ and
$\varepsilon_{i}$ of \cref{def:classes}, a
$\delta_{\A_{i}}$-dispersive command set $\A_{i}$, and a pruning
radius $s_{i}$ satisfying the hypotheses of
\cref{thm:single} for its own dynamics, let
$c_{B,i}, C_{B,i}$ be the corresponding constants, and let
the \emph{search resolution} be
\begin{equation}
\label{eq:rho}
  \rho := \max_{i} \rho_{i} , \qquad
  \rho_{i} := ( C_{B,i} + 1 )( \delta_{\A_{i}} + s_{i} ) .
\end{equation}
For a command sequence
$\mathbf{a}^{i} \in \A_{i}^{T}$, let
$\roll_{i}( \mathbf{a}^{i} )$ denote the output jet curve
of agent $i$ generated from $x^{i}_{\mathrm{init}}$ as in
\cref{def:mincost}(ii). Restricting the JOCP
\eqref{eq:jocp-flat} to the joint command space
$\prod_{i} \A_{i}^{T}$ and expanding the joint cost by
\eqref{eq:aggregator} yields the \emph{joint command problem}
(JCP)
\begin{equation}
\label{eq:jcp}
\begin{aligned}
  \min_{\mathbf{a}^{1:N}} \
  & \max_{1 \le \ell \le L}\ \sum_{i} \alpha_{\ell i}\,
    J_{i}\bigl( \roll_{i}( \mathbf{a}^{i} ) \bigr)
    \\[0.2em]
  \text{s.t.} \
  & g\bigl( \roll_{i}( \mathbf{a}^{i} ) \bigr) \ge 0
    \qquad \forall\, g \in \preds_{i} , \\[0.2em]
  & g\bigl( ( \roll_{j}( \mathbf{a}^{j} ) )_{j \in e}
    \bigr) \ge 0
    \qquad \forall\, g \in \preds_{e} ,
    \ \forall\, e , \\[0.2em]
  & \mathbf{a}^{i} \in \A_{i}^{T} ,
    \qquad i = 1, \dots, N .
\end{aligned}
\end{equation}
We call a feasible solution of the JCP an \emph{agreement},
and show that agent-local dispersive trees contain a
near-optimal agreement in the following sense.

\begin{proposition}
\label{prop:jcs}
Let $z^{1}(\cdot), \dots, z^{N}(\cdot)$ be a
feasible solution of \eqref{eq:jocp-flat} generated by
controls $u^{i} \in \mathcal{U}^{\varepsilon_{i}}_{M_{i}}(
x^{i}_{\mathrm{init}} )$ with the margins
\begin{equation}
\label{eq:refmargin}
\begin{gathered}
  g\bigl( \jet z^{i} \bigr) \ge L_{\preds}\, \rho
  \qquad \forall\, g \in \preds_{i} ,\ \forall\, i , \\
  g\bigl( ( \jet z^{j} )_{j \in e} \bigr) \ge L_{\preds}\,
  \rho
  \qquad \forall\, g \in \preds_{e} ,\ \forall\, e .
\end{gathered}
\end{equation}
Then there exist families $\preds'_{i} \supseteq \preds_{i}$
of individual predicates of the form \eqref{eq:winsep} such
that the product $\prod_{i} \cl_{i}$ of the dispersive trees
$\cl_{i}$ generated by $\A_{i}$ at the pruning radii $s_{i}$
under the predicates $\preds'_{i}$ contains an agreement
$( \mathbf{a}^{1}, \dots, \mathbf{a}^{N} )$ with
\[
\begin{aligned}
  &\Phi\bigl( \roll_{1}( \mathbf{a}^{1} ), \dots,
  \roll_{N}( \mathbf{a}^{N} ) \bigr) \\
  &\qquad \le\;
  \Phi\bigl( \jet z^{1}, \dots, \jet z^{N} \bigr)
  + L_{\Phi}\, \rho ,
\end{aligned}
\]
where $L_{\Phi} := \max_{\ell} \sum_{i} \alpha_{\ell i}\, L_{J}$.
\end{proposition}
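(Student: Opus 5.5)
The plan is to apply \cref{thm:single} to each agent separately, after absorbing the joint predicates into individual ones. The key construction is a set of \emph{tube predicates} that pin each agent near its own reference trajectory.

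For each agent $i$ and each window $k$, let $\zeta^{i}_{k} := \jet z^{i}|_{[k,k+1]}$, and define
\[
  g^{i,\mathrm{tube}}_{k}( w ) := \rho - d\bigl( w, \zeta^{i}_{k} \bigr) .
\]
Set
\[
  g^{i,\mathrm{tube}} := \min_{k} g^{i,\mathrm{tube}}_{k} ,
  \qquad
  \preds'_{i} := \preds_{i} \cup \{ g^{i,\mathrm{tube}} \} .
\]
Each $g^{i,\mathrm{tube}}_{k}$ is $1$-Lipschitz in the supremum metric, so $g^{i,\mathrm{tube}}$ has the windowwise form \eqref{eq:winsep}. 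Taking $L_{\preds} \ge 1$ (after rescaling, or by using $L_{\preds}\rho - L_{\preds} d$ instead), it lies within the admissible class. The reference curve satisfies $g^{i,\mathrm{tube}}( \jet z^{i} ) = \rho$.

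\textbf{Step 1: apply \cref{thm:single} per agent.} The single-agent problem \eqref{eq:socp} for agent $i$ is posed with predicates $\preds'_{i}$ and cost $J_{i}$. The reference $z^{i}$ is feasible with control in $\mathcal{U}^{\varepsilon_{i}}_{M_{i}}$. Its margins are at least $L_{\preds}\rho \ge L_{\preds}\rho_{i}$ for $g \in \preds_{i}$ by \eqref{eq:refmargin}, and the tube margin is $\rho$.

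This is the main obstacle: \cref{thm:single} demands a margin $L_{\preds}\rho_{i}$ on the tube predicate as well. The tube radius therefore has to be chosen consistently, say as $(L_{\preds}+1)\rho$ scaled appropriately, so that the margin requirement holds. At the same time, the closeness produced by the theorem's proof must be within $\rho_{i} \le \rho$. I would therefore need to open the proof of \cref{thm:single}. Its construction produces a retained sequence $\mathbf{a}^{i}$ with $d( \roll_{i}(\mathbf{a}^{i})|_{[k,k+1]}, \zeta^{i}_{k} ) \le \rho_{i}$ on every window; this is exactly how both the cost bound $L_{J}\rho_{i}$ and feasibility arise. It is cleanest to invoke this tracking property directly and use the tube predicate only to certify, with radius $\rho$, that this tracked sequence survives pruning in $\cl_{i}$. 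If necessary, I will define the tube with radius $\rho$ plus slack, accepting that it is the tracking bound, not the tube, that controls the joint predicates. The outcome of this step is $\mathbf{a}^{i} \in \cl_{i}$, feasible for $\preds_{i}$, with $J_{i}( \roll_{i}(\mathbf{a}^{i}) ) \le J_{i}( \jet z^{i} ) + L_{J}\rho$ and jet curves uniformly within $\rho$ of $\jet z^{i}$.

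\textbf{Step 2: joint predicates.} For $g \in \preds_{e}$ and any $t$, the function $h_{t}$ is $L_{\preds}$-Lipschitz in the maximum of member distances, and each member deviates by at most $\rho$. Hence
\[
  h_{t}\bigl( ( \roll_{j}(\mathbf{a}^{j})(t) )_{j\in e} \bigr)
  \ge h_{t}\bigl( ( \jet z^{j}(t) )_{j \in e} \bigr) - L_{\preds}\rho \ge 0
\]
by \eqref{eq:refmargin}. Taking the minimum over $t$ gives feasibility of the joint predicates, so the tuple is an agreement in $\prod_{i}\cl_{i}$.

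\textbf{Step 3: cost.} Since $\alpha_{\ell i} \ge 0$, for each $\ell$ we have
\[
  \sum_{i}\alpha_{\ell i} J_{i}( \roll_{i}(\mathbf{a}^{i}) )
  \le \sum_{i}\alpha_{\ell i} J_{i}( \jet z^{i} ) + \Bigl( \sum_{i}\alpha_{\ell i} L_{J} \Bigr) \rho .
\]
Maximizing over $\ell$, and using that the maximum of sums is at most the maximum of the first terms plus the maximum of the second, yields the bound $\Phi( \jet z^{1},\dots,\jet z^{N}) + L_{\Phi}\rho$.
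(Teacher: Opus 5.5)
Your overall architecture is the paper's: tube predicates, \cref{thm:single} applied per agent, Lipschitz transfer of the joint margins, and monotonicity of the aggregator. Steps 2 and 3 are fine. The gap is in how you resolve the margin obstacle in Step 1.

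You claim that the proof of \cref{thm:single} yields a retained sequence that tracks the reference within $\rho_{i}$ on every window. You then propose to let this tracking bound, rather than the tube, control the joint predicates, possibly with an enlarged tube radius. That claim is false.

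The tracking bound \eqref{eq:ind-track} holds only for the candidate extension $(\mathbf{a}_{k}, a)$, and only over its last window. Pruning in \cref{def:mincost}(iii) then keeps the cheapest sequence in the same terminal cell. That sequence can come from a different parent, so its history, and even its last window, need not be close to the reference.

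The induction in that proof survives pruning only for quantities that pruning cannot worsen. Running cost is one, since pruning only lowers it. Predicate satisfaction is the other, since it is checked on every window of every node at growth. Uniform closeness to the reference is neither. So the only way to guarantee it for the output sequence is to impose it as a predicate in $\preds'_{i}$ with radius at most $\rho$. If you enlarge the tube radius to $\rho$ plus slack, the surviving sequence can drift farther than $\rho$. Step 2 then fails, because the joint margin is only $L_{\preds}\rho$.

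The obstacle disappears with the scaling you mention only in passing. Take $g^{i,\mathrm{tube}}_{k}(w) := L_{\preds}\bigl(\rho - d(w, \zeta^{i}_{k})\bigr)$. This predicate has three properties:
\begin{itemize}
\item it is $L_{\preds}$-Lipschitz, so it lies in the admissible class;
\item the reference meets it with margin $L_{\preds}\rho \ge L_{\preds}\rho_{i}$, exactly as \cref{thm:single} demands;
\item $g^{i,\mathrm{tube}} \ge 0$ means precisely $d \le \rho$ on every window, and every sequence of $\cl_{i}$ satisfies it.
\end{itemize}
Then \cref{thm:single} applies as a black box, with no need to open its proof. It returns $\mathbf{a}^{i} \in \cl_{i}$ satisfying all of $\preds'_{i}$, with $J_{i}(\roll_{i}(\mathbf{a}^{i})) \le J_{i}(\jet z^{i}) + L_{J}\rho_{i}$, and uniformly within $\rho$ of $\jet z^{i}$ because the tube predicate holds. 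Your Steps 2 and 3 then complete the proof exactly as written. This is precisely the paper's argument.
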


\Cref{prop:jcs} thus guarantees a priori that restricting the
JOCP to agreements loses at most $L_{\Phi}\, \rho$ relative to
the solutions with the margins \eqref{eq:refmargin}, and that
agent-local dispersive trees suffice to represent a
near-optimal agreement. Note that the predicates $\preds'_{i}$ confine each agent $i$
to a corridor of radius $\rho$ around $z^{i}(\cdot)$
(\cref{app:jcs}), and thus replace the joint predicates by
individual ones, under which the agents search independently. As
these predicates depend on the solution itself, \cref{prop:jcs}
is an existence result. The remainder of this section develops a
distributed computing scheme that pursues this structure,
coordinating the independent searches of the agents through a
negotiation.

\subsection{Discretized Restriction of the Agreement Problem}
\label{sec:ilp}

We begin by restricting the JCP \eqref{eq:jcp} to an ILP over
the command sequences of all agents. To this end, we partition
the jet space of each agent into cells of a chosen diameter and
the time horizon into substeps of a chosen length, and enforce each joint predicate by forbidding the
members of its group to jointly occupy a tuple of cells on
which the predicate fails.

Let $M := \max_{i} M_{i}$. Fix a cell size $S > 0$ and a
substep $h := 1 / n_{s}$, $n_{s} \in \mathbb{N}$. Denote
the timestamp $t_{m} := m h$ and the interval
$I_{m} := [t_{m}, t_{m+1}]$ for $m = 0, \dots, T n_{s} - 1$,
and fix a partition $\cells_{i}$ of the jet space
$\R^{|R_{i}|}$ of each agent $i$ into cells of diameter at
most $S$. A jet
curve $\zeta$ of agent $i$ occupies at the substep $m$ the
collection of cells
\begin{equation}
\label{eq:occ}
  \occ_{i}( \zeta, m )
  := \bigl\{\, \kappa \in \cells_{i} \;:\;
  \zeta( t ) \in \kappa \ \text{for some}\ t \in I_{m}
  \,\bigr\} .
\end{equation}
For a joint predicate $g \in \preds_{e}$ and a substep
$m$, we enumerate the \emph{forbidden tuples} of cells, on
which $g$ fails to hold, as
\begin{equation}
\label{eq:forbidden}
  F_{m}( g )
  := \Bigl\{ ( \kappa_{j} )_{j \in e} :
  \inf_{t \in I_{m}}
  \inf_{\xi_{j} \in \kappa_{j},\, j \in e}
  h_{t}\bigl( ( \xi_{j} )_{j \in e} \bigr) < 0
  \Bigr\} .
\end{equation}
For each command sequence $c \in \A_{i}^{T}$ of agent $i$, let
the binary variable $\lambda_{i, c} \in \{ 0, 1 \}$ indicate
that agent $i$ selects $c$, and denote for concision
$J_{i}( c ) := J_{i}( \roll_{i}( c ) )$ and
$\occ_{i}( c, m ) := \occ_{i}( \roll_{i}( c ), m )$. The ILP
at the resolution $( S, h )$ reads
\begin{equation}
\label{eq:ilp}
\begin{aligned}
  & \min_{\vartheta,\, \lambda}\ \vartheta \quad \text{s.t.}
    \\[0.2em]
  & \quad \vartheta \ge \sum_{i} \alpha_{\ell i}
    \sum_{c \in \A_{i}^{T}} J_{i}( c )\, \lambda_{i, c}
    \qquad \ell = 1, \dots, L , \\[0.2em]
  & \quad \sum_{c \in \A_{i}^{T}} \lambda_{i, c} = 1
    \qquad i = 1, \dots, N , \\[0.2em]
  & \quad g\bigl( \roll_{i}( c ) \bigr)\, \lambda_{i, c} \ge 0
    \qquad \forall\, g \in \preds_{i} ,\
    \forall\, c \in \A_{i}^{T} ,\ \forall\, i , \\[0.2em]
  & \quad \sum_{j \in e} \sum_{c \in \A_{j}^{T}}
    \mathbf{1}\bigl[ \kappa_{j} \in \occ_{j}( c, m ) \bigr]\,
    \lambda_{j, c} \le |e| - 1 \\
  & \qquad\qquad
    \forall\, \kappa \in F_{m}( g ) ,\
    \forall\, g \in \preds_{e} ,\ \forall\, e ,\
    \forall\, m , \\[0.2em]
  & \quad \lambda_{i, c} \in \{ 0, 1 \} .
\end{aligned}
\end{equation}
The ILP selects one sequence per agent through the binary
variables $\lambda$, bounds the weighted sums in the maximum of
\eqref{eq:jcp} by $\vartheta$, and admits only the sequences that
obey the individual predicates. The joint predicates couple the
selections within a group and enter through the
\emph{occupancy rows}, which disallow all members of a group
$e$ to occupy their respective cells of a forbidden tuple.
Relative to the JCP \eqref{eq:jcp}, the ILP tightens the joint
predicates by at most the \emph{discretization margin}
$\gamma_{S} := L_{\preds}\, ( S + 2 M h )$, as formalized by
the proposition below.

\begin{proposition}
\label{prop:ilp}
\noindent (i) Let $( \vartheta, \lambda )$ be a feasible
solution of \eqref{eq:ilp}. Then the joint command
sequence $( c_{1}, \dots, c_{N} )$ with
$\lambda_{i, c_{i}} = 1$ is an agreement with
$\Phi( \roll_{1}( c_{1} ), \dots, \roll_{N}( c_{N} ) )
\le \vartheta$.

\noindent (ii) Let
$( \mathbf{a}^{1}, \dots, \mathbf{a}^{N} )$ be an agreement
whose rollouts
$\roll_{i}( \mathbf{a}^{i} )$ are $2 M_{i}$-smooth, with
the margins
\[
  g\bigl( ( \roll_{j}( \mathbf{a}^{j} ) )_{j \in e} \bigr)
  \ge \gamma_{S}
  \qquad \forall\, g \in \preds_{e} ,\ \forall\, e .
\]
Then $( \vartheta, \lambda )$ with
\[
\begin{gathered}
  \lambda_{i, \mathbf{a}^{i}} = 1 , \qquad i = 1, \dots, N , \\
  \vartheta = \Phi\bigl( \roll_{1}( \mathbf{a}^{1} ), \dots,
  \roll_{N}( \mathbf{a}^{N} ) \bigr)
\end{gathered}
\]
is a feasible solution of \eqref{eq:ilp}, and hence the
optimal value of \eqref{eq:ilp} is at most $\vartheta$.
\end{proposition}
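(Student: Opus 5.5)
Part (i) is read off directly from the constraints of \eqref{eq:ilp}, and part (ii) amounts to checking that the indicator solution satisfies every row. The only step needing work is the occupancy rows in (ii), where the margin $\gamma_{S}$ must absorb both the spatial and the temporal discretization.

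For (i), the assignment rows together with integrality force exactly one $c_{i}$ with $\lambda_{i,c_{i}}=1$ for each agent, so the joint sequence $(c_{1},\dots,c_{N})$ is well defined. The individual rows with $\lambda_{i,c_{i}}=1$ give $g(\roll_{i}(c_{i}))\ge 0$ for all $g\in\preds_{i}$. For a joint predicate $g\in\preds_{e}$, I argue by contradiction and suppose $g<0$ on the selected rollouts. By \eqref{eq:instant} there is then some $t$ with $h_{t}((\roll_{j}(c_{j})(t))_{j\in e})<0$. Choose $m$ with $t\in I_{m}$ and let $\kappa_{j}$ be the cell containing $\roll_{j}(c_{j})(t)$. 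Then $\kappa_{j}\in\occ_{j}(c_{j},m)$, and by \eqref{eq:forbidden} the tuple $(\kappa_{j})_{j\in e}$ lies in $F_{m}(g)$, since the infimum over the substep and the cells is at most the negative value attained at $t$. The left side of the corresponding occupancy row then counts every member of $e$ and equals $|e|$, which violates the bound $|e|-1$. Hence the selection is an agreement. Finally, each cost row reads $\vartheta\ge\sum_{i}\alpha_{\ell i}J_{i}(c_{i})$, and taking the maximum over $\ell$ gives $\Phi\le\vartheta$ by \eqref{eq:aggregator}.

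For (ii), set $\lambda$ to the indicator of $(\mathbf{a}^{1},\dots,\mathbf{a}^{N})$. The assignment rows hold trivially. The cost rows hold because $\vartheta$ is the maximum over $\ell$ of the weighted sums. The individual rows hold because an agreement satisfies all of its individual predicates. For an occupancy row given by $g\in\preds_{e}$, $m$ and $\kappa\in F_{m}(g)$, I show by contradiction that not every $j\in e$ occupies $\kappa_{j}$ during $I_{m}$. Suppose they all do. Then for each $j$ there is $t_{j}\in I_{m}$ with $\zeta_{j}(t_{j})\in\kappa_{j}$, where $\zeta_{j}=\roll_{j}(\mathbf{a}^{j})$. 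By the definition of $F_{m}(g)$, there are also $t^{*}\in I_{m}$ and $\xi_{j}\in\kappa_{j}$ with $h_{t^{*}}((\xi_{j}))<0$, up to an arbitrarily small slack from the infimum.

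The main obstacle is bounding $d(\xi_{j},\zeta_{j}(t^{*}))$. I split it as $|\xi_{j}-\zeta_{j}(t_{j})|+|\zeta_{j}(t_{j})-\zeta_{j}(t^{*})|$. The first term is at most the cell diameter $S$. For the second, $2M_{j}$-smoothness bounds every jet component's derivative by $2M_{j}\le 2M$, because the derivative of each jet entry is a higher output derivative of order at most $r_{l}$. In the maximum norm this gives a bound of $2Mh$, since $|t_{j}-t^{*}|\le h$. So $d(\xi_{j},\zeta_{j}(t^{*}))\le S+2Mh$ for all $j$. The uniform $L_{\preds}$-Lipschitz property of $h_{t^{*}}$ in the maximum of the members' distances then yields
\[
h_{t^{*}}\bigl((\zeta_{j}(t^{*}))_{j\in e}\bigr)<L_{\preds}(S+2Mh)+\eta=\gamma_{S}+\eta
\]
for arbitrary $\eta>0$. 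Because $\gamma_{S}$ must dominate a strict infimum, the stated margin gives only $h_{t^{*}}\ge\gamma_{S}$, which is not quite a contradiction. I expect to handle this by noting that the infimum in \eqref{eq:forbidden} is strict ($<0$): fix a witness with $h_{t^{*}}((\xi_{j}))<0$ exactly, which gives a strict inequality $<\gamma_{S}$ and contradicts $g\ge\gamma_{S}$ via \eqref{eq:instant}. Hence at most $|e|-1$ members occupy the tuple, so every row holds, and the optimal value of \eqref{eq:ilp} is at most $\vartheta$.
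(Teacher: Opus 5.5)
Your proof is correct and follows the paper's argument. The paper splits your two contradiction arguments into \cref{lem:disc}(a), used for (i), and \cref{lem:disc}(b) with $\eta = 0$ and $M' = 2M$, used for (ii), and relies on the same $S + 2Mh$ triangle-inequality bound and the Lipschitz property of $h_{t}$. The detour through an arbitrarily small slack is unnecessary: as you eventually note, the strict inequality in \eqref{eq:forbidden} directly gives a witness $t^{*} \in I_{m}$, $\xi_{j} \in \kappa_{j}$ with $h_{t^{*}}( ( \xi_{j} )_{j \in e} ) < 0$.
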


\subsection{Decentralized Column Generation}
\label{sec:pricing}

As formulated, the ILP \eqref{eq:ilp} holds a binary decision
variable $\lambda_{i, c}$ for each command sequence $c$ of each
agent $i$, and is thus of enormous size. Column generation
\cite{dantzig1960decomposition, barnhart1998price,
lubbecke2005column} is the canonical method to address integer
programs with too many variables to enumerate. The procedure
involves solving the linear relaxation of the ILP over a small
pool of variables, or \emph{columns}, and enlarging the pool
with columns that improve the relaxation under prices derived
from its dual. The relaxation provides a lower bound to certify the integer
solutions that we select from the pool.

For multi-robot coordination problems with our structure, we
show that column generation can be effectively decentralized
across the agents due to the separable structure of the
corresponding ILP, in which the agents interact only through
the aggregator and the joint predicates. This allows each
agent to solve a local problem akin to the single-agent
problem \eqref{eq:socp}, priced by its interactions with its
peers, on its onboard computer. In this section, we formalize the
mathematical apparatus, summarized by \cref{fig:math-overview},
that underlies our coordination protocol described in the
subsequent section.

\begin{figure}[t]
\centering
\includegraphics[width=\columnwidth]{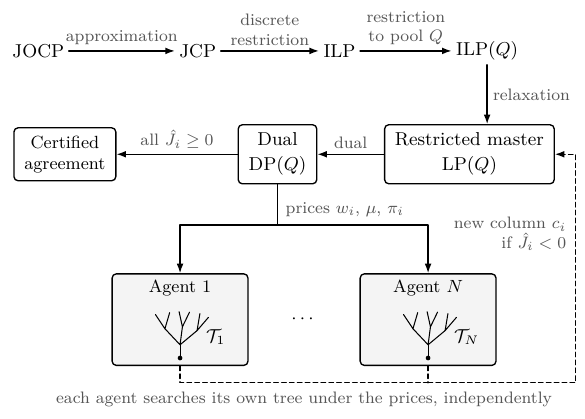}
\caption{Decentralized column generation. The dual of the restricted
master over the pool $Q$ sets common prices, under which each agent
searches its own dispersive tree. Sequences that improve the pool
return as new columns until the agreement is certified at
quiescence.}
\label{fig:math-overview}
\end{figure}

\noindent We first compress the notation of the ILP \eqref{eq:ilp}. Let
\begin{equation}
\label{eq:columns}
  \mathcal{C}
  := \bigl\{\, ( i, c ) \;:\; c \in \A_{i}^{T} ,\
  i = 1, \dots, N \,\bigr\}
\end{equation}
denote the set of \emph{columns}, one per command sequence
of each agent, so that $\lambda \in \{ 0, 1 \}^{\mathcal{C}}$.
The constraints of \eqref{eq:ilp} form four blocks of
\emph{rows}, given by the matrices over the columns
\begin{equation}
\label{eq:blocks}
\begin{aligned}
  \mathbf{J}_{\ell, ( i, c )}
  &:= \alpha_{\ell i}\, J_{i}( c ) ,
  \qquad \ell = 1, \dots, L , \\
  \mathbf{E}_{i', ( i, c )}
  &:= \mathbf{1}[\, i' = i \,] ,
  \qquad i' = 1, \dots, N , \\
  \mathbf{G}_{( i', g, c' ), ( i, c )}
  &:= g\bigl( \roll_{i}( c ) \bigr)\,
  \mathbf{1}[\, ( i', c' ) = ( i, c ) \,] , \\
  &\qquad\qquad ( i', c' ) \in \mathcal{C} ,\
  g \in \preds_{i'} , \\
  \mathbf{O}_{r, ( i, c )}
  &:= \mathbf{1}\bigl[\, i \in e ,\
  \kappa_{i} \in \occ_{i}( c, m ) \,\bigr] , \\
  &\qquad\qquad r = ( g, m, \kappa ) \in \res ,
\end{aligned}
\end{equation}
where $\res$ collects the rows $r = ( g, m, \kappa )$ with
$\kappa \in F_{m}( g )$ and $g \in \preds_{e}$ over all
groups $e$ and substeps $m$, with the right-hand sides
$b_{r} := |e| - 1$. For a \emph{pool} $Q \subseteq \mathcal{C}$
of columns, the ILP \eqref{eq:ilp} restricted to this pool,
denoted $\mathrm{ILP}( Q )$, reads
\begin{equation}
\label{eq:ilp-canon}
\begin{aligned}
  & \mathrm{ILP}( Q ) \colon\ \min_{\vartheta,\, \lambda}\ \vartheta \quad \text{s.t.}
    \\[0.2em]
  & \ \vartheta \mathbf{1} \ge \mathbf{J} \lambda , \quad
    \mathbf{E} \lambda = \mathbf{1} , \quad
    \mathbf{G} \lambda \ge 0 , \\[0.2em]
  & \ \mathbf{O} \lambda \le b , \quad
    \lambda \in \{ 0, 1 \}^{\mathcal{C}} , \quad
    \lambda_{( i, c )} = 0
    \ \ \forall\, ( i, c ) \notin Q ,
\end{aligned}
\end{equation}
with $Q = \mathcal{C}$ recovering \eqref{eq:ilp} exactly.
Likewise, $\mathrm{JCP}( Q )$ denotes the JCP \eqref{eq:jcp} with each
$\mathbf{a}^{i}$ restricted to the sequences $c$ with
$( i, c ) \in Q$.

The \emph{restricted master problem}, denoted
$\mathrm{LP}( Q )$, relaxes $\mathrm{ILP}( Q )$ in two respects. It
lets each agent select its pooled command sequences fractionally,
with the relaxed $\lambda \ge 0$, and lets the members of each group
$e$ occupy a forbidden tuple $\kappa \in F_{m}( g )$ of a joint
predicate $g \in \preds_{e}$ at a price $P > 0$ for a slack
$s_{r} \ge 0$ on its row $r = ( g, m, \kappa )$,
\begin{equation}
\label{eq:master}
\begin{aligned}
  & \mathrm{LP}( Q ) \colon\ \min_{\vartheta,\, \lambda,\, s}\ \vartheta
    {} + P\, \mathbf{1}^{\top} s \quad \text{s.t.}
    \\[0.2em]
  & \ \vartheta \mathbf{1} \ge \mathbf{J} \lambda , \quad
    \mathbf{E} \lambda = \mathbf{1} , \quad
    \mathbf{G} \lambda \ge 0 , \\[0.2em]
  & \ \mathbf{O} \lambda \le b {} + s , \quad
    \lambda \in \R^{\mathcal{C}}_{\ge 0} , \quad
    s \in \R^{\res}_{\ge 0} , \\[0.2em]
  & \ \lambda_{( i, c )} = 0
    \ \ \forall\, ( i, c ) \notin Q .
\end{aligned}
\end{equation}
The slack variable priced at $P$ per unit slack keeps the
restricted master feasible for every $Q$ that holds a column for
each agent. This procedure for initializing column generation is
standard in the literature~\cite[Sec.~4.1.1]{lubbecke2005column}.
With $s = 0$, every feasible solution of $\mathrm{ILP}( Q )$ is
feasible for $\mathrm{LP}( Q )$ at the same cost, so that
$\mathrm{LP}( Q ) \le \mathrm{ILP}( Q )$ holds.
Its dual, denoted $\mathrm{DP}( Q )$, in the variables
$\beta \in \R^{L}$ of the aggregator rows
$\vartheta \mathbf{1} \ge \mathbf{J} \lambda$, $\pi \in \R^{N}$ of
the selection rows $\mathbf{E} \lambda = \mathbf{1}$, $\nu$ of
the individual predicate rows $\mathbf{G} \lambda \ge 0$, and
$\mu \in \R^{\res}$ of the occupancy rows
$\mathbf{O} \lambda \le b {} + s$, is then
\begin{equation}
\label{eq:dual}
\begin{aligned}
  & \mathrm{DP}( Q ) \colon\ \max_{\beta,\, \pi,\, \nu,\, \mu}\ 
    \mathbf{1}^{\top} \pi - b^{\top} \mu
    \quad \text{s.t.} \\[0.2em]
  & \ \bigl( \mathbf{J}^{\top} \beta
    - \mathbf{E}^{\top} \pi
    - \mathbf{G}^{\top} \nu
    + \mathbf{O}^{\top} \mu \bigr)_{( i, c )} \ge 0
    \ \ \forall\, ( i, c ) \in Q , \\[0.2em]
  & \ \mathbf{1}^{\top} \beta = 1 , \quad
    \beta \ge 0 , \quad \nu \ge 0 , \quad \ 0 \le \mu \le P\, \mathbf{1} .
\end{aligned}
\end{equation}
The dual variable $\mu_{r}$ corresponding to each occupancy row $r$
is therefore capped at $P$. Following the feasibility of the
restricted master, $\mathrm{DP}( Q )$ has an optimal solution, with
$\mathrm{DP}( Q ) = \mathrm{LP}( Q )$ by strong duality.
We henceforth, with some overload of notation, use
$\mathrm{JCP}( Q )$, $\mathrm{ILP}( Q )$, $\mathrm{LP}( Q )$, and
$\mathrm{DP}( Q )$ to also denote the optimal values of the
respective programs.
For a feasible solution $( \beta, \pi, \nu, \mu )$ of
$\mathrm{DP}( Q )$, define the \emph{reduced cost} of the
command sequence $c \in \A_{i}^{T}$ of agent $i$ as
\begin{equation}
\label{eq:reduced}
  \bar{J}_{i}( c )
  := \bigl( \mathbf{J}^{\top} \beta - \mathbf{E}^{\top} \pi
  + \mathbf{O}^{\top} \mu \bigr)_{( i, c )} .
\end{equation}
Note that we have omitted the term
$- \mathbf{G}^{\top} \nu$ corresponding to the individual
predicates since we only pool and price sequences that
satisfy them. In particular, the rows
$g( \roll_{i}( c ) )\, \lambda_{i, c} \ge 0$ of such a sequence
$c$ hold for every $\lambda_{i, c} \ge 0$, and thus never
constrain $\mathrm{LP}( Q )$, so that their duals $\nu$ can be
taken to be zero.

The reduced cost of a sequence of agent $i$ depends on its
peers only through the solution
$( \beta, \pi, \nu, \mu )$ of the dual $\mathrm{DP}( Q )$.
Indeed, the reduced cost \eqref{eq:reduced} of a sequence $c$
of agent $i$ can be expanded as
\begin{equation}
\label{eq:reduced-rent}
  \bar{J}_{i}( c )
  = w_{i}\, J_{i}( c )
  + \sum_{k=0}^{T-1} r_{i, k}\bigl( \roll_{i}( c ) \bigr)
  - \pi_{i} ,
\end{equation}
where $w_{i} := \sum_{\ell} \beta_{\ell}\, \alpha_{\ell i}$ is
the weight of agent $i$ in the aggregator under the dual, and
$r_{i, k}( \zeta )$ denotes the rent that a jet curve $\zeta$
of agent $i$ pays over the window $[k, k+1]$,
\begin{equation}
\label{eq:rent-window}
  r_{i, k}( \zeta )
  := \sum_{m \,:\, I_{m} \subseteq [k,\, k+1]}\
  \sum_{\kappa \in \occ_{i}( \zeta, m )} \mu_{i, \kappa, m} ,
\end{equation}
for the \emph{rent}
\begin{equation}
\label{eq:rent}
  \mu_{i, \kappa, m}
  := \sum_{\substack{r = ( g, m, \kappa' ) \in \res \\
  i \in e,\ \kappa'_{i} = \kappa}} \mu_{r}
\end{equation}
that agent $i$ pays for occupying the cell $\kappa$ at the
substep $m$. The rent sums the duals of the occupancy rows that
assign the cell $\kappa$ to agent $i$. In linear programming,
an optimal dual variable measures the marginal change of the
optimal value per unit increase of the right-hand side of its
row, and thus the value of the resource that the row constrains
\cite{bertsimas1997introduction}. Accordingly, we refer to the
weight $w_{i}$, the rents $\mu_{i, \kappa, m}$, and $\pi_{i}$
as the \emph{prices} faced by agent $i$. The rents price the
conflicts that an occupancy may cause with the peers, and the
reduced cost \eqref{eq:reduced-rent} evaluates a sequence of
agent $i$ at these prices.

The goal of column generation is a pool $Q$ whose restricted
master attains the relaxation over all columns, that is,
$\mathrm{LP}( Q ) = \mathrm{LP}( \mathcal{C} )$. Such a pool
$Q \subseteq \mathcal{C}$ may fall short of this requirement
only if columns outside it, in $\mathcal{C} \setminus Q$, can
lower $\mathrm{LP}( Q )$. By a standard result of column
generation \cite[Sec.~2.3]{lubbecke2005column}, this requires
the existence of a column $( i, c ) \in \mathcal{C} \setminus Q$
with a negative reduced cost $\bar{J}_{i}( c ) < 0$ under every
optimal solution of $\mathrm{DP}( Q )$. The pool can thus be
tested through the \emph{pricing problem} of each agent $i$,
which minimizes the reduced cost over its sequences that obey
the individual predicates,
\begin{equation}
\label{eq:pricing}
  \bar{J}_{i}^{*}
  := \min \bigl\{ \bar{J}_{i}( c ) : c \in \A_{i}^{T},\
  g( \roll_{i}( c ) ) \ge 0
  \ \forall g \in \preds_{i} \bigr\} .
\end{equation}

This test follows from a relationship between the dual
objective $\mathbf{1}^{\top} \pi - b^{\top} \mu$ of
$\mathrm{DP}( Q )$, the optimal values $\bar{J}_{i}^{*}$ of the
pricing problems \eqref{eq:pricing}, and
$\mathrm{LP}( \mathcal{C} )$. Ultimately, however, we measure
the pool against the solutions of the JOCP \eqref{eq:jocp-flat},
which are jet curves that command sequences need not generate.
The dual objective is similarly related to the joint cost of
such curves $\zeta^{1}, \dots, \zeta^{N}$, provided that they
obey the
occupancy rows of the ILP \eqref{eq:ilp-canon}, that is,
\emph{occupy no forbidden tuple} of cells,
\begin{equation}
\label{eq:noforbid}
  \sum_{j \in e}
  \mathbf{1}\bigl[ \kappa_{j} \in \occ_{j}( \zeta^{j}, m ) \bigr]
  \le | e | - 1
  \quad \forall ( g, m, \kappa ) \in \res .
\end{equation}
The reduced cost can be similarly extended to a jet curve
$\zeta$ of agent $i$ as
\begin{equation}
\label{eq:reduced-curve}
  \bar{J}_{i}( \zeta )
  := w_{i}\, J_{i}( \zeta )
  + \sum_{k=0}^{T-1} r_{i, k}( \zeta )
  - \pi_{i} ,
\end{equation}
so that, by \eqref{eq:reduced-rent},
$\bar{J}_{i}( \roll_{i}( c ) ) = \bar{J}_{i}( c )$ for every
command sequence $c \in \A_{i}^{T}$. We state the bounds and
the pool sufficiency test in the following proposition.

\begin{proposition}
\label{prop:lagrangian}
Let $( \beta, \pi, \nu, \mu )$ be a feasible solution of
$\mathrm{DP}( Q )$.

\noindent (i) If the jet curves
$\zeta^{1}, \dots, \zeta^{N}$ of the agents occupy no
forbidden tuple, then
\[
  \mathbf{1}^{\top} \pi - b^{\top} \mu
  + \sum_{i} \bar{J}_{i}( \zeta^{i} )
  \;\le\; \Phi( \zeta^{1}, \dots, \zeta^{N} ) .
\]

\noindent (ii) If a locally feasible command
sequence $c_{i}$ of each agent $i$ attains the optimal value
$\bar{J}_{i}( c_{i} ) = \bar{J}_{i}^{*}$ of the pricing problem
\eqref{eq:pricing}, then
\[
  \mathbf{1}^{\top} \pi - b^{\top} \mu
  + \sum_{i} \bar{J}_{i}( c_{i} )
  \;\le\; \mathrm{LP}( \mathcal{C} )
  \;\le\; \mathrm{LP}( Q ) .
\]

\noindent (iii) If the dual solution
$( \beta, \pi, \nu, \mu )$ is optimal, i.e.,
$\mathrm{DP}( Q ) = \mathbf{1}^{\top} \pi - b^{\top} \mu$,
and $\bar{J}_{i}^{*} = 0$ for all $i$, then
$\mathrm{LP}( Q ) = \mathrm{LP}( \mathcal{C} )$.
\end{proposition}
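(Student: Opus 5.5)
The plan is to prove the three parts in order. Part (i) is a Lagrangian (weak-duality) inequality evaluated at arbitrary curves. Part (ii) comes from shifting the selection duals $\pi$ by the pricing optima, which makes them dual feasible for the full program $\mathrm{DP}(\mathcal{C})$. Part (iii) then follows by combining (ii) with $\mathrm{DP}(Q)=\mathrm{LP}(Q)$.

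\emph{Part (i).} First, $\beta\ge 0$ with $\mathbf{1}^{\top}\beta=1$ is a probability vector, so the maximum in \eqref{eq:aggregator} dominates the $\beta$-average:
\[
\Phi(\zeta^{1},\dots,\zeta^{N})\ \ge\ \sum_{\ell}\beta_{\ell}\sum_{i}\alpha_{\ell i}J_{i}(\zeta^{i})=\sum_{i}w_{i}J_{i}(\zeta^{i}).
\]
Expanding $\sum_{i}\bar J_{i}(\zeta^{i})$ by \eqref{eq:reduced-curve}, the terms $\sum_i w_iJ_i$ cancel against this bound and $\sum_i\pi_i=\mathbf{1}^{\top}\pi$ cancels. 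It remains to show that the total rent is at most $b^{\top}\mu$. Since $h=1/n_s$, the substeps $I_m$ tile the windows, so each $m$ is counted in exactly one $r_{i,k}$ and $\sum_k r_{i,k}(\zeta^i)=\sum_m\sum_{\kappa\in\occ_i(\zeta^i,m)}\mu_{i,\kappa,m}$. Unfolding the rent \eqref{eq:rent} and exchanging sums gives
\[
\sum_{i}\sum_{k}r_{i,k}(\zeta^{i})=\sum_{r=(g,m,\kappa)\in\res}\mu_{r}\sum_{j\in e}\mathbf{1}\bigl[\kappa_{j}\in\occ_{j}(\zeta^{j},m)\bigr].
\]
Each inner sum is at most $|e|-1=b_r$ by \eqref{eq:noforbid}, and $\mu\ge0$, which yields the claim. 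I expect this exchange of sums to be the only delicate step. One has to check that a row $r$ contributes to agent $j$'s rent exactly when $\kappa_j$ is occupied, once per member and with no double counting across windows. Each $\kappa\in\occ_j$ appears once as a set element, so this should hold.

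\emph{Part (ii).} Set $\pi'_{i}:=\pi_{i}+\bar J_{i}^{*}$ and keep $\beta,\mu$. For every column $(i,c)\in\mathcal{C}$ that satisfies $\preds_i$, the dual row of \eqref{eq:dual} with $\nu=0$ reads $\bar J_{i}(c)-\bar J_{i}^{*}\ge0$, which holds by the definition \eqref{eq:pricing}. For a column that violates some $g\in\preds_i$, its $\mathbf{G}$ entry $g(\roll_i(c))$ is negative. The term $-\mathbf{G}^{\top}\nu$ then becomes positive, so choosing $\nu_{(i,g,c)}\ge0$ large enough restores the row. Together with the unchanged constraints $\mathbf{1}^{\top}\beta=1$, $\beta\ge0$, and $0\le\mu\le P\mathbf{1}$, this gives a feasible point of $\mathrm{DP}(\mathcal{C})$. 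Its objective is
\[
\mathbf{1}^{\top}\pi'-b^{\top}\mu=\mathbf{1}^{\top}\pi-b^{\top}\mu+\sum_i\bar J_i(c_i).
\]
Weak duality and strong duality for the feasible $\mathrm{LP}(\mathcal{C})$ bound this objective by $\mathrm{DP}(\mathcal{C})=\mathrm{LP}(\mathcal{C})$. The second inequality $\mathrm{LP}(\mathcal{C})\le\mathrm{LP}(Q)$ holds because lifting the restriction $\lambda_{(i,c)}=0$ for $(i,c)\notin Q$ only enlarges the feasible set of a minimization.

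\emph{Part (iii).} With $\bar J_{i}^{*}=0$ for all $i$ and $(\beta,\pi,\nu,\mu)$ optimal, the left-hand side in (ii) equals $\mathrm{DP}(Q)=\mathrm{LP}(Q)$. Part (ii) then gives $\mathrm{LP}(Q)\le\mathrm{LP}(\mathcal{C})\le\mathrm{LP}(Q)$, hence equality.
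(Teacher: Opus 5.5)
Your proof is correct. Parts (i) and (iii) coincide with the paper's argument: the $\beta$-average bound $\sum_i w_i J_i(\zeta^i)\le\Phi$, the reordering of the total rent into $\sum_{r}\mu_r\sum_{j\in e}\mathbf{1}[\kappa_j\in\occ_j(\zeta^j,m)]\le b^{\top}\mu$, and the sandwich $\mathrm{LP}(Q)\le\mathrm{LP}(\mathcal{C})\le\mathrm{LP}(Q)$.

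Part (ii) is packaged differently. The paper works on the primal side. It fixes a feasible $(\vartheta,\lambda,s)$ of $\mathrm{LP}(\mathcal{C})$ and bounds $\sum_{(i,c)}\lambda_{(i,c)}\bar J_i(c)$ above by $\vartheta+P\mathbf{1}^{\top}s-\mathbf{1}^{\top}\pi+b^{\top}\mu$ using the primal rows. It bounds the same sum below by $\sum_i\bar J_i^*$, because the rows $g(\roll_i(c))\lambda_{(i,c)}\ge0$ force $\lambda$ to vanish on locally infeasible sequences. You work on the dual side. You exhibit $(\beta,\pi+\bar J^*,\nu',\mu)$ as a feasible point of $\mathrm{DP}(\mathcal{C})$ and invoke weak duality, with the infeasible columns absorbed by large entries of $\nu'$ rather than by the primal $\mathbf{G}$ rows.

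The two routes give the same Lagrangian bound; the paper's inequality chain is exactly the weak-duality computation for your shifted point. Your version makes visible that the certificate is the objective of a dual-feasible point of the full master, so under (iii) the modified dual is in fact optimal for $\mathrm{DP}(\mathcal{C})$. The paper's version is self-contained and needs no construction of $\nu'$.

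Two small remarks. First, your construction of $\nu'$ relies on each row of $\mathbf{G}$ touching a single column, so that raising $\nu'_{(i,g,c)}$ disturbs no other dual row; this is worth stating. Second, weak duality alone already bounds the objective by $\mathrm{LP}(\mathcal{C})$, so the appeal to strong duality in (ii) is unnecessary.
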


\Cref{prop:lagrangian}(iii) thus suggests that each agent must
compute optimal command sequences under prices
$( w_{i}, \mu_{i, \kappa, m}, \pi_{i} )$ set by the current
pool $Q$ to either certify its sufficiency or enlarge it
appropriately. We now show that this computation can be
formulated in the structure of a single-agent problem
\eqref{eq:socp} and that a dispersive tree pruned at some
resolution $s_{i}$ under a specific running cost can search for
a near-optimal command sequence in polynomial time. Note that
the reduced cost is additive over the windows,
so that the \emph{priced running cost}
\begin{equation}
\label{eq:priced-running}
  \Sigma^{\mu}_{i}( \mathbf{a} )
  := w_{i}\, \Sigma_{i}( \mathbf{a} )
  + \sum_{j=1}^{k} r_{i, j-1}\bigl( \roll_{i}( \mathbf{a} ) \bigr)
  - \pi_{i} ,
  \qquad \mathbf{a} \in \A_{i}^{k}
\end{equation}
is consistent with the running cost of \cref{def:mincost}.
At depth $T$, $\Sigma^{\mu}_{i}( \mathbf{a} )
= \bar{J}_{i}( \mathbf{a} )$, thus reproducing the reduced cost.
The following proposition extends \cref{thm:single} to show
that local dispersive tree search can solve \eqref{eq:pricing}
near-optimally.

\begin{proposition}
\label{prop:priced}
Suppose that $( \beta, \pi, \nu, \mu )$ is a feasible
solution of the dual $\mathrm{DP}( Q )$, and that
$z^{i}(\cdot)$ is a feasible solution of \eqref{eq:socp} for
agent $i$, generated by a control
$u^{i} \in \mathcal{U}^{\varepsilon_{i}}_{M_{i}}( x^{i}_{\mathrm{init}} )$
with the margin
\[
  g\bigl( \jet z^{i} \bigr) \;\ge\; L_{\preds}\, \rho
  \qquad \forall\, g \in \preds_{i} ,
\]
at the search resolution $\rho$ of \eqref{eq:rho}.
For the dispersive tree generated by $\A_{i}$ at the pruning
radius $s_{i}$ with the priced running cost
\eqref{eq:priced-running}, let $\hat{J}_{i}$ denote the reduced
cost of its best command sequence at depth $T$.
Then $\bar{J}_{i}^{*} \le \hat{J}_{i}$, and the tree retains
a command sequence
$c_{i} \in \A_{i}^{T}$, feasible for \eqref{eq:socp}, with
\[
\begin{aligned}
  \hat{J}_{i} \;\le\; \bar{J}_{i}( c_{i} )
  \;\le\;\; & w_{i} \bigl( J_{i}\bigl( \jet z^{i} \bigr)
  + L_{J}\, \rho \bigr) \\
  & + \sum_{m}
  \sum_{\kappa \in \occ^{\rho}_{i}( \jet z^{i}, m )}
  \mu_{i, \kappa, m}
  \; - \; \pi_{i} ,
\end{aligned}
\]
where $\occ^{\rho}_{i}$ inflates the occupancy \eqref{eq:occ}
by $\rho$,
\[
\begin{aligned}
  \occ^{\rho}_{i}( \zeta, m )
  := \bigl\{\, \kappa \in \cells_{i} \;:\;
  & d\bigl( \xi, \zeta( t ) \bigr) \le \rho \\
  & \textup{for some}\ \xi \in \kappa ,\ t \in I_{m}
  \,\bigr\} .
\end{aligned}
\]
\end{proposition}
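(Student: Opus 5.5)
The plan is to split the statement into its three inequalities and treat them separately.

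\emph{First inequality.} The bound $\bar{J}_{i}^{*} \le \hat{J}_{i}$ is immediate. Every sequence retained by the dispersive tree at depth $T$ passed the windowwise tests of the predicates $g_{k}$. By \eqref{eq:winsep}, this gives $g(\roll_{i}(\mathbf{a})) = \min_{k} g_{k} \ge 0$ for all $g \in \preds_{i}$, so the sequence is admissible in \eqref{eq:pricing}. Its priced running cost at depth $T$ equals $\bar{J}_{i}(\mathbf{a})$ by \eqref{eq:priced-running} and \eqref{eq:reduced-rent}. The same consistency gives $\hat{J}_{i} \le \bar{J}_{i}(c_{i})$ for any $c_{i}$ retained at depth $T$, since $\hat{J}_{i}$ is the minimum over those sequences.

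\emph{Construction of $c_{i}$.} The main work is producing $c_{i}$ and bounding its reduced cost. I will not invoke \cref{thm:single} as a black box. Its conclusion is stated for the unpriced cost $J$, whereas the tree here prunes by $\Sigma^{\mu}_{i}$, so cell representatives may differ. Instead I will reopen the mechanism behind \cref{thm:single}, which I take to be the following. There is a reference sequence $\mathbf{a}^{\circ} \in \A_{i}^{T}$ in the complete tree whose rollout tracks $\jet z^{i}$ within $\rho_{i} \le \rho$ on every window. At each depth, the surviving representative of the cell occupied by the tracking node can be re-extended so that it keeps tracking $\jet z^{i}$ within $\rho$. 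The dispersion $\delta_{\A_{i}}$ and the pruning radius $s_{i}$ enter through the constant $C_{B,i}$, with the $\varepsilon$-erosion and $2M$-smoothness keeping the perturbed controls admissible. Feasibility of each window then follows from the margin $L_{\preds}\rho$ together with the Lipschitz bound \eqref{eq:winsep}.

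\emph{Comparison against the reference rollout.} Pruning retains, per cell, the node of least priced running cost. I will therefore run the same induction as in \cref{thm:single}, with the running cost replaced by $\Sigma^{\mu}_{i}$. The claim to establish is that the tree contains a depth-$T$ sequence $c_{i}$ with
\[
  \Sigma^{\mu}_{i}(c_{i}) \le \Sigma^{\mu}_{i}(\mathbf{a}^{\circ}) .
\]
Here $\mathbf{a}^{\circ}$ is a sequence whose rollout stays within $\rho$ of $\jet z^{i}$ throughout. The key observation is that any such rollout $\zeta$ satisfies $\occ_{i}(\zeta, m) \subseteq \occ^{\rho}_{i}(\jet z^{i}, m)$ for every $m$, directly from the definition of the inflated occupancy. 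Since $\mu \ge 0$, the rents then obey
\[
  \sum_{k} r_{i,k}(\zeta) \le \sum_{m} \sum_{\kappa \in \occ^{\rho}_{i}(\jet z^{i}, m)} \mu_{i,\kappa,m} .
\]
Lipschitz continuity of $J_{i}$ with constant $L_{J}$ gives $J_{i}(\zeta) \le J_{i}(\jet z^{i}) + L_{J}\rho$. Multiplying by $w_{i} \ge 0$, which holds because $\beta, \alpha \ge 0$, and subtracting $\pi_{i}$ yields the stated bound.

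\emph{Main obstacle.} The hard step is the induction under the priced cost. The rent term is not Lipschitz in the trajectory: it jumps as cells are entered. So the exchange argument of \cref{thm:single}, which bounds the cost of the surviving representative by the cost of the tracking node plus a Lipschitz error, cannot be reused verbatim. My first attempt avoids comparing rents along different curves. The induction will compare only against the corridor bound. Let $B_{k}$ denote the corridor bound truncated at depth $k$, namely $w_{i}$ times the partial cost of $\jet z^{i}$ on windows up to $k$ plus $L_{J}\rho$ prorated, plus the inflated rents on substeps up to $k$. The invariant is then: at each depth $k$, the tree retains some node within the tracking tolerance of $\jet z^{i}(k)$ whose priced running cost is at most $B_{k}$. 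Any competitor that displaced it in its cell has priced cost no larger. The re-extension from the representative is exactly what \cref{thm:single} provides, and it keeps the curve inside the $\rho$-corridor. Hence the rents of the next window are again dominated by the inflated occupancy, and the stage cost by the Lipschitz bound, which closes the induction.
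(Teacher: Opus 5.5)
Your final plan, the corridor-bound invariant in your last paragraph, is essentially the paper's proof. The paper's induction \eqref{eq:ind-priced} does the same things: it re-extends the surviving cell representative by the margin-correction and dispersion step from the proof of \cref{thm:single}, bounds the new window's rent by the $\rho$-inflated occupancy using $\mu \ge 0$ and its stage cost by the Lipschitz bound (giving the prorated $L_{J}\rho$), and passes through pruning because the retained representative has no greater priced cost; your handling of $\bar{J}_{i}^{*} \le \hat{J}_{i} \le \bar{J}_{i}(c_{i})$ also matches the paper. Do discard the intermediate target $\Sigma^{\mu}_{i}(c_{i}) \le \Sigma^{\mu}_{i}(\mathbf{a}^{\circ})$ against a fixed complete-tree sequence: pruning does not guarantee it, since a representative's continuation can enter rented cells that $\mathbf{a}^{\circ}$ avoids, and the corridor bound makes it unnecessary.
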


\Cref{prop:priced} thus shows that the pricing problem
\eqref{eq:pricing} of each agent $i$ can be solved
near-optimally, with the value $\hat{J}_{i}$ approximating its
optimal value $\bar{J}_{i}^{*}$.  While the test
$\bar{J}_{i}^{*} = 0$ of \cref{prop:lagrangian}(iii) on the complete
tree $\A_{i}^{T}$ is intractable, the theorem below shows that the
values $\hat{J}_{i}$ of the forward trees pruned at the search
resolution $\rho$ suffice to certify how close the
pool $Q$ comes to the solutions of the JOCP 
\eqref{eq:jocp-flat} that are robust at the search resolution $\rho$.

\begin{theorem}
\label{thm:certificate}
Suppose that $( \beta, \pi, \nu, \mu )$ is a feasible
solution of the dual $\mathrm{DP}( Q )$, and that
$z^{1}(\cdot), \dots, z^{N}(\cdot)$ is a feasible solution of
\eqref{eq:jocp-flat}, generated by controls
$u^{i} \in \mathcal{U}^{\varepsilon_{i}}_{M_{i}}( x^{i}_{\mathrm{init}} )$,
with the margins
\[
\begin{gathered}
  g\bigl( \jet z^{i} \bigr) \ge L_{\preds}\, \rho
  \qquad \forall\, g \in \preds_{i} ,\ \forall\, i , \\
  g\bigl( ( \jet z^{j} )_{j \in e} \bigr)
  \ge L_{\preds}\, ( \rho + S + M h )
  \qquad \forall\, g \in \preds_{e} ,\ \forall\, e ,
\end{gathered}
\]
at the search resolution $\rho$ of \eqref{eq:rho}.
For each agent $i$, let $\hat{J}_{i}$ denote the reduced
cost of the best command sequence at depth $T$ of the dispersive
tree generated by $\A_{i}$ at the pruning radius $s_{i}$ with
the priced running cost \eqref{eq:priced-running}.

\noindent (i) It holds that
\[
  \mathbf{1}^{\top} \pi - b^{\top} \mu
  + \sum_{i} \hat{J}_{i}
  \;\le\;
  \Phi\bigl( \jet z^{1}, \dots, \jet z^{N} \bigr)
  + L_{\Phi}\, \rho .
\]

\noindent (ii) If the dual solution is optimal, i.e.,
$\mathrm{DP}( Q ) = \mathbf{1}^{\top} \pi - b^{\top} \mu$, then
\[
  \mathrm{LP}( Q ) + \sum_{i} \hat{J}_{i}
  \;\le\;
  \Phi\bigl( \jet z^{1}, \dots, \jet z^{N} \bigr)
  + L_{\Phi}\, \rho ,
\]
and in particular
$\mathrm{LP}( Q ) \le
\Phi( \jet z^{1}, \dots, \jet z^{N} ) + L_{\Phi}\, \rho$ if
$\hat{J}_{i} \ge 0$ for all $i$.
\end{theorem}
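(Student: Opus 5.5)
The proof combines \cref{prop:priced}, applied agent by agent, with \cref{prop:lagrangian}(i) applied to a set of inflated reference curves. The approach is to compare $\hat{J}_{i}$ with the reduced cost of the reference $\jet z^{i}$ plus the extra rent of the $\rho$-inflated occupancy, and then absorb that extra rent using the joint margins.

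\textbf{Step 1 (per-agent bound).} Each $z^{i}$ is feasible for its single-agent problem \eqref{eq:socp} and has the individual margin $L_{\preds}\rho$. \Cref{prop:priced} therefore gives
\[
  \hat{J}_{i} \le w_{i}\bigl(J_{i}(\jet z^{i}) + L_{J}\rho\bigr)
  + \sum_{m}\sum_{\kappa \in \occ^{\rho}_{i}(\jet z^{i},m)} \mu_{i,\kappa,m} - \pi_{i}.
\]
Summing over $i$ and using $\sum_{i} w_{i} L_{J} = \sum_{\ell}\beta_{\ell}\sum_{i}\alpha_{\ell i}L_{J} \le L_{\Phi}$ (since $\beta \ge 0$ and $\mathbf{1}^{\top}\beta = 1$), it remains to show
\[
  \mathbf{1}^{\top}\pi - b^{\top}\mu + \sum_{i}\Bigl(w_{i}J_{i}(\jet z^{i}) + \sum_{m}\sum_{\kappa\in\occ^{\rho}_{i}}\mu_{i,\kappa,m} - \pi_{i}\Bigr) \le \Phi(\jet z^{1},\dots,\jet z^{N}).
\]

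\textbf{Step 2 (redo the Lagrangian bound with inflated occupancy).} I will not invoke \cref{prop:lagrangian}(i) directly, since the inflated occupancy is not the occupancy of a single curve. Instead I will repeat its argument.

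First, $\sum_{i} w_{i}J_{i} = \sum_{\ell}\beta_{\ell}\sum_{i}\alpha_{\ell i}J_{i} \le \Phi$, because $\beta$ is a probability vector.

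Second, I expand the rent term using \eqref{eq:rent}:
\[
  \sum_{i}\sum_{m}\sum_{\kappa\in\occ^{\rho}_{i}}\mu_{i,\kappa,m} = \sum_{r=(g,m,\kappa)\in\res}\mu_{r}\,\#\{\, j\in e : \kappa_{j}\in\occ^{\rho}_{j}(\jet z^{j},m)\,\}.
\]
It then suffices to show that, for every forbidden row, this count is at most $|e|-1 = b_{r}$. Since $\mu \ge 0$, the rent term is then bounded by $b^{\top}\mu$, and it cancels against $-b^{\top}\mu$.

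\textbf{Step 3 (main obstacle: no forbidden tuple is fully inflated-occupied).} Suppose, for contradiction, that every $j\in e$ has $\kappa_{j}\in\occ^{\rho}_{j}(\jet z^{j},m)$. Then for each $j$ there are a point $\xi_{j}\in\kappa_{j}$ and a time $t_{j}\in I_{m}$ with $d(\xi_{j},\jet z^{j}(t_{j}))\le\rho$.

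The delicate point is that the times $t_{j}$ differ across agents. To handle this, fix a common time $t\in I_{m}$. Because $z^{j}$ comes from an $M$-smooth control, the jet moves at speed at most $M$ in the maximum norm, so $d(\jet z^{j}(t),\jet z^{j}(t_{j}))\le Mh$. Hence
\[
  d(\xi_{j},\jet z^{j}(t)) \le \rho + Mh \quad\text{for all } j\in e.
\]
The cells have diameter at most $S$, and $h_{t}$ is $L_{\preds}$-Lipschitz in the maximum of the members' distances. For any choice $\xi'_{j}\in\kappa_{j}$ this gives
\[
  h_{t}\bigl((\xi'_{j})_{j\in e}\bigr) \ge h_{t}\bigl((\jet z^{j}(t))_{j\in e}\bigr) - L_{\preds}(\rho+S+Mh) \ge 0,
\]
by the joint margin. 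This holds uniformly over $t\in I_{m}$ and $\xi'_{j}\in\kappa_{j}$, so the infimum in \eqref{eq:forbidden} is nonnegative. That contradicts $\kappa\in F_{m}(g)$, so the count is at most $|e|-1$.

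I expect this step to need care over exactly where the factor $Mh$ (rather than $2Mh$) enters. The point is that the reference curves are $M$-smooth, not merely $2M$-smooth. This explains why the margin in the theorem is $L_{\preds}(\rho+S+Mh)$ rather than the discretization margin $\gamma_{S}$ of \cref{prop:ilp}.

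\textbf{Step 4 (conclude).} Combining Steps 1 to 3 gives (i). For (ii), substitute $\mathrm{DP}(Q) = \mathbf{1}^{\top}\pi - b^{\top}\mu$ and strong duality $\mathrm{DP}(Q) = \mathrm{LP}(Q)$. The final claim follows by dropping the nonnegative $\hat{J}_{i}$.
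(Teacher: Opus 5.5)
Your proposal is correct and follows the paper's proof essentially step for step. It applies \cref{prop:priced} agent by agent, bounds $\sum_i w_i J_i$ and $\sum_i w_i L_J$ by convex combinations, and shows by contradiction that no forbidden tuple lies in the $\rho$-inflated occupancies, using the same $S + \rho + M h$ triangle inequality. It then reorders the rents to get the bound $b^{\top}\mu$ and uses strong duality for (ii); the only cosmetic difference is that you show the infimum in \eqref{eq:forbidden} is nonnegative for all $t \in I_m$ and all points of the cells, where the paper picks a witness with $h_t < 0$, which is equivalent.
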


The values $\hat{J}_{i}$ thus either exhibit columns that
enlarge the pool $Q$ or certify it. If
$\hat{J}_{i} < 0$, the command sequence of agent $i$ that
attains $\hat{J}_{i}$ has a negative reduced cost and is to be
added to the pool. If $\hat{J}_{i} \ge 0$ for all $i$,
\cref{thm:certificate}(ii) shows that the pool is certified up
to $L_{\Phi}\, \rho$ relative to the solutions of the JOCP that
are robust at the search resolution $\rho$. This result allows
us to develop a decentralized anytime protocol, presented
below, in which the agents select an agreement from the pool
with its cost certified against the JOCP.

\subsection{Protocol and Real-Time Implementation}
\label{sec:protocol}

We now present the WarpWeaver protocol, which implements decentralized
column generation among the agents as a real-time negotiation
process in a receding-horizon manner. In \cref{alg:dcg}, we
state the negotiation from the perspective of agent $i$, and
\cref{fig:protocol} depicts the same. Each agent runs an
identical loop onboard and communicates by broadcasting
generated columns as well as endorsing joint solutions over the
joint pool. Each agent broadcasts to all others, so that every agent holds
the same pool and solves the same restricted master.

In \cref{alg:dcg}, a joint solution is \emph{globally endorsed}
once every agent has published or endorsed it, and the best
such solution has the least cost $\Phi$, with ties broken by a
common rule. An iteration is \emph{quiescent} once all agents
hold the same pool $Q$, compute $\hat{J}_{i} \ge 0$ under the
common optimal solution of $\mathrm{DP}( Q )$, and endorse a
common solution of $\mathrm{JCP}( Q )$. Since quiescence cannot
be guaranteed within the real-time deadlines, the agents commit at the
pick time $t_{n}$ to the best globally endorsed solution, whose cost is
certified by \cref{prop:anytime}(ii) below only if the iteration is
quiescent. An agent sends no
message after time $t_{n} - \delta_{\mathrm{comm}}$, even within a pass of the loop
begun before it. The pick margin
$\tau$ leaves time to commit before
the committed windows begin at time $( n + 1 )\, \Delta_{s}$.
\Cref{prop:anytime} formalizes the properties of
WarpWeaver stated above.

\begin{figure}[t]
\centering
\includegraphics[width=\columnwidth]{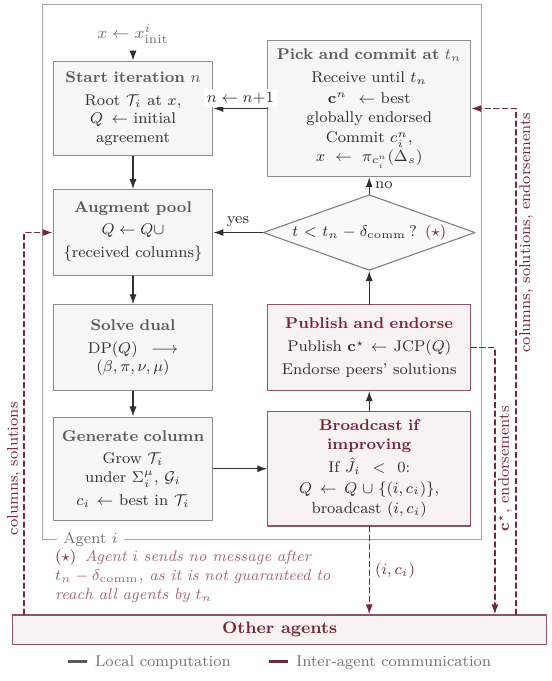}
\caption{The WarpWeaver protocol
(\cref{alg:dcg}), visualized as a flowchart at the onboard
computer of agent $i$. Each agent asynchronously runs an
identical loop onboard.}
\label{fig:protocol}
\end{figure}

\begin{algorithm}[!t]
\caption{WarpWeaver at agent $i$}
\label{alg:dcg}
\begin{algorithmic}[1]
\Require Initial state $x^{i}_{\mathrm{init}}$, command set
  $\A_{i}$, cost $J_{i}$, predicates $\preds_{i}$, stride $\Delta_{s}$,
  pick margin $\tau$, delay bound $\delta_{\mathrm{comm}}$
\State $x \gets x^{i}_{\mathrm{init}}$
\For{$n = 0, 1, 2, \dots$}
  \Comment{Pick time $t_{n} := ( n + 1 )\, \Delta_{s} - \tau$}
  \State Root the dispersive tree $\cl_{i}$ at $x$, and
    $Q \gets$ the columns of the initial agreement
  \While{$t < t_{n} - \delta_{\mathrm{comm}}$}
    \Comment{Nothing sent after $t_{n} - \delta_{\mathrm{comm}}$}
    \State $Q \gets Q \cup \{$received columns$\}$
    \State $( \beta, \pi, \nu, \mu ) \gets$ optimal solution
      of $\mathrm{DP}( Q )$
    \State Grow $\cl_{i}$ subject to $\Sigma^{\mu}_{i}$ and
      $\preds_{i}$
    \State $c_{i} \gets$ the best sequence in $\cl_{i}$
    \If{$\hat{J}_{i} < 0$}
      \State $Q \gets Q \cup \{ ( i, c_{i} ) \}$, and
        broadcast $( i, c_{i} )$
    \EndIf
    \State Publish $\mathbf{c}^{\star} \gets$  best solution of
      $\mathrm{JCP}( Q )$ found, if any
    \State Endorse peers' solutions with columns in $Q$
  \EndWhile
  \State Receive until $t_{n}$
  \State $\mathbf{c}^{n} \gets$ best globally endorsed solution
  \State Commit $c^{n}_{i}$ on $[ ( n + 1 ) \Delta_{s}, ( n + 2 ) \Delta_{s} ]$
  \State $x \gets \tra_{c^{n}_{i}}( \Delta_{s} )$
\EndFor
\end{algorithmic}
\end{algorithm}

\begin{proposition}
\label{prop:anytime}
Suppose that the following hold.

\smallskip
\noindent (a) \emph{Bounded delay.} Every message broadcast by
an agent reaches every other agent within time $\delta_{\mathrm{comm}}$.

\smallskip
\noindent (b) \emph{Common dual.} Agents that share a pool
compute the same optimal solution of $\mathrm{DP}( Q )$.

\smallskip
\noindent (c) \emph{Recursive feasibility.} The pool of every
iteration is initialized with an agreement endorsed by every
agent.

\smallskip
\noindent Then the following hold.

\smallskip
\noindent (i) \emph{Completeness and consistency.} At every
pick time $t_{n}$, all agents commit to the same joint solution
$\mathbf{c}^{n}$.

\smallskip
\noindent (ii) \emph{Certificate at quiescence.} If iteration
$n$ is quiescent with the pool $Q$ before time
$t_{n} - \delta_{\mathrm{comm}}$,
then, for every solution $z^{1}(\cdot), \dots, z^{N}(\cdot)$ of
\eqref{eq:jocp-flat} with the margins of
\cref{thm:certificate},
\[
  \Phi( \mathbf{c}^{n} )
  - \Phi\bigl( \jet z^{1}, \dots, \jet z^{N} \bigr)
  \;\le\; \Gamma_{n} + L_{\Phi}\, \rho ,
\]
where $\Gamma_{n} := \Phi( \mathbf{c}^{n} ) - \mathrm{LP}( Q )$.

\smallskip
\noindent Therefore, the cost of the solution committed at
quiescence exceeds the $\rho$-robust optimum by no more than its excess
$\Gamma_{n}$ over the restricted master, up to the resolution loss
$L_{\Phi}\, \rho$.
\end{proposition}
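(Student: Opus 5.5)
We will prove the two parts separately. Part (i) is a message-passing argument that relies on assumptions (a) to (c). Part (ii) is a direct consequence of \cref{thm:certificate}(ii).

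For (i), we first show that the set of globally endorsed solutions is the same at every agent by time $t_{n}$. By (c), the initial agreement of iteration $n$ is endorsed by every agent, so this set is nonempty at every agent. An agent sends nothing after $t_{n} - \delta_{\mathrm{comm}}$. By (a), every message sent before that time, whether a column, a published solution, or an endorsement, reaches every peer before $t_{n}$, and each agent keeps receiving until $t_{n}$. Hence at the pick time every agent holds the same multiset of publications and endorsements. The predicate ``published or endorsed by every agent'' is then evaluated on identical data at every agent. The best such solution, chosen by least $\Phi$ with the common tie-break rule, is therefore the same joint solution $\mathbf{c}^{n}$ everywhere. Since $\mathbf{c}^{n}$ is a solution of $\mathrm{JCP}(Q)$ for some pool, it is an agreement, and each agent commits its own component of it. The step needing the most care is showing that no message is ``in flight'' across the deadline. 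We plan to handle it by noting that the last send time is $t_{n} - \delta_{\mathrm{comm}}$, which holds even for a loop pass begun earlier.

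For (ii), fix a quiescent iteration with pool $Q$. By (b), all agents compute the same optimal dual $(\beta,\pi,\nu,\mu)$, so $\mathrm{DP}(Q) = \mathbf{1}^{\top}\pi - b^{\top}\mu$. By quiescence, $\hat{J}_{i} \ge 0$ for all $i$. We apply \cref{thm:certificate}(ii) to any solution $z^{1:N}$ with the stated margins and obtain $\mathrm{LP}(Q) \le \Phi(\jet z^{1},\dots,\jet z^{N}) + L_{\Phi}\rho$.

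Next we check that $\Phi(\mathbf{c}^{n})$ is well defined and equals the cost of the committed joint solution. Quiescence means a common solution of $\mathrm{JCP}(Q)$ is endorsed by all, and this endorsement occurs before $t_{n} - \delta_{\mathrm{comm}}$, so it is globally endorsed at $t_{n}$ by part (i). Then $\mathbf{c}^{n}$ is the best globally endorsed solution, which is a finite-cost agreement. We add and subtract $\mathrm{LP}(Q)$:
\[
\Phi(\mathbf{c}^{n}) - \Phi(\jet z^{1},\dots,\jet z^{N}) = \Gamma_{n} + \mathrm{LP}(Q) - \Phi(\jet z^{1},\dots,\jet z^{N}) \le \Gamma_{n} + L_{\Phi}\rho .
\]
This proves (ii).

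We expect the main obstacle to be the asynchrony in part (i). Endorsements arrive in different orders at different agents, and an agent may publish a solution whose columns a peer has not yet pooled. To handle this, we will argue that endorsement only requires the columns to be in the peer's pool. Every broadcast column arrives within $\delta_{\mathrm{comm}}$, but an endorsement that a peer never issues simply prevents global endorsement at all agents alike. Consistency therefore holds regardless of which solutions end up globally endorsed, since the decision is a deterministic function of the common received data.
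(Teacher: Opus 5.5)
Your proposal is correct and follows essentially the same route as the paper. For (i), nonemptiness comes from (c), the send cutoff at $t_{n}-\delta_{\mathrm{comm}}$ together with (a) gives every agent identical columns and endorsements at $t_{n}$, and the pick is a deterministic function of that common data; for (ii), the common optimal dual from (b), $\hat{J}_{i}\ge 0$ at quiescence, and Theorem~\ref{thm:certificate}(ii) give the bound once you add and subtract $\mathrm{LP}(Q)$, and the paper's extra observations that the quiescent pool is final and that every globally endorsed solution is feasible for $\mathrm{JCP}(Q)$ are not needed for the inequality, since $\Gamma_{n}$ is defined relative to $Q$.
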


\begin{remark}
(i) Our implementation ensures the common dual by a canonical
restricted master and a deterministic solver~\cite{huangfu2018highs}.
$\mathrm{JCP}( Q )$, on the other hand, is solved by depth-first
branch-and-bound over the agents. We use
$P = \max_{\ell,\, ( i, c ) \in Q} \alpha_{\ell i}\, J_{i}( c )$,
the largest weighted cost of a command sequence in the pool, to drive
the column generation.

\smallskip
\noindent (ii) Recursive feasibility and its enforcement are specific
to the JOCP at hand. In practice, if recursive feasibility is
lost, computational speed becomes essential for completeness,
so that a globally endorsed solution can be found before the
commit is due.

\smallskip
\noindent (iii)  In our simulations,
$\Gamma_{n} \le 0$ at each of the $1502$ quiescent iterations 
(\cref{tab:dynobench,tab:ego-fan,tab:scale-rt,tab:scale-tq}), so that
\cref{prop:anytime}(ii) held with the resolution loss $L_{\Phi}\, \rho$ alone. Note that $\Gamma_{n} < 0$ occurs as the joint solution
$\mathbf{c}^{n}$ is subjected to the exact joint predicates while
$\mathrm{LP}( Q )$ carries the conservative occupancy rows.
\end{remark}

\begin{table}[t]
\caption{Platforms, command sets, and search parameters. Each command set also contains a hold and, for the integrators, a brake toward rest~\cite{deshpande2026DFT}. $G_{n}[ l, u ]$: $n$-level grid on $[ l, u ]$; $e_{\theta} = ( \cos\theta, \sin\theta )$. Triple integrator at $x = ( p, v, a )$: $j^{1}_{\bar v} = -a / \Delta \pm 4 \lambda_{1} ( \bar v - v - \Delta a / 2 ) / \Delta^{2}$, with $+$ on the first half window, and $j^{2}_{\bar v} = 2 \lambda_{2} ( \bar v - v - \Delta a ) / \Delta^{2}$, with $\lambda_{1}, \lambda_{2} \le 1$ maximal within the bounds; $V( x ) = \{ \sigma ( \cos\phi\, e_{\psi + \theta}, \sin\phi ) \}$, $\sigma \in \{ \frac{4}{3}, \frac{8}{3}, 4 \}$, $\theta \in \{ 0, \pm 5^{\circ}, \pm 10^{\circ}, \pm 20^{\circ} \}$, $\phi \in \{ 0, \pm 20^{\circ} \}$, $\psi$ the heading of $v$. X500: attitude $q$, body axis $b_{3}$, body rate $\omega$, rotor thrusts $\eta$ (N), mixer $B$, motor time constant $t_{m}$. $\Delta$: window length; $T$: horizon in windows; $\Delta_{s}$: stride; $s_{i}$: pruning radius; $\varepsilon_{g}$: goal tolerance; $S$: cell diameter; $n_{s}$: substeps per window.}
\label{tab:platforms}
\centering
\footnotesize
\renewcommand{\arraystretch}{1.2}
\setlength{\tabcolsep}{2.5pt}
\begin{tabularx}{\columnwidth}{|>{\raggedright\arraybackslash}p{1.05cm}|>{\raggedright\arraybackslash}X|>{\raggedright\arraybackslash}p{1.85cm}|>{\raggedright\arraybackslash}p{2.75cm}|}
\hline
 & $f_{i}$, $U_{i}$ & $E_{i}$; $R_{i}$; $B_{i}$ (m) & $\A$ \\
\hline
Unicycle
  & $\dot{p} = v e_{\theta}$, $\dot{\theta} = \omega$, $|v|, |\omega| \le 0.5$
  & --; --; Box $0.5 {\times} 0.25$
  & \emph{Constant $( v, \omega )$ on a lattice}\newline $( v, \omega ) \in G_{8}\bigl[ -\tfrac{1}{2}, \tfrac{1}{2} \bigr]^{2}$ \\
\hline
Double Int.\ (3D)
  & $\ddot{p} = a$, $\| a \|_{\infty} \le 2$
  & $\| v \|_{\infty} \le 0.5$; $\| v \|_{\infty} \le 0.17$; Ball $0.1$
  & \emph{Constant $a$ on a lattice of the admissible box}\newline $a \in G_{4}\bigl( J( x ) \bigr)$, with $J( x ) = [ -2, 2 ]^{3} \cap \bigl( [ -\tfrac{1}{2}, \tfrac{1}{2} ]^{3} - v \bigr) / \Delta$ \\
\hline
Triple Int.\ (3D)
  & $\dddot{p} = j$, $\| j \| \le 25$
  & $\| v \| \le 4$, $\| a \| \le 8$; $\| v \|_{\infty} \le 0.3$, $\| a \|_{\infty} \le 0.6$; Ball $0.15$
  & \emph{Admissible jerk profiles to diverse reference velocities}\newline $j \in \bigl\{ j^{1}_{\bar v},\, j^{2}_{\bar v} : \bar v \in V( x ) \bigr\}$ \\
\hline
X500
  & $\dot p = v$, $m \dot v = c\, b_{3}( q ) - m g e_{3}$, $\dot q = \tfrac{1}{2} q \otimes \omega$, $J \dot \omega = M - \omega \times J \omega$, $( c, M ) = B \eta$, $t_{m} \dot \eta = u - \eta$, $u \in [ 0.19, 8.55 ]^{4}$
  & $\| v \| \le 4$, $| \omega_{1,2} | \le 3.84$, $| \omega_{3} | \le 3.49$, $\angle( b_{3}, e_{3} ) \le 45^{\circ}$; $\| v \|_{\infty} \le 0.5$, $\| \omega \|_{\infty} \le 0.5$; Ball $0.1$
  & \emph{Smooth acceleration references on a lattice, tracked by the geometric controller of~\cite{deshpande2026DFT}}\newline $a \in G_{5}[ -8, 8 ]^{2} \times \{ -8, -4, 0, 1.85, 3.7 \}$,\newline $c_{-} \le \| a + g e_{3} \| \le c_{+}$ \\
\hline
Single Int.\ (2D)
  & $\dot{p} = u$, $\| u \| \le 1.5$
  & --; --; Disk $0.36$
  & \emph{Constant $u$ on a polar lattice}\newline $u \in \{ \sigma e_{2 \pi k / 16} : \sigma \in \{ 0.5, 1, 1.5 \},$ $k < 16 \}$ \\
\hline
\end{tabularx}

\smallskip
\setlength{\tabcolsep}{2.5pt}
\begin{tabularx}{\columnwidth}{|l|*{7}{>{\centering\arraybackslash}X|}}
\hline
Search & $\Delta$ (s) & $T$ & $\Delta_{s}$ (s) & $s_{i}$ & $\varepsilon_{g}$ & $S$ (m) & $n_{s}$ \\
\hline
Unicycle & $0.5$ & $10$ & $0.5$ & $0.08$ & $0.5$ & $0.6$ & $5$ \\
Double Int. & $0.5$ & $10$ & $0.5$ & $0.25$ & $0.5$ & $0.25$ & $5$ \\
Triple Int. & $0.5$ & $8$ & $0.5$ & $2$ & $0.75$ & $0.35$ & $10$ \\
X500 & $0.25$ & $6$ & $0.5$ & $2$ & $0.75$ & $0.6$ & $25$ \\
Single Int. & $1$ & $10$ & $1$ & $0.1$ & $1.5$ & $0.75$ & $10$ \\
\hline
\end{tabularx}
\end{table}

\begin{figure*}[t]
\centering
\newcommand{\mpanel}[3]{\sbox0{\includegraphics[height=#2]{#1}}%
  \parbox[b]{\wd0}{\centering\usebox0\\[1pt]{\footnotesize (#3)}}}
\mpanel{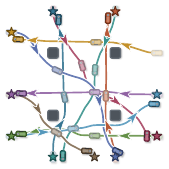}{2.85cm}{a}\hfill
\mpanel{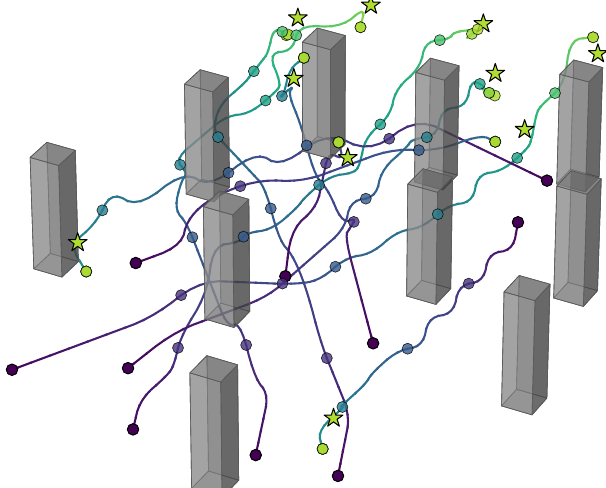}{2.85cm}{b}\hfill
\mpanel{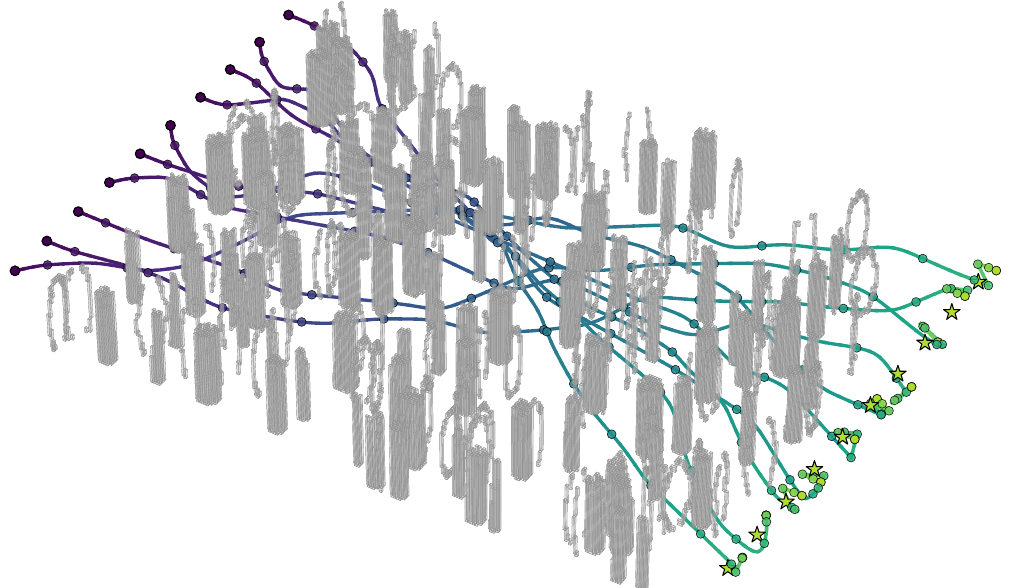}{2.85cm}{c}\hfill
\mpanel{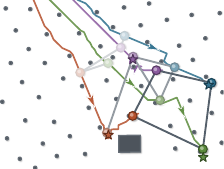}{2.85cm}{d}
\caption{WarpWeaver in simulation. (a) Circle-10, unicycles. (b) Forest-10, 3D double integrators. (c) Ten triple integrators on a forest map of EGO-Swarm. (d) Connectivity maintenance in the forest world of Realm, with the live links at three instants. In (a) and (d), each robot has its own colour and is drawn at three instants, later ones more opaque. In (b) and (c), trajectories and bodies are coloured by time from start (circle) to goal (star).}
\label{fig:sim}
\end{figure*}

\section{Simulation Experiments and Benchmarks}
\label{sec:sim}

We evaluate WarpWeaver in simulation in three settings (\cref{fig:sim}). First, we
compare its cost against state-of-the-art offline kinodynamic
planners on the multi-robot Dynobench suite. Second, we compare it against a
decentralized real-time planner on swarms of ten drones, including a
high-fidelity nonlinear quadrotor model.
We also evaluate WarpWeaver on a task that requires maintaining
connectivity of the communication graph across the team, and
compare its performance against a dedicated planner. Unless stated
otherwise, each robot runs \cref{alg:dcg} in real time on its own
share of a GPU, eight streaming multiprocessors of an RTX 3090,
equivalent in compute to an Orin Nano. We set the pick margin to
$\tau = 20$\,ms and the communication delay bound to
$\delta_{\mathrm{comm}} = 30$\,ms. Each robot runs as its own process and
exchanges its messages with the others through inter-process
queues, with real delays. In two trials of the largest instance on
each platform of \cref{tab:dynobench,tab:ego-fan}, the $358{,}550$
messages arrived after a median delay of $0.7$\,ms, $99.9\%$ of them
within $10$\,ms and all within $26$\,ms, under the delay bound
$\delta_{\mathrm{comm}}$, and the robots committed the same agreement at all $264$
picks.

\subsection{Multi-Robot Kinodynamic Motion Planning}
\label{sec:sim-kino}

\begingroup\predisplaypenalty=0
A major portion of our comparative evaluation focuses on the
canonical problem of navigating multiple robots to their goals
while avoiding collisions with environmental clutter and with each
other. The task can be specified as an instance of \eqref{eq:jocp}, for all
$t \in [ 0, T ]$,
\begin{equation}
\label{eq:sim-kino}
\begin{aligned}
  \min_{u^{1:N}(\cdot)} \
  & \textstyle\sum_{i} \sum_{k=0}^{T-1}
    \varrho_{i}\bigl( x^{i}( k + 1 ) \bigr)
    \\[0.2em]
  \text{s.t.} \
  & \dot{x}^{i} = f^{i}( x^{i}, u^{i} ) ,
    \quad u^{i}(t) \in U_{i} , \\[0.2em]
  & x^{i}(t) \in \X^{i}_{\mathrm{free}} \cap E_{i} , \\[0.2em]
  & x^{i}( T ) \in R_{i} ,
    \qquad x^{i}( 0 ) = x^{i}_{\mathrm{init}} , \\[0.2em]
  & B_{i}\bigl( x^{i}(t) \bigr) \cap B_{j}\bigl( x^{j}(t) \bigr)
    = \emptyset ,
    \qquad \forall\, j \ne i ,
\end{aligned}
\end{equation}\endgroup
where we use the stage cost
\begin{equation}
\label{eq:settle}
  \varrho_{i}( x )
  := A\, \mathbf{1}\bigl[ d_{i}( x ) > \varepsilon_{g} \bigr]
  + \max\bigl\{ d_{i}( x ),\, \varepsilon_{g} \bigr\}
\end{equation}
charges $A = 10^{3}$ for every window that robot $i$ ends outside
the ball of radius $\varepsilon_{g}$ around its goal state
$x^{i}_{\mathrm{goal}}$, plus its distance $d_{i}( x )$ to the goal,
floored at $\varepsilon_{g}$. The free space
$\X^{i}_{\mathrm{free}}$ excludes the static obstacles and the
workspace boundary, and the state envelope $E_{i}$, the terminal rest
set $R_{i}$, and the body $B_{i}$ are listed in
\cref{tab:platforms} for each of the platforms we use.

\begin{remark}
\label{rem:settle}
The indicator in \eqref{eq:settle} makes the stage cost jump by $A$
at the boundary of the goal ball, violating the Lipschitz requirement
\eqref{eq:winagg}. Our results nonetheless transfer with a small patch
as follows. The proofs of \cref{thm:single} and \cref{prop:priced}
use the Lipschitz property only in \eqref{eq:ind-stage}, where a
trajectory that tracks the reference $z^{i}(\cdot)$ to within $\rho$
over a window incurs a stage cost at most $L_{\varrho}\, \rho$ above
that of the reference. If the jets of the reference and of the
tracking trajectory at the end of the window are $\xi$ and $\xi'$,
then $d( \xi', \xi ) \le \rho$ and \eqref{eq:settle} give
\[
  \varrho_{i}( \xi' )
  \le A\, \mathbf{1}\bigl[ d_{i}( \xi ) > \varepsilon_{g}
  - L_{d}\, \rho \bigr]
  + \max\bigl\{ d_{i}( \xi ),\, \varepsilon_{g} \bigr\}
  + L_{d}\, \rho ,
\]
with $L_{d}$ the Lipschitz constant of $d_{i}$, which takes the place of $L_{\varrho}$. The results thus
hold with the cost of the reference evaluated on the goal ball shrunk
by $L_{d}\, \rho$, which raises it by $A$ only for the windows
at the end of which the reference lies inside the ball within
$L_{d}\, \rho$ of its boundary. For the cost
\eqref{eq:settle}, the cost of the reference in
\cref{prop:jcs,prop:priced,thm:certificate} and
\cref{prop:anytime}(ii), and hence the $\rho$-robust optimum, is
accordingly evaluated on the shrunk goal ball.
\end{remark}

The command sets of \cref{tab:platforms} follow the construction
presented in~\cite[Sec.~6.2.2]{deshpande2026DFT}. For the unicycle, the single and double
integrators, and the X500, the commands hold a lattice quantization
of the flat input over the window, through the geometric tracking
controller for the X500, and are dispersive by the arguments
detailed in~\cite[App.~B]{deshpande2026DFT}, with a dispersion that vanishes as the
lattice and the windows are refined. For the triple integrator,
the commands  disperse the velocity reached at the end of
the window, over directions about the current heading that
successive windows extend.

\smallskip

\begin{table}[!htb]
\caption{Multi-robot Dynobench: total travel time (s) summed over agents, median. Quiesced: WarpWeaver with the world paused until $\varepsilon_{q}$-quiescence or for at most $5$\,s (\cref{sec:cutoff}), $\varepsilon_{q}$-quiescent at more than $84\%$ of the iterations of every run, median over five trials.}
\label{tab:dynobench}
\centering
\footnotesize
\fullwidthtab{9}{\begin{tabular}{|l|l|c|c|c|cccc|}
\hline
 & Instance & $N$ & \shortstack{Warp-\\Weaver} & \shortstack{Quie-\\sced} & db-CBS & db-ECBS & db-PIBT & \shortstack{db-LaCAM\\@60\,s} \\
\hline
\multirow{6}{*}{\rotatebox{90}{Unicycle}} & swap2 & $2$ & $\mathbf{11.0}$ & $11.0$ & $12.0$ & $14.4$ & $16.8$ & $15.6$ \\
  & swap4 & $4$ & $\mathbf{24.0}$ & $22.5$ & $27.6$ & $27.1$ & -- & $72.0$ \\
  & atgoal & $2$ & $\mathbf{6.5}$ & $7.0$ & $8.4$ & $8.3$ & $25.2$ & $16.8$ \\
  & circle6 & $6$ & $84.5$ & $83.5$ & $90.0$ & $\mathbf{82.9}$ & $97.2$ & $96.0$ \\
  & circle8 & $8$ & $\mathbf{113.0}$ & $113.0$ & $121.2$ & $117.8$ & $141.6$ & $136.8$ \\
  & circle10 & $10$ & $\mathbf{140.5}$ & $139.5$ & $150.0$ & $156.3$ & $182.4$ & $196.8$ \\
\hline
\multirow{5}{*}{\rotatebox{90}{Double Int.}} & swap2 & $2$ & $\mathbf{13.0}$ & $13.0$ & $14.0$ & $17.0$\,\cpart{3/5} & $15.0$\,\cpart{2/5} & $16.5$ \\
  & forest4 & $4$ & $\mathbf{26.0}$ & $26.0$ & $30.5$ & $34.5$ & -- & -- \\
  & passage6 & $6$ & $\mathbf{41.5}$ & $41.5$ & $48.5$ & $55.5$ & -- & $57.5$ \\
  & swap8 & $8$ & $\mathbf{44.0}$ & $44.0$ & $47.0$ & $54.0$ & -- & $59.5$ \\
  & forest10 & $10$ & $\mathbf{80.0}$ & $80.0$ & $93.0$ & $109.5$ & -- & -- \\
\hline
\end{tabular}}
\end{table}

\begin{table}[!htb]
\caption{Ten drones on the EGO-Swarm forest maps. $F$: total travel time (s) summed over agents, median (min--max) over successful trials.}
\label{tab:ego-fan}
\centering
\footnotesize
\fullwidthtab{4}{\begin{tabular}{|c|l|c|c|}
\hline
Map & Planner & Success & $F$ (min--max) \\
\hline
\multirow{3}{*}{Forest 1} & WarpWeaver (Triple Int.) & \cgood{5/5} & \cgood{91.5} (91.0--94.5) \\
  & WarpWeaver (X500) & \cgood{5/5} & \cmid{97.5} (97.5--101.0) \\
  & EGO-Swarm (Triple Int.) & \cmid{4/5} & \cbad{107.3} (105.0--110.0) \\
\hline
\multirow{3}{*}{Forest 2} & WarpWeaver (Triple Int.) & \cgood{5/5} & \cgood{92.5} (91.0--93.0) \\
  & WarpWeaver (X500) & \cgood{5/5} & \cmid{99.0} (98.0--102.0) \\
  & EGO-Swarm (Triple Int.) & \cmid{4/5} & \cbad{108.6} (107.5--113.1) \\
\hline
\multirow{3}{*}{Forest 3} & WarpWeaver (Triple Int.) & \cgood{5/5} & \cgood{90.5} (89.5--90.5) \\
  & WarpWeaver (X500) & \cgood{5/5} & \cmid{97.0} (96.0--103.5) \\
  & EGO-Swarm (Triple Int.) & \cmid{4/5} & \cbad{108.0} (104.3--115.0) \\
\hline
\end{tabular}}
\end{table}

\medskip\noindent\textbf{Comparative Evaluation and Results:} We evaluate on the two
platforms of the multi-robot Dynobench
suite~\cite{moldagalieva2024dbcbs}, the unicycle and the 3D double
integrator, against the centralized planners of the
discontinuity-bounded family, db-CBS~\cite{moldagalieva2024dbcbs},
db-ECBS~\cite{moldagalieva2025dbecbs}, db-PIBT, and
db-LaCAM~\cite{moldagalieva2025dblacam} (\cref{tab:dynobench}).
Moreover, the triple integrator and the X500 are used in our
evaluation against EGO-Swarm~\cite{zhou2021egoswarm}, with ten
drones on three of its forest maps (\cref{tab:ego-fan}). The tables
report the median over five trials per instance of the total travel
time summed over agents. On Dynobench, the baselines run with their released
primitives and a $60$\,s limit, reporting the best solution within
it for the anytime planners, and the same or relaxed goal tolerance
relative to WarpWeaver. Against EGO-Swarm, both planners receive the full forest
map and the same limits on velocity, acceleration, and jerk
($4$\,m/s, $8$\,m/s$^{2}$, and $25$\,m/s$^{3}$). EGO-Swarm is reported at its best performance after
considerable tuning relative to the default configuration. As collision
avoidance and dynamical constraints are enforced via penalties, EGO-Swarm
sparsely violates the limits on velocity and acceleration, and its three failures are runs in which a drone collides deeply with an obstacle.

On Dynobench (\cref{tab:dynobench}),
WarpWeaver completes every trial and attains the lowest total travel
time on ten of the eleven instances, improving on the best baseline
by $8\%$ at the median and by up to $22\%$, while committing in real
time against planners that solve the full problem offline. Against
EGO-Swarm (\cref{tab:ego-fan}), WarpWeaver succeeds in all fifteen
trials on both platforms, against twelve for EGO-Swarm, and reduces
the total travel time by $15$--$16\%$ with the triple integrator and
by $9$--$10\%$ with the X500.

Although we omit detailed ablations
against sub-optimal coordination, the atgoal task demonstrates the
general inadequacy of such mechanisms, as iterated best response over
the same forward trees fails in either priority order over the robots.

\medskip\noindent\textbf{Scaling Tests:} We evaluate the scaling performance of WarpWeaver as the number of
randomly initialized unicycles in an empty $20 \times 20$\,m
environment is scaled as $N = 10, 20, \dots, 50$, on the instances of
the scalability test of db-LaCAM~\cite[Sec.~5.2]{moldagalieva2025dblacam},
with five trials per $N$. To emulate faithfully without requiring our
host computer to scale linearly with the number of robots, we give
each robot a virtual clock that advances only by the measured wall time
of its own computation on its own share of the GPU and a CPU core. A
message is delivered once the clock of its receiver passes that of its
sender at sending, plus $1$\,ms, and the robot with the earliest clock
always runs next. This preserves the asynchrony of \cref{alg:dcg} while
the simulation runs about $N$ times slower than real time.

\begin{table}[!htb]
\caption{Scaling tests, unicycles in a $20 \times 20$\,m arena, five trials per $N$, WarpWeaver in real time (emulated). Total travel time summed over agents and per robot (s), median (min--max) for WarpWeaver and  the median best solution of db-LaCAM within $300$\,s.}
\label{tab:scale-rt}
\centering
\footnotesize
\fullwidthtab{5}{\begin{tabular}{|c|c|c|c|c|}
\hline
 & \multicolumn{2}{c|}{WarpWeaver} & \multicolumn{2}{c|}{db-LaCAM} \\
$N$ & Total & per robot & Total & per robot \\
\hline
$10$ & $167.5$ {\scriptsize($167.0$--$168.5$)} & $16.8$ & $217.2$ & $21.7$ \\
$20$ & $435.5$ {\scriptsize($432.5$--$438.0$)} & $21.8$ & $522.0$ & $26.1$ \\
$30$ & $649.5$ {\scriptsize($645.0$--$654.0$)} & $21.7$ & $884.4$ & $29.5$ \\
$40$ & $910.5$ {\scriptsize($904.0$--$913.0$)} & $22.8$ & $1400.4$ & $35.0$ \\
$50$ & $1126.5$ {\scriptsize($1107.5$--$1165.5$)} & $22.5$ & $1470.0$ & $29.4$ \\
\hline
\end{tabular}}
\end{table}

\Cref{tab:scale-rt} presents the total travel time of WarpWeaver
against  the best solution of db-LaCAM within its $300$\,s limit, run offline. WarpWeaver completes all $25$ trials without collision, with a
total travel time $17$--$35\%$ lower than that of db-LaCAM at every
$N$. Its travel time per robot rises from $16.8$\,s at $N = 10$ to
about $22$\,s at $N = 20$ and stays flat up to $N = 50$.

\subsection{Connectivity Maintenance}
\label{sec:sim-relay}

While collision avoidance only requires pairwise coordination, we
now evaluate on a task that requires coordination across the whole
team. In connectivity maintenance, the robots must reach their
individual targets while their communication graph stays connected,
with every link requiring both range and an unobstructed line of
sight through an environment mapped online. For four planar
single integrators, the task can be specified as an instance of
\eqref{eq:jocp}, for all $t \in [ 0, T ]$,
\begin{equation}
\label{eq:sim-relay}
\begin{aligned}
  \min_{u^{1:4}(\cdot)} \
  & \textstyle\sum_{i} \sum_{k=0}^{T-1}
    \bigl( 1 - \nu^{i} \bigr)\,
    \varrho_{i}\bigl( p^{i}( k + 1 ) \bigr)
    \\[0.2em]
  \text{s.t.} \
  & \dot{p}^{i} = u^{i} ,
    \quad \| u^{i}(t) \| \le 1.5 , \\[0.2em]
  & p^{i}(t) \in \X_{\mathrm{free}} ,
    \qquad p^{i}( 0 ) = p^{i}_{\mathrm{init}} , \\[0.2em]
  & \| p^{i}(t) - p^{j}(t) \| \ge 2 r ,
    \qquad \forall\, j \ne i , \\[0.2em]
  & \max_{S \in \mathcal{S}} \ \min_{\{ i, j \} \in S}
    \ell\bigl( p^{i}(t), p^{j}(t) \bigr) \ge 0 ,
\end{aligned}
\end{equation}
where we use the stage cost $\varrho_{i}$ of \eqref{eq:settle} with
$d_{i}( p )$ the shortest-path distance from $p$ to the target
$y_{i}$ on the map, and $\nu^{i} = 1$ once the executed trajectory
of robot $i$ has come within $\varepsilon_{g}$ of $y_{i}$, after which
the robot may leave its target to relay for the others. As $\nu^{i}$ is
fixed over each planning horizon, \cref{rem:settle} applies to this
cost as well. The last constraint is the connectivity
predicate of \cref{ex:relay} on the robots, with $\mathcal{S}$ the
spanning trees of the complete graph on the four robots and the
link margin
\[
  \ell( p, p' ) = \min\bigl\{ R_{c} - \| p - p' \|,\
  \operatorname{dist}\bigl( [ p, p' ],\, O \bigr) - \delta_{c} \bigr\} ,
\]
nonnegative when the two robots are within the range
$R_{c} = 30$\,m and their segment clears the blocked set $O$ by
$\delta_{c} = 0.2$\,m, and $r$ is the body
radius of \cref{tab:platforms}. The free space
$\X_{\mathrm{free}}$ is mapped online from the 2D lidar scans of the
robots, fused across the team, and $O$ is its complement, every cell
not yet observed free.

\begin{table}[!htb]
\caption{Connectivity maintenance against Realm, four robots. Time: arrival of the last robot (s), median (min--max); Conn.: fraction of time connected in the true world geometry, worst task; Path: summed path length (m), median.}
\label{tab:realm}
\centering
\footnotesize
\fullwidthtab{6}{\begin{tabular}{|l|l|c|c|c|c|}
\hline
World & Planner & Success & Time (s) & Conn. & Path \\
\hline
\multirow{2}{*}{Clutter} & WarpWeaver & \cgood{5/5} & \cgood{23.6} (19.6--32.9) & $1.00$ & $123$ \\
  & Realm & \cgood{5/5} & \cbad{58.3} (31.0--109.2) & $1.00$ & $193$ \\
\hline
\multirow{2}{*}{Garage} & WarpWeaver & \cgood{5/5} & \cgood{44.8} (31.7--54.9) & $1.00$ & $241$ \\
  & Realm & \cmid{3/5} & \cbad{61.9} (55.3--85.5) & $1.00$ & $226$ \\
\hline
\multirow{2}{*}{Forest} & WarpWeaver & \cgood{5/5} & \cgood{56.8} (42.0--73.4) & $0.98$ & $242$ \\
  & Realm & \cbad{0/5} & -- & $1.00$ & -- \\
\hline
\end{tabular}}
\end{table}

\medskip\noindent\textbf{Comparative Evaluation and Results:} We compare against Realm~\cite{bai2025realm}, a planner dedicated
to line-of-sight connectivity maintenance, at its released
configuration~\cite{bai2026polygonal} and in its simulation
environment, on three worlds with five tasks each (\cref{tab:realm}).
Both planners receive the same lidar scans and issue the same pose
commands, and WarpWeaver runs on the simulation clock, with its compute per
window capped at the wall-clock budget $\Delta - \tau -
\delta_{\mathrm{comm}}$, so that a slowed simulation grants it no
extra computation, while Realm runs its $30$\,Hz controller on the
CPU. A task fails
if the last robot has not arrived within $200$\,s of simulated
time. The table
reports the time at which the last robot reaches its target, the
fraction of time the graph is connected, and the total path length.

WarpWeaver completes all fifteen tasks against eight for Realm,
which solves none in the forest, where its leading and following robots
stall while connected, and brings the last robot to its
target $59\%$ sooner in the clutter world and $28\%$ sooner in the
garage at the median. Both planners keep the team connected
throughout, except for the $2\%$ loss reported in the forest task by
WarpWeaver due to mapping discrepancies between the true world
geometry and the simulated lidar artifacts. Robots that have reached their targets leave them to
relay for the others, a joint maneuver that the formulation
produces without a dedicated mechanism.

\subsection{The Cost of a Real-Time Cutoff}
\label{sec:cutoff}
\label{sec:sim-quiesc}

Since the real-time implementation of WarpWeaver commits the negotiated
solution as soon as the deadline is up, it is of some interest to
understand how far the executed performance of the real-time loop falls
from the anytime guarantee of \cref{prop:anytime}, which holds only at
quiescence. In our scaling tests, for instance, we observe that the
fraction of quiescent iterations in real-time emulations falls from
$70\%$ at $N = 10$ to none at $N = 50$ (\cref{tab:scale-tq}),
as each agent receives more columns, solves a larger restricted master,
and completes fewer pricing steps per window, from $11$ to $5$.

We therefore rerun the scaling tests in a deliberation mode, which
pauses the world at every iteration until the team is quiescent, or for
at most $5$\,s (to keep runtimes reasonable). Since exact quiescence is
reached within this cap at $98\%$ of the iterations at $N = 10$ but at
$2\%$ at $N = 50$, we also measure the time to $\varepsilon_{q}$-quiescence,
$\hat{J}_{i} \ge -\varepsilon_{q}$ for every agent $i$, with
$\varepsilon_{q} = 1$, under which the bound of \cref{thm:certificate}
loosens by $N \varepsilon_{q}$.

\begin{table}[!htb]
\caption{Quiescence in real time and with the world paused (cap $5$\,s, one trial per $N$). Quiesc.: iterations that end quiescent, in real time (median over the trials of \cref{tab:scale-rt}) and within the cap at $\varepsilon_{q} = 0$ and $\varepsilon_{q} = 1$; $t_q$: time to quiescence at $\varepsilon_{q} = 1$ (s), capped iterations counted as $> 5$; $F$: total travel time (s) in real time (median, \cref{tab:scale-rt}) and with up to $5$\,s of negotiation per iteration.}
\label{tab:scale-tq}
\centering
\footnotesize
\fullwidthtab{10}{\begin{tabular}{|c|ccc|ccc|ccc|}
\hline
 & \multicolumn{3}{c|}{Quiesc.} & \multicolumn{3}{c|}{$t_q$, $\varepsilon_{q} = 1$} & \multicolumn{3}{c|}{$F$} \\
$N$ & RT & $\varepsilon_{q}=0$ & $\varepsilon_{q}=1$ & p50 & p90 & p99 & RT & $\varepsilon_{q}=0$ & $\varepsilon_{q}=1$ \\
\hline
$10$ & $70\%$ & $98\%$ & $100\%$ & $0.11$ & $0.31$ & $1.26$ & $167.5$ & $167.5$ & $167.0$ \\
$20$ & $16\%$ & $58\%$ & $99\%$ & $1.32$ & $2.41$ & $>5$ & $435.5$ & $431.5$ & $435.0$ \\
$30$ & $6\%$ & $33\%$ & $93\%$ & $1.48$ & $4.63$ & $>5$ & $649.5$ & $643.0$ & $658.0$ \\
$40$ & $1\%$ & $7\%$ & $60\%$ & $3.56$ & $>5$ & $>5$ & $910.5$ & $899.5$ & $898.5$ \\
$50$ & $0\%$ & $2\%$ & $7\%$ & $>5$ & $>5$ & $>5$ & $1126.5$ & $1072.0$ & $1065.5$ \\
\hline
\end{tabular}}
\end{table}

Regardless, we find that the real-time loop commits to
solutions close to quiesced agreements. On the instances of
\cref{tab:dynobench}, its median total travel time matches the
quiesced one on seven of the eleven instances, and exceeds it by at
most $1.2\%$ on all but swap4 of the unicycle, by $1.5$\,s. In the
scaling tests, it exceeds that of the $\varepsilon_{q}$-quiesced agreements, at
$\varepsilon_{q} = 1$, by at most
$0.3\%$ up to $N = 30$ (\cref{tab:scale-tq}), and, at $N = 40$ and
$50$, where the paused negotiation rarely quiesces within the $5$\,s
cap, remains within $1.3\%$ and $6\%$ of the agreements after
$5$\,s of negotiation. The
real-time loop thus commits close to the solution that a longer
negotiation attains, and chasing quiescence only buys a slightly
improved solution alongside the certificate, as is typical of column
generation~\cite{lubbecke2005column}.

\section{Real-Robot Experiments}
\label{sec:hardware}

\begin{figure}[t]
\centering
\newcommand{\hrow}[4]{\makebox[\columnwidth]{%
  \includegraphics[height=#4]{#1}\hfill
  \includegraphics[height=#4]{#2}}\\[-1pt]
  {\footnotesize (#3)}\\[2pt]}
\hrow{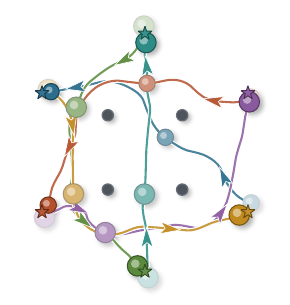}{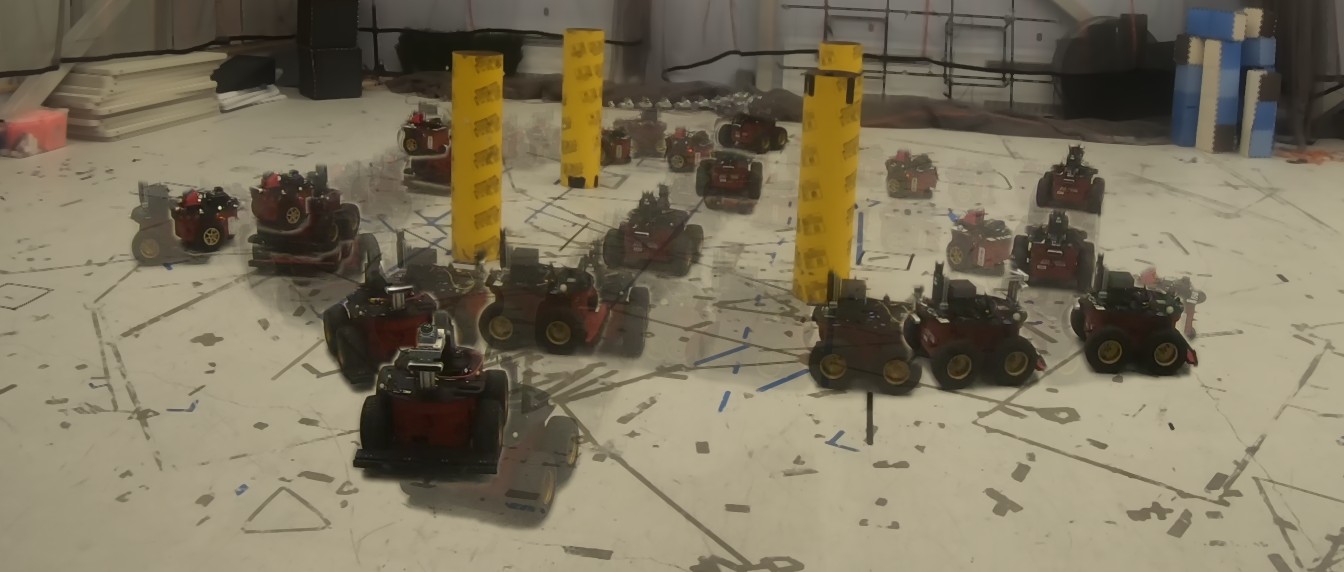}{a}{73pt}
\hrow{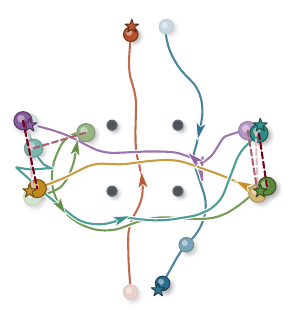}{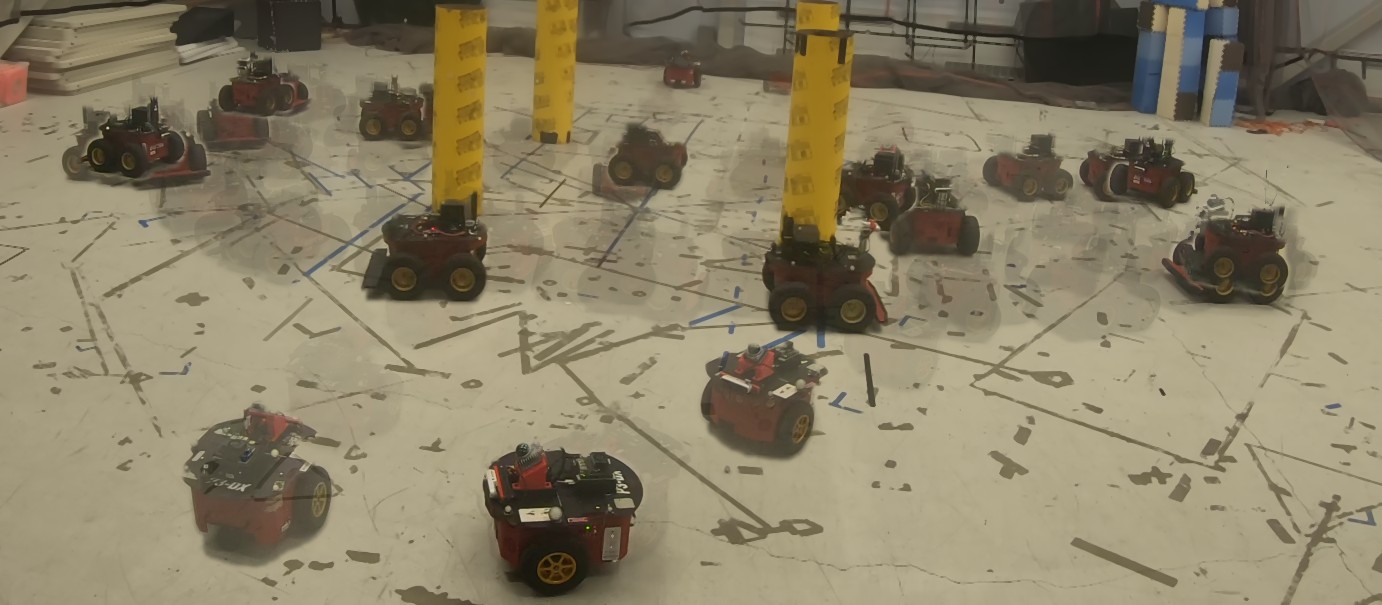}{b}{75pt}
\hrow{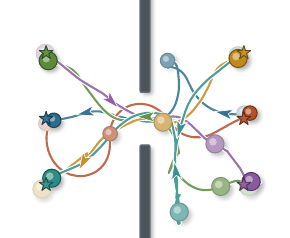}{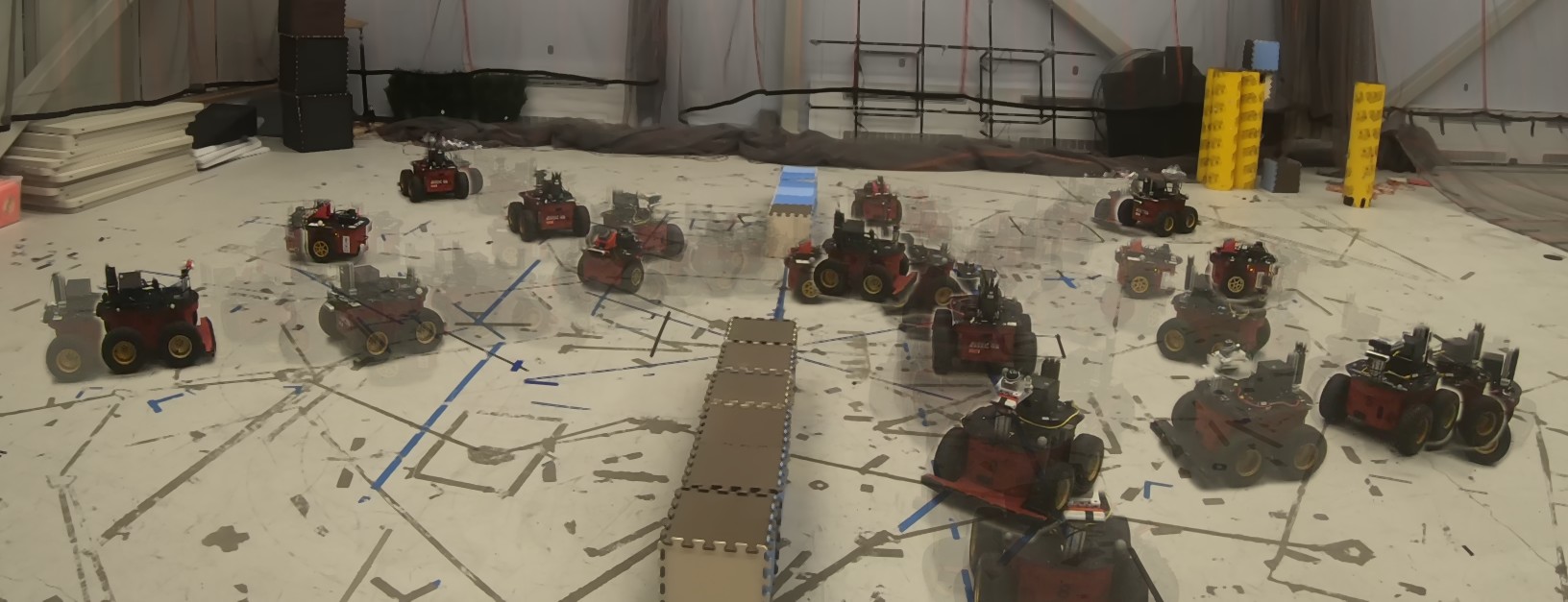}{c}{63pt}
\caption{Six robots, one take per task. Left: executed trajectories from motion capture, each robot in its own colour, with the robots at the start, midway, and at the last arrival, later instants more opaque. Right: the camera frames overlaid, later ones more opaque. (a) Swap around four pillars. (b) Tethered swap, with the tied pairs joined by dashed lines. (c) Doorway swap.}
\label{fig:hw}
\end{figure}

We deploy WarpWeaver on a team of six ground robots, each planning
onboard its own Jetson Orin and exchanging messages with the
others over Wi-Fi through ROS~2, with positions from a Vicon motion
capture system and a hand-entered obstacle map. The robots are of
two classes with unicycle dynamics
$\dot{p} = v e_{\theta}$, $\dot{\theta} = \omega$, two of them with
a clearance disc of radius $0.4$\,m and acceleration limit
$|\dot{v}| \le 1.5$\,m/s$^{2}$, and four others with $0.5$\,m and
$2.0$\,m/s$^{2}$, and $|\dot{\omega}| \le 2$\,rad/s$^{2}$ for both. We
evaluate three tasks (\cref{fig:hw}). In the swap, the six robots
exchange positions across a circle around four pillars of radius
$0.15$\,m. In the tethered swap, two pairs of robots in the same
scene are additionally tied to each other. In the doorway swap, two
groups of three exchange sides through a $1.6$\,m door in a wall.

Each task is an instance of \eqref{eq:sim-kino} on the unicycle
dynamics above, with $| v | \le v_{\max}$, $| \omega | \le
\omega_{\max}$, the disc bodies, and the stage cost
\eqref{eq:settle}. In the tethered swap, each tied pair $( i, j )$
further satisfies, for all $t \in [ 0, T ]$,
\[
  \min\bigl\{ L - \| p^{i}(t) - p^{j}(t) \|,\
  \operatorname{dist}\bigl( [ p^{i}(t), p^{j}(t) ],\, O \bigr)
  \bigr\} \ge 0 ,
\]
the link predicate of \cref{ex:relay} with $R_{c} = L = 2.4$\,m and
$\delta_{c} = 0$, which keeps each pair within range and in line of
sight. Each robot runs \cref{alg:dcg} over target commands $( v, \omega )$
on a $13 \times 13$ lattice of the input box, each reached from the
previous command by a ramp at the acceleration limits, so that the rollouts, the collision checks, and the tracked plans
respect these limits, with cells of diameter $S = s_{i}$
and $n_{s} = 5$ substeps per window, $\Delta = 1$\,s,
$\Delta_{s} = 1$\,s, $\tau = 0.2$\,s, $\delta_{\mathrm{comm}} = 0.2$\,s, and
$\varepsilon_{g} = 0.25$\,m. The swap uses $T = 10$, $s_{i} = 0.25$,
$v_{\max} = 0.5$\,m/s, and $\omega_{\max} = 0.5$\,rad/s, the tethered
swap $T = 8$, $s_{i} = 0.35$, and $v_{\max} = 0.8$\,m/s, and the
doorway swap $T = 6$, $s_{i} = 0.25$, $v_{\max} = 0.8$\,m/s for the four and
$1.0$\,m/s for the two, and $\omega_{\max} = 1.0$\,rad/s. Over $71{,}800$ round
trips of $2$\,KB messages among the six onboard computers during three
runs, the round-trip time had a median of $13$\,ms and a $99$th
percentile of $106$\,ms, and exceeded $0.2$\,s for $0.2\%$ of them,
with a maximum of $635$\,ms. Though some exceeded $\delta_{\mathrm{comm}}$, this
led to no observed inconsistency in the commits. The robots committed the same agreement at
$311$ of the $312$ picks, the exception following the restart of a
robot after a battery swap.

\begin{table}[!htb]
\caption{Real-robot experiments, six robots, three consecutive takes per task. Time: arrival of the last robot (s) and total travel time summed over robots (s), median (min--max); Sep.: smallest distance between two robots minus both clearance radii (m); Tether: largest distance within a tied pair (m), against $L = 2.4$\,m.}
\label{tab:hw}
\centering
\footnotesize
\fullwidthtab{6}{\begin{tabular}{|l|c|c|c|c|c|}
\hline
Task & Success & Last arrival & Total & Sep. & Tether \\
\hline
Swap & $3/3$ & $20.2$ ($20.0$--$20.8$) & $108.1$ & $0.28$ & -- \\
Tethered swap & $3/3$ & $30.4$ ($24.5$--$31.9$) & $125.4$ & $0.28$ & $2.37$ \\
Doorway swap & $3/3$ & $33.6$ ($25.3$--$35.6$) & $162.1$ & $0.12$ & -- \\
\hline
\end{tabular}}
\end{table}

Every robot reached its goal in every take, without contact between
robots or with the obstacles (\cref{tab:hw}). The tied pairs stayed
within range and in line of sight throughout, reaching at most
$2.37$\,m of the $2.4$\,m tether, and the executed trajectories
tracked the committed plans to within $0.03$--$0.07$\,m on average.

\section{Conclusion}
\label{sec:conclusion}

In this paper, we formulated a broad class of multi-robot
coordination problems as joint-space optimal control subject to
constraints over groups of robots, and extended a recent single-robot
framework of dispersive forward tree search to approach its joint
optimum. We showed that suitable constraints on each robot can effectively
replace the inter-robot coupling, so that dispersive forward
trees grown locally by the robots jointly contain a near-optimal
agreement. This structure was pursued by decentralized column
generation, in which the robots search their trees under common
prices and negotiate an agreement certified near-optimal at
quiescence. This development culminated in WarpWeaver, a real-time
protocol that runs the negotiation asynchronously over a receding
horizon onboard each robot. In simulation, WarpWeaver was shown to
match or outperform state-of-the-art centralized offline,
decentralized real-time, and task-specific planners. Its total travel times were
up to $22\%$ lower than those of
centralized offline planners on Dynobench, and up to $16\%$ lower
than those of a decentralized real-time planner, and it nearly
doubled the success rate of a task-specific planner on graph
connectivity. Notably, we find
that WarpWeaver demonstrates effective closure to quiesced
agreements even when real-time deadlines cut the negotiation
short, implying that decentralized column generation is well
suited for real-time multi-robot coordination. In real-robot
experiments, we used a team of six unicycle robots to demonstrate
its real-time implementability for kinodynamic planning around
obstacles and through a doorway, and additionally with pairs of
robots tethered to each other.

A limitation of our approach is that a coordination task is only
as tractable as its integer linear restriction. Constraints that
tie many robots tightly together, such as the maintenance of a rigid
formation, admit few joint solutions among the individual trees and
yield programs whose relaxations are weak, where differential
methods that exploit the local geometry of the constraint may be
better suited. Incorporating such differential information into
dispersive search remains a direction for future research.

\appendices
\section{Proofs}
\label[appendix]{app:proofs}

\subsection{Proof of Proposition~\ref{prop:jcs}}
\label{app:jcs}

For each $i$, let $g^{z, i}$ be the individual predicate of the
form \eqref{eq:winsep} with the windowwise predicates
\begin{equation}
\label{eq:corridor}
  g^{z, i}_{k}( w )
  := L_{\preds} \Bigl( \rho
  - d\bigl( w,\; \jet z^{i}|_{[k,\, k+1]} \bigr) \Bigr) ,
\end{equation}
which keeps agent $i$ within $\rho$ of its reference, and let
$\preds'_{i} := \preds_{i} \cup \{ g^{z, i} \}$. The reference
satisfies $g^{z, i}$ with the margin
$g^{z, i}( \jet z^{i} ) = L_{\preds}\, \rho
\ge L_{\preds}\, \rho_{i}$, and the predicates $\preds_{i}$
with the margin \eqref{eq:refmargin}.
\Cref{thm:single}, applied to the problem \eqref{eq:socp} of
agent $i$ under $\preds'_{i}$, thus gives a sequence
$\mathbf{a}^{i}$ in $\cl_{i}$ that satisfies every
$g \in \preds'_{i}$, with
$J_{i}( \roll_{i}( \mathbf{a}^{i} ) )
\le J_{i}( \jet z^{i} ) + L_{J}\, \rho_{i}$. Since
$\alpha_{\ell i} \ge 0$ and $\rho_{i} \le \rho$,
\eqref{eq:aggregator} gives
\[
\begin{aligned}
  \Phi\bigl( \roll_{1}( \mathbf{a}^{1} ), \dots \bigr)
  &\le \max_{\ell} \sum_{i} \alpha_{\ell i}
  \bigl( J_{i}( \jet z^{i} ) + L_{J}\, \rho \bigr) \\
  &\le \Phi\bigl( \jet z^{1}, \dots \bigr)
  + L_{\Phi}\, \rho .
\end{aligned}
\]
It remains to show that $( \mathbf{a}^{1}, \dots, \mathbf{a}^{N} )$
satisfies the joint predicates. As it satisfies every $g^{z, i}$,
$d( \roll_{i}( \mathbf{a}^{i} )( t ), \jet z^{i}( t ) )
\le \rho$ for every $i$ and $t$. For $g \in \preds_{e}$ of the
form \eqref{eq:instant}, the Lipschitz property of $h_{t}$ and
the margin \eqref{eq:refmargin} then give, for every $t$,
\[
  h_{t}\bigl( ( \roll_{j}( \mathbf{a}^{j} )( t ) )_{j \in e}
  \bigr)
  \ge h_{t}\bigl( ( \jet z^{j}( t ) )_{j \in e} \bigr)
  - L_{\preds}\, \rho
  \ge 0 .
\]
Hence $( \mathbf{a}^{1}, \dots, \mathbf{a}^{N} )
\in \prod_{i} \cl_{i}$ is an agreement.

\subsection{Proof of Proposition~\ref{prop:ilp}}
\label{app:ilp}

We recall that the occupancy rows of \eqref{eq:ilp} forbid the
members of a group $e$ to occupy, during a substep $m$, the cells
of a tuple on which a joint predicate $g \in \preds_{e}$ fails,
\begin{equation}
\tag{\ref{eq:forbidden}}
  F_{m}( g )
  = \Bigl\{ ( \kappa_{j} )_{j \in e} :
  \inf_{t \in I_{m}}
  \inf_{\xi_{j} \in \kappa_{j},\, j \in e}
  h_{t}\bigl( ( \xi_{j} )_{j \in e} \bigr) < 0
  \Bigr\} ,
\end{equation}
and that jet curves $\zeta^{1}, \dots, \zeta^{N}$ obey them, that
is, occupy no forbidden tuple, when
\begin{equation}
\tag{\ref{eq:noforbid}}
  \sum_{j \in e}
  \mathbf{1}\bigl[ \kappa_{j} \in \occ_{j}( \zeta^{j}, m ) \bigr]
  \le | e | - 1
  \quad \forall ( g, m, \kappa ) \in \res .
\end{equation}
We first show that these rows discretize the joint predicates,
soundly and completely up to the discretization margin, in the
following sense.

\begin{lemma}
\label{lem:disc}
\noindent (a) If jet curves $\zeta^{1}, \dots, \zeta^{N}$
of the agents occupy no forbidden tuple
\eqref{eq:noforbid}, then
$g( ( \zeta^{j} )_{j \in e} ) \ge 0$ for all
$g \in \preds_{e}$ and all $e$.

\noindent (b) Let $z^{1}(\cdot), \dots, z^{N}(\cdot)$ be
$M'$-smooth with
$g( ( \jet z^{j} )_{j \in e} )
\ge L_{\preds}\, ( \eta + S + M' h )$ for all
$g \in \preds_{e}$ and all $e$, and let jet curves
$\zeta^{1}, \dots, \zeta^{N}$ satisfy
$d( \zeta^{i}( t ), \jet z^{i}( t ) ) \le \eta$ for all
$i$ and $t$. Then they occupy no forbidden tuple
\eqref{eq:noforbid}.
\end{lemma}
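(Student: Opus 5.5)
For (a), I will argue by contraposition. Suppose some $g \in \preds_{e}$ has $g( ( \zeta^{j} )_{j \in e} ) < 0$. By the running-minimum form \eqref{eq:instant}, there is then a time $t^{*} \in [0, T]$ with $h_{t^{*}}( ( \zeta^{j}( t^{*} ) )_{j \in e} ) < 0$. Pick a substep $m$ with $t^{*} \in I_{m}$. For each $j \in e$, let $\kappa_{j} \in \cells_{j}$ be the cell of the partition containing $\zeta^{j}( t^{*} )$; by \eqref{eq:occ}, $\kappa_{j} \in \occ_{j}( \zeta^{j}, m )$. The point $\xi_{j} := \zeta^{j}( t^{*} ) \in \kappa_{j}$ witnesses
\[
  \inf_{t \in I_{m}} \inf_{\xi \in \kappa}
  h_{t}( \xi ) \le h_{t^{*}}( ( \xi_{j} )_{j \in e} ) < 0 ,
\]
so $\kappa := ( \kappa_{j} )_{j \in e} \in F_{m}( g )$ by \eqref{eq:forbidden}. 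Every member then occupies its cell of this forbidden tuple, and the row $( g, m, \kappa ) \in \res$ sums to $|e| > |e| - 1$, which contradicts \eqref{eq:noforbid}. The only thing to check is that the cells partition the jet space, so each point lies in some cell.

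For (b), I will again argue by contradiction. Suppose a row $( g, m, \kappa )$ with $\kappa \in F_{m}( g )$ has every $\kappa_{j} \in \occ_{j}( \zeta^{j}, m )$. Then there are times $t_{j} \in I_{m}$ with $\zeta^{j}( t_{j} ) \in \kappa_{j}$. By the definition of $F_{m}( g )$ there are also $t' \in I_{m}$ and points $\xi_{j} \in \kappa_{j}$ with $h_{t'}( ( \xi_{j} )_{j \in e} ) < 0$. The plan is to bound $d( \xi_{j}, \jet z^{j}( t' ) )$ by chaining three estimates:
\begin{itemize}
\item $d( \xi_{j}, \zeta^{j}( t_{j} ) ) \le S$, since both points lie in $\kappa_{j}$, whose diameter is at most $S$;
\item $d( \zeta^{j}( t_{j} ), \jet z^{j}( t_{j} ) ) \le \eta$, by hypothesis;
\item $d( \jet z^{j}( t_{j} ), \jet z^{j}( t' ) ) \le M' h$, since $M'$-smoothness bounds every derivative component of the jet by $M'$ (orders $1$ through $r_{l}$), so each jet component is $M'$-Lipschitz in the maximum norm and $|t_{j} - t'| \le h$.
\end{itemize}
Together these give $d( \xi_{j}, \jet z^{j}( t' ) ) \le \eta + S + M' h$ for each $j \in e$. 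Because $h_{t'}$ is $L_{\preds}$-Lipschitz in the maximum of the members' distances, and the margin with \eqref{eq:instant} gives $h_{t'}( ( \jet z^{j}( t' ) ) ) \ge L_{\preds}( \eta + S + M' h )$, we get $h_{t'}( ( \xi_{j} ) ) \ge 0$. This contradicts the choice of $\xi_{j}$.

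The main subtlety is this last bridging step, in particular handling the strict inequality $< 0$ in \eqref{eq:forbidden}. Because $F_{m}$ is defined through an infimum, I first extract near-minimizers $t'$ and $\xi_{j}$ with $h < 0$ and then run the Lipschitz estimate, which is sharp enough because the margin bound is non-strict while the violation is strict. A second point to verify is the jet-derivative claim: the top jet entry $z_{l}^{(r_{l}-1)}$ has derivative $z_{l}^{(r_{l})}$, which is bounded by $M'$ a.e. by \cref{def:classes}, so it is Lipschitz with constant $M'$.
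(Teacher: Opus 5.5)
Your proof is correct and follows the paper's argument essentially step for step. In (a), you place each $\zeta^{j}(t^{*})$ in its cell to exhibit a fully occupied forbidden tuple. In (b), you chain the same three estimates ($S$, $\eta$, and $M' h$) through the triangle inequality and apply the Lipschitz bound on $h_{t}$ against the joint margin. Your extra remarks make explicit what the paper leaves implicit: the partition property, the extraction of witnesses from the infimum in \eqref{eq:forbidden}, and why $M'$-smoothness makes the jet $M'$-Lipschitz in the maximum norm.
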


\begin{proof}
(a) Suppose $h_{t}( ( \zeta^{j}( t ) )_{j \in e} ) < 0$
for some $g \in \preds_{e}$ and $t \in I_{m}$. Let
$\kappa_{j}$ be the cell containing $\zeta^{j}( t )$. Then
$\kappa_{j} \in \occ_{j}( \zeta^{j}, m )$ for every
$j \in e$, and $( \kappa_{j} )_{j \in e} \in F_{m}( g )$
by \eqref{eq:forbidden}, contradicting \eqref{eq:noforbid}
at the row $( g, m, ( \kappa_{j} )_{j \in e} ) \in \res$.

(b) Suppose \eqref{eq:noforbid} fails at some row
$r = ( g, m, \kappa ) \in \res$, so that
$\zeta^{j}( t_{j} ) \in \kappa_{j}$ for some
$t_{j} \in I_{m}$, $j \in e$, and, as
$\kappa \in F_{m}( g )$,
$h_{t}( ( \xi_{j} )_{j \in e} ) < 0$ for some
$t \in I_{m}$ and $\xi_{j} \in \kappa_{j}$.
By the triangle inequality and the $M'$-smoothness of
$z^{j}$,
\[
\begin{aligned}
  d\bigl( \xi_{j}, \jet z^{j}( t ) \bigr)
  &\le S + d\bigl( \zeta^{j}( t_{j} ), \jet z^{j}( t_{j} )
  \bigr) \\
  &\qquad + d\bigl( \jet z^{j}( t_{j} ), \jet z^{j}( t )
  \bigr) \\
  &\le S + \eta + M' h ,
\end{aligned}
\]
so that, by the Lipschitz property of $h_{t}$ in the
maximum metric,
$h_{t}( ( \xi_{j} )_{j \in e} )
\ge h_{t}( ( \jet z^{j}( t ) )_{j \in e} )
- L_{\preds}\, ( \eta + S + M' h ) \ge 0$, a
contradiction.
\end{proof}

\noindent We now prove the two parts of \cref{prop:ilp}.

(i) Let $( \vartheta, \lambda )$ be a feasible solution of
\eqref{eq:ilp}. As $\lambda_{i, c} \in \{ 0, 1 \}$ and
$\sum_{c} \lambda_{i, c} = 1$, each agent $i$ selects exactly one
sequence $c_{i}$, with $\lambda_{i, c_{i}} = 1$ and
$\lambda_{i, c} = 0$ for $c \ne c_{i}$, and the constraints of
\eqref{eq:ilp} reduce to constraints on
$( c_{1}, \dots, c_{N} )$. Those on the individual predicates
give $g( \roll_{i}( c_{i} ) ) \ge 0$ for every
$g \in \preds_{i}$ and $i$. The occupancy rows give
$\sum_{j \in e} \mathbf{1}[ \kappa_{j} \in \occ_{j}( c_{j}, m ) ]
\le | e | - 1$ for every $( g, m, \kappa ) \in \res$, so that
the rollouts $\zeta^{j} := \roll_{j}( c_{j} )$ occupy no
forbidden tuple \eqref{eq:noforbid}, and satisfy the joint
predicates by \cref{lem:disc}(a). Thus
$( c_{1}, \dots, c_{N} )$ is an agreement. Finally, the
constraints on $\vartheta$ reduce to
$\vartheta \ge \sum_{i} \alpha_{\ell i} J_{i}( c_{i} )$,
$\ell = 1, \dots, L$, so that, by \eqref{eq:aggregator},
\[
  \vartheta \ge \max_{\ell} \sum_{i} \alpha_{\ell i}
  J_{i}\bigl( \roll_{i}( c_{i} ) \bigr)
  = \Phi\bigl( \roll_{1}( c_{1} ), \dots,
  \roll_{N}( c_{N} ) \bigr) .
\]

(ii) Let $\lambda_{i, c} := \mathbf{1}[ c = \mathbf{a}^{i} ]$ and
$\vartheta := \Phi( \roll_{1}( \mathbf{a}^{1} ), \dots,
\roll_{N}( \mathbf{a}^{N} ) )$. Then $\lambda$ is binary with
$\sum_{c} \lambda_{i, c} = 1$, and the constraints on the
individual predicates hold, as each $\mathbf{a}^{i}$ satisfies
every $g \in \preds_{i}$. The rollouts
$\zeta^{i} := \roll_{i}( \mathbf{a}^{i} )$ are $2 M$-smooth and
satisfy the joint predicates with the margin
$\gamma_{S} = L_{\preds}\, ( S + 2 M h )$, so that
\cref{lem:disc}(b), with $\jet z^{i} := \zeta^{i}$, $\eta = 0$,
and $M' = 2 M$, shows that they occupy no forbidden tuple. As
the occupancy rows reduce to \eqref{eq:noforbid} for these
rollouts, they hold. Finally, the constraints on $\vartheta$
reduce to
$\sum_{i} \alpha_{\ell i} J_{i}( \roll_{i}( \mathbf{a}^{i} ) )
\le \vartheta$, which holds by \eqref{eq:aggregator}. Hence
$( \vartheta, \lambda )$ is feasible for \eqref{eq:ilp}, whose
optimal value is at most $\vartheta$.

\subsection{Proof of Proposition~\ref{prop:lagrangian}}
\label{app:lagrangian}

(i) We recall from \eqref{eq:reduced-curve}, \eqref{eq:rent-window},
and \eqref{eq:rent} that
\[
\begin{gathered}
  \bar{J}_{i}( \zeta )
  = w_{i}\, J_{i}( \zeta ) + \sum_{k} r_{i, k}( \zeta ) - \pi_{i} ,
  \qquad
  w_{i} = \sum_{\ell} \beta_{\ell}\, \alpha_{\ell i} ,
  \\[0.2em]
  r_{i, k}( \zeta )
  = \sum_{m \,:\, I_{m} \subseteq [k,\, k+1]}\
  \sum_{\kappa \in \occ_{i}( \zeta, m )} \mu_{i, \kappa, m} ,
  \\[0.2em]
  \mu_{i, \kappa, m}
  = \sum_{\substack{r = ( g, m, \kappa' ) \in \res \\
  i \in e,\ \kappa'_{i} = \kappa}} \mu_{r} ,
\end{gathered}
\]
where $e$ is the group of the predicate $g \in \preds_{e}$ of the
row $r$. We first derive an upper bound on the total rent
$\sum_{i} \sum_{k} r_{i, k}( \zeta^{i} )$. As each substep lies
in exactly one window, we have, reordering the summation,
\begin{equation}
\label{eq:rowsum}
\begin{aligned}
  \sum_{i} \sum_{k} r_{i, k}( \zeta^{i} )
  &= \sum_{i} \sum_{m}\
  \sum_{\kappa \in \occ_{i}( \zeta^{i}, m )}
  \mu_{i, \kappa, m} \\
  &= \sum_{r = ( g, m, \kappa ) \in \res} \mu_{r}
  \sum_{i \in e}
  \mathbf{1}\bigl[ \kappa_{i} \in \occ_{i}( \zeta^{i}, m )
  \bigr] .
\end{aligned}
\end{equation}
Since the curves $\zeta^{1}, \dots, \zeta^{N}$ occupy no
forbidden tuple, the inner sum following $\mu_{r}$ is at most
$| e | - 1 = b_{r}$ by \eqref{eq:noforbid}, and given
$\mu \ge 0$,
\[
  \sum_{i} \sum_{k} r_{i, k}( \zeta^{i} )
  \;\le\; b^{\top} \mu .
\]
Next, as $\beta \ge 0$ and $\mathbf{1}^{\top} \beta = 1$, the
weighted costs form a convex combination of the terms
maximized in \eqref{eq:aggregator},
\[
\begin{aligned}
  \sum_{i} w_{i}\, J_{i}( \zeta^{i} )
  &= \sum_{\ell} \beta_{\ell} \sum_{i} \alpha_{\ell i}\,
  J_{i}( \zeta^{i} ) \\
  &\le \max_{\ell} \sum_{i} \alpha_{\ell i}\,
  J_{i}( \zeta^{i} )
  = \Phi( \zeta^{1}, \dots, \zeta^{N} ) .
\end{aligned}
\]
Summing \eqref{eq:reduced-curve} over $i$ and applying the two
upper bounds above,
\[
\begin{aligned}
  \sum_{i} \bar{J}_{i}( \zeta^{i} )
  &= \sum_{i} w_{i}\, J_{i}( \zeta^{i} )
  + \sum_{i} \sum_{k} r_{i, k}( \zeta^{i} )
  - \mathbf{1}^{\top} \pi \\
  &\le \Phi( \zeta^{1}, \dots, \zeta^{N} )
  + b^{\top} \mu - \mathbf{1}^{\top} \pi ,
\end{aligned}
\]
which rearranges to the claim.

(ii) The second inequality,
$\mathrm{LP}( \mathcal{C} ) \le \mathrm{LP}( Q )$, follows as
$\mathrm{LP}( Q )$ is a restriction of $\mathrm{LP}( \mathcal{C} )$
with the additional constraints $\lambda_{( i, c )} = 0$,
$( i, c ) \notin Q$. For the first, let
$( \vartheta, \lambda, s )$ be a feasible solution of
$\mathrm{LP}( \mathcal{C} )$. By \eqref{eq:reduced} and the
constraints $\vartheta \mathbf{1} \ge \mathbf{J} \lambda$,
$\mathbf{E} \lambda = \mathbf{1}$, and $\mathbf{O} \lambda \le b {} + s$
of \eqref{eq:master}, with $\beta \ge 0$,
$\mathbf{1}^{\top} \beta = 1$, $0 \le \mu \le P\, \mathbf{1}$, and $s \ge 0$,
\[
\begin{aligned}
  \sum_{( i, c ) \in \mathcal{C}} \lambda_{( i, c )}\,
  \bar{J}_{i}( c )
  &= \beta^{\top} \mathbf{J} \lambda
  - \pi^{\top} \mathbf{E} \lambda
  + \mu^{\top} \mathbf{O} \lambda \\
  &\le \vartheta {} + P\, \mathbf{1}^{\top} s - \mathbf{1}^{\top} \pi + b^{\top} \mu ,
\end{aligned}
\]
Moreover, by \eqref{eq:blocks}, the constraints
$\mathbf{G} \lambda \ge 0$ read
$g( \roll_{i}( c ) )\, \lambda_{( i, c )} \ge 0$ and force
$\lambda_{( i, c )} = 0$ for every sequence $c$ of agent $i$
that violates one of its individual predicates. With
$\mathbf{E} \lambda = \mathbf{1}$, the weights
$\lambda_{( i, c )}$ of each agent thus form a probability
distribution over its locally feasible sequences, whence, by \eqref{eq:pricing},
\[
  \sum_{( i, c ) \in \mathcal{C}} \lambda_{( i, c )}\,
  \bar{J}_{i}( c )
  \;\ge\; \sum_{i} \bar{J}_{i}^{*}
  = \sum_{i} \bar{J}_{i}( c_{i} ) .
\]
The two bounds give
$\mathbf{1}^{\top} \pi - b^{\top} \mu
+ \sum_{i} \bar{J}_{i}( c_{i} ) \le \vartheta {} + P\, \mathbf{1}^{\top} s$ for every feasible
solution of $\mathrm{LP}( \mathcal{C} )$, and hence the first
inequality.

(iii) If the dual solution is optimal, then
$\mathbf{1}^{\top} \pi - b^{\top} \mu = \mathrm{DP}( Q )
= \mathrm{LP}( Q )$ by strong duality. With
$\bar{J}_{i}( c_{i} ) = \bar{J}_{i}^{*} = 0$ for all $i$, the
inequalities of (ii) then read
$\mathrm{LP}( Q ) \le \mathrm{LP}( \mathcal{C} )
\le \mathrm{LP}( Q )$.

\subsection{Proof of Proposition~\ref{prop:priced}}
\label{app:priced}

Recall that, in \cref{thm:single}, the cost
$J_{i}( x^{i}(\cdot) )$ of a trajectory of agent $i$ is the sum of
the $L_{\varrho}$-Lipschitz stage costs of its window trajectories,
$J_{i}( x^{i}(\cdot) ) = \sum_{k} \varrho^{i}_{k}( x^{i}|_{[k,\, k+1]} )$
by \eqref{eq:winagg}. In \cref{prop:priced}, we deal with a
single-agent problem of a similar kind, with the priced running
cost \eqref{eq:priced-running},
\[
  \Sigma^{\mu}_{i}( \mathbf{a} )
  = w_{i}\, \Sigma_{i}( \mathbf{a} )
  + \sum_{j=1}^{k} r_{i, j-1}\bigl( \roll_{i}( \mathbf{a} ) \bigr)
  - \pi_{i} ,
\]
violating the structure imposed by \cref{thm:single} due to the
rent $r_{i, k}$ charged for each window, which is not Lipschitz
in the window trajectory. The proof of \cref{prop:priced} is
therefore an extension of the proof of \cref{thm:single} to
handle this deviation in cost structure. We first briefly recall
the proof of \cref{thm:single} detailed in
\cite{deshpande2026DFT}.

Let
$\varrho^{*}_{k} := \varrho^{i}_{k}( \jet z^{i}|_{[k,\, k+1]} )$
denote the stage costs of the reference $z^{i}(\cdot)$ in the
proposition statement. The proof shows, by induction over the depth $k$, that the tree
retains a sequence $\mathbf{a}_{k} \in \A_{i}^{k}$ with
$d( \roll_{i}( \mathbf{a}_{k} )( k ), \jet z^{i}( k ) )
\le \delta_{\A_{i}} + s_{i}$, whose running cost satisfies
\begin{equation}
\label{eq:ind-cost}
  \Sigma_{i}( \mathbf{a}_{k} )
  \;\le\; \sum_{j < k} \varrho^{*}_{j}
  + k\, L_{\varrho}\, \rho_{i} .
\end{equation}
In the inductive step from the depth $k$ to $k + 1$, the erosion of the
reference control and the
dispersion of $\A_{i}$ furnish a command $a \in \A_{i}$ whose
window tracks the reference. Specifically, from the end state of
$\mathbf{a}_{k}$, within $\delta_{\A_{i}} + s_{i}$ of the reference,
the margin correction lemma~\cite[Lemma~5.9]{deshpande2026DFT} gives an
admissible corrector control that rejoins the reference at $k + 1$ and
stays within $C_{B,i} ( \delta_{\A_{i}} + s_{i} )$ of it over the
window, and the dispersion of $\A_{i}$ gives a command within
$\delta_{\A_{i}}$ of this corrector, so that
\begin{equation}
\label{eq:ind-track}
  d\bigl( \roll_{i}( ( \mathbf{a}_{k}, a ) )|_{[k,\, k+1]} ,\,
  \jet z^{i}|_{[k,\, k+1]} \bigr)
  \;\le\; \rho_{i} ,
\end{equation}
with
$d( \roll_{i}( ( \mathbf{a}_{k}, a ) )( k + 1 ),
\jet z^{i}( k + 1 ) ) \le \delta_{\A_{i}}$. By the margin and the smoothness of the
reference, the window of the extension $( \mathbf{a}_{k}, a )$
satisfies $g_{k} \ge 0$ for every $g \in \preds_{i}$ and is
$2 M_{i}$-smooth, as \cref{def:mincost}(iii) requires. By
\eqref{eq:ind-track} and the $L_{\varrho}$-Lipschitz property
\eqref{eq:winagg} of the stage costs, its stage cost satisfies
\begin{equation}
\label{eq:ind-stage}
  \varrho^{i}_{k}\bigl( \roll_{i}( ( \mathbf{a}_{k}, a ) )|_{[k,\, k+1]}
  \bigr)
  \;\le\; \varrho^{*}_{k} + L_{\varrho}\, \rho_{i} .
\end{equation}
The pruning procedure of \cref{def:mincost}(iii) then retains, in the cell
of $\roll_{i}( ( \mathbf{a}_{k}, a ) )( k + 1 )$, a sequence
$\mathbf{a}_{k+1}$ of no greater running cost, which thus
satisfies \eqref{eq:ind-cost} at the depth $k + 1$ and
$d( \roll_{i}( \mathbf{a}_{k+1} )( k + 1 ), \jet z^{i}( k + 1 ) )
\le \delta_{\A_{i}} + s_{i}$.

Controlling the stage cost of the extension relative to that of
the reference, as in \eqref{eq:ind-stage}, relies on the Lipschitz
property, which no longer holds for the rent $r_{i, k}$. We
instead contain the rent of the extension over the window
$[k, k+1]$ through the tracking \eqref{eq:ind-track}.

\noindent For every substep
$I_{m} \subseteq [k, k+1]$, \eqref{eq:ind-track} and
$\rho_{i} \le \rho$ give
$d( \roll_{i}( ( \mathbf{a}_{k}, a ) )( t ), \jet z^{i}( t ) )
\le \rho$ at every $t \in I_{m}$, so that every cell that the
extension occupies at the substep $m$ lies in the $\rho$-inflated
occupancy $\occ^{\rho}_{i}( \jet z^{i}, m )$ of the reference. As
$\mu \ge 0$,
\begin{equation}
\label{eq:ind-rent}
  r_{i, k}\bigl( \roll_{i}( ( \mathbf{a}_{k}, a ) ) \bigr)
  \;\le\; \sum_{m \,:\, I_{m} \subseteq [k,\, k+1]}\
  \sum_{\kappa \in \occ^{\rho}_{i}( \jet z^{i}, m )}
  \mu_{i, \kappa, m} .
\end{equation}

The proof of \cref{prop:priced} now follows as described below.
The arguments surrounding the tracking \eqref{eq:ind-track}, the
predicates $g_{k} \ge 0$ for $g \in \preds_{i}$, and the
$2 M_{i}$-smoothness of the windows carry over from the proof of
\cref{thm:single}. It remains to establish the extended
counterpart of \eqref{eq:ind-cost},
\begin{equation}
\label{eq:ind-priced}
\begin{aligned}
  \Sigma^{\mu}_{i}( \mathbf{a}_{k} )
  \;\le\;\; & w_{i} \Bigl( \sum_{j < k} \varrho^{*}_{j}
  + k\, L_{\varrho}\, \rho \Bigr) \\
  & + \sum_{m \,:\, I_{m} \subseteq [0,\, k]}\
  \sum_{\kappa \in \occ^{\rho}_{i}( \jet z^{i}, m )}
  \mu_{i, \kappa, m} - \pi_{i} .
\end{aligned}
\end{equation}
At $k = 0$, both sides of \eqref{eq:ind-priced} equal $-\pi_{i}$.
Now suppose that \eqref{eq:ind-priced} holds at the depth $k$. As
the extension $( \mathbf{a}_{k}, a )$ shares its first $k$ windows
with $\mathbf{a}_{k}$, \eqref{eq:priced-running} gives
\begin{equation}
\label{eq:ind-split}
\begin{aligned}
  \Sigma^{\mu}_{i}( ( \mathbf{a}_{k}, a ) )
  &= \Sigma^{\mu}_{i}( \mathbf{a}_{k} ) \\
  &\quad + w_{i}\, \varrho^{i}_{k}\bigl(
  \roll_{i}( ( \mathbf{a}_{k}, a ) )|_{[k,\, k+1]} \bigr) \\
  &\quad + r_{i, k}\bigl( \roll_{i}( ( \mathbf{a}_{k}, a ) ) \bigr) .
\end{aligned}
\end{equation}
As $w_{i} \ge 0$ and $\rho_{i} \le \rho$, \eqref{eq:ind-stage}
gives
\[
  w_{i}\, \varrho^{i}_{k}\bigl(
  \roll_{i}( ( \mathbf{a}_{k}, a ) )|_{[k,\, k+1]} \bigr)
  \;\le\; w_{i} \bigl( \varrho^{*}_{k} + L_{\varrho}\, \rho \bigr) .
\]
Inserting this inequality, \eqref{eq:ind-rent}, and
\eqref{eq:ind-priced} at the depth $k$ into \eqref{eq:ind-split}
yields \eqref{eq:ind-priced} at the depth $k + 1$ for the
extension. The pruning procedure
retains a sequence $\mathbf{a}_{k+1}$ with
$\Sigma^{\mu}_{i}( \mathbf{a}_{k+1} )
\le \Sigma^{\mu}_{i}( ( \mathbf{a}_{k}, a ) )$, which therefore
satisfies \eqref{eq:ind-priced} at the depth $k + 1$.

At the depth $T$, we have
$\sum_{j < T} \varrho^{*}_{j} = J_{i}( \jet z^{i} )$ by
\eqref{eq:winagg}, and $T L_{\varrho} = L_{J}$. For the retained
sequence $c_{i} := \mathbf{a}_{T}$, \eqref{eq:ind-priced} then
gives
\[
\begin{aligned}
  & \bar{J}_{i}( c_{i} ) = \Sigma^{\mu}_{i}( c_{i} ) \\
  &\quad \le
  w_{i} \bigl( J_{i}\bigl( \jet z^{i} \bigr) + L_{J}\, \rho \bigr)
  \\
  &\qquad + \sum_{m}\
  \sum_{\kappa \in \occ^{\rho}_{i}( \jet z^{i}, m )}
  \mu_{i, \kappa, m} - \pi_{i} ,
\end{aligned}
\]
where the equality follows from \eqref{eq:priced-running} and
\eqref{eq:reduced-rent}, as $\Sigma_{i}( c_{i} ) = J_{i}( c_{i} )$.
Finally, $c_{i}$ is feasible for the single-agent problem
\eqref{eq:socp}, as each of its windows satisfies $g_{k} \ge 0$
for every $g \in \preds_{i}$. As the tree retains $c_{i}$,
$\hat{J}_{i} \le \bar{J}_{i}( c_{i} )$. The retained depth-$T$
sequences are likewise feasible for the pricing problem
\eqref{eq:pricing}, so $\bar{J}_{i}^{*} \le \hat{J}_{i}$.

\subsection{Proof of Theorem~\ref{thm:certificate}}
\label{app:certificate}

(i) The proof follows that of \cref{prop:lagrangian}(i), with the
bound of \cref{prop:priced} in place of the reduced cost
\eqref{eq:reduced-curve} of a jet curve. Each $z^{i}(\cdot)$ is a
feasible solution of the single-agent problem \eqref{eq:socp} with
the margin $g( \jet z^{i} ) \ge L_{\preds}\, \rho$ for every
$g \in \preds_{i}$, as \cref{prop:priced} requires. \Cref{prop:priced} thus
gives, for every $i$,
\begin{equation}
\label{eq:cert-agent}
\begin{aligned}
  \hat{J}_{i}
  \;\le\;\; & w_{i} \bigl( J_{i}\bigl( \jet z^{i} \bigr)
  + L_{J}\, \rho \bigr) \\
  & + \sum_{m}\
  \sum_{\kappa \in \occ^{\rho}_{i}( \jet z^{i}, m )}
  \mu_{i, \kappa, m}
  \; - \; \pi_{i} .
\end{aligned}
\end{equation}
We show in two steps below that the right side of
\eqref{eq:cert-agent} summed over $i$ is at most
$\Phi( \jet z^{1}, \dots, \jet z^{N} ) + L_{\Phi}\, \rho
+ b^{\top} \mu - \mathbf{1}^{\top} \pi$.

\noindent \emph{Step one:} We show that no forbidden tuple lies in the
$\rho$-inflated occupancies of the reference,
\begin{equation}
\label{eq:cert-noforbid}
  \sum_{j \in e}
  \mathbf{1}\bigl[ \kappa_{j} \in
  \occ^{\rho}_{j}( \jet z^{j}, m ) \bigr]
  \le | e | - 1
  \qquad \forall\, ( g, m, \kappa ) \in \res .
\end{equation}
In summary, every point of a cell in
$\occ^{\rho}_{j}( \jet z^{j}, m )$ lies within $\rho + S + M h$ of
$\jet z^{j}( t )$ for all $t \in I_{m}$, and, by the joint
margins, $h_{t}$ is nonnegative on tuples of such points. A
forbidden tuple contains points at which $h_{t} < 0$ for some
$t \in I_{m}$, and thus cannot lie in the inflated occupancies.
To formalize, suppose to the contrary that
$\kappa_{j} \in \occ^{\rho}_{j}( \jet z^{j}, m )$ for every
$j \in e$ at some row $r = ( g, m, \kappa ) \in \res$, so that
$d( \xi_{j}, \jet z^{j}( t_{j} ) ) \le \rho$ for some
$\xi_{j} \in \kappa_{j}$ and $t_{j} \in I_{m}$. As
$\kappa \in F_{m}( g )$, also
$h_{t}( ( \xi'_{j} )_{j \in e} ) < 0$ for some $t \in I_{m}$ and
$\xi'_{j} \in \kappa_{j}$. By the triangle inequality and the
$M$-smoothness of $z^{j}$, for each $j \in e$,
\[
\begin{aligned}
  d\bigl( \xi'_{j}, \jet z^{j}( t ) \bigr)
  &\le d( \xi'_{j}, \xi_{j} )
  + d\bigl( \xi_{j}, \jet z^{j}( t_{j} ) \bigr) \\
  &\qquad + d\bigl( \jet z^{j}( t_{j} ), \jet z^{j}( t ) \bigr) \\
  &\le S + \rho + M h ,
\end{aligned}
\]
so that, by the Lipschitz property of $h_{t}$ in the maximum
metric and the joint margins,
\[
  h_{t}\bigl( ( \xi'_{j} )_{j \in e} \bigr)
  \ge h_{t}\bigl( ( \jet z^{j}( t ) )_{j \in e} \bigr)
  - L_{\preds}\, ( \rho + S + M h ) \ge 0 ,
\]
a contradiction. Reordering the summation as in
\eqref{eq:rowsum}, we then have, by \eqref{eq:cert-noforbid},
$\mu \ge 0$, and $b_{r} = | e | - 1$,
\begin{equation}
\label{eq:cert-rent}
\begin{aligned}
  & \sum_{i} \sum_{m}\
  \sum_{\kappa \in \occ^{\rho}_{i}( \jet z^{i}, m )}
  \mu_{i, \kappa, m} \\
  &\quad = \sum_{r = ( g, m, \kappa ) \in \res} \mu_{r}
  \sum_{j \in e}
  \mathbf{1}\bigl[ \kappa_{j} \in
  \occ^{\rho}_{j}( \jet z^{j}, m ) \bigr] \\
  &\quad \le b^{\top} \mu .
\end{aligned}
\end{equation}

\noindent \emph{Step two:} Recall that $w_{i} = \sum_{\ell} \beta_{\ell}\,
\alpha_{\ell i}$, where $\beta \ge 0$ and
$\mathbf{1}^{\top} \beta = 1$. The weighted sums below thus form
convex combinations of the terms maximized in \eqref{eq:aggregator}
and in the definition $L_{\Phi} := \max_{\ell} \sum_{i}
\alpha_{\ell i}\, L_{J}$ of \cref{prop:jcs},
\begin{equation}
\label{eq:cert-cost}
\begin{aligned}
  \sum_{i} w_{i}\, J_{i}\bigl( \jet z^{i} \bigr)
  &= \sum_{\ell} \beta_{\ell} \sum_{i} \alpha_{\ell i}\,
  J_{i}\bigl( \jet z^{i} \bigr) \\
  &\le \max_{\ell} \sum_{i} \alpha_{\ell i}\,
  J_{i}\bigl( \jet z^{i} \bigr) \\
  &= \Phi\bigl( \jet z^{1}, \dots, \jet z^{N} \bigr) , \\
  \sum_{i} w_{i}\, L_{J}
  &= \sum_{\ell} \beta_{\ell} \sum_{i} \alpha_{\ell i}\, L_{J}
  \;\le\; L_{\Phi} .
\end{aligned}
\end{equation}
Summing \eqref{eq:cert-agent} over $i$ and applying
\eqref{eq:cert-rent} and \eqref{eq:cert-cost},
\[
  \sum_{i} \hat{J}_{i}
  \le \Phi\bigl( \jet z^{1}, \dots, \jet z^{N} \bigr)
  + L_{\Phi}\, \rho + b^{\top} \mu - \mathbf{1}^{\top} \pi ,
\]
which rearranges to (i).

(ii) If the dual solution is optimal, then
$\mathbf{1}^{\top} \pi - b^{\top} \mu = \mathrm{DP}( Q )
= \mathrm{LP}( Q )$ by strong duality. The first claim of (ii)
then follows from (i), and the second from $\hat{J}_{i} \ge 0$
for all $i$.

\subsection{Proof of Proposition~\ref{prop:anytime}}
\label{app:anytime}

Both parts follow iteration $n$ of \cref{alg:dcg} line by line.

\smallskip
\noindent (i) \emph{Completeness and consistency.}
\begin{tracelist}
\item[Initialization (line 3):] Every agent initializes its pool with the
  initial agreement, which every agent endorses by (c). The
  initial agreement is thus globally endorsed at time $t_{n}$.
\item[Negotiation (lines 4--12):] An agent broadcasts columns, publishes
  solutions, and endorses solutions only before time
  $t_{n} - \delta_{\mathrm{comm}}$. By (a), every such message reaches every
  agent before time $t_{n}$.
\item[Reception (line 13):] At time $t_{n}$, all agents thus hold the
  same columns and endorsements, and hence the same set of
  globally endorsed solutions. By the initialization, this set contains the
  initial agreement and is nonempty.
\item[Pick (line 14):] All agents thus pick the same best
  globally endorsed solution $\mathbf{c}^{n}$, with ties broken
  by the common rule.
\end{tracelist}
Every globally endorsed solution is either the initial agreement
or was published at line 11 of \cref{alg:dcg} as a
solution of
$\mathrm{JCP}( Q' )$ for the pool $Q'$ of some agent. It thus
satisfies all predicates, and $\mathbf{c}^{n}$ is an agreement.

\smallskip
\noindent (ii) \emph{Certificate at quiescence.}
\begin{tracelist}
\item[Pricing (lines 9--10):] At quiescence, $\hat{J}_{i} \ge 0$
  for all $i$, so that no agent broadcasts a column. The pool $Q$
  is thus the final pool of iteration $n$.
\item[Endorsement (lines 11--12):] Before time $t_{n} - \delta_{\mathrm{comm}}$,
  each agent publishes a solution of $\mathrm{JCP}( Q )$
  at line 11, which every agent endorses at line 12, as its columns
  lie in $Q$. As in (i), it is globally endorsed at
  time $t_{n}$.
\item[Pick (line 14):] Every globally endorsed solution is
  feasible for $\mathrm{JCP}( Q )$, and thus
  costs at least $\mathrm{JCP}( Q )$. 
\end{tracelist}

As $\mathrm{DP}( Q ) = \mathrm{LP}( Q )$ at the optimal dual
solution, subtracting the second bound of \cref{thm:certificate}(ii)
from $\Phi( \mathbf{c}^{n} )$ establishes the claim.

\section*{Acknowledgment}
Claude (Anthropic) was used to write the software for the
experiments and to render the figures and the supplementary video from the
experimental data. The text and proofs were
dictated by the authors, with dense involvement of the same model for
reasoning, editing, and typesetting.

\bibliographystyle{IEEEtran}
\bibliography{refs}

\end{document}